\newtheorem{theorem}{Theorem}
\newtheorem{lemma}{Lemma}
\newtheorem{proposition}{Proposition}
\newtheorem{problem}{Problem}
\newtheorem{assumption}{Assumption}
\newcommand{\paren}[1]{\ensuremath{\left( #1\right)}}
\newcommand{\clint}[1]{\ensuremath{\left[ #1\right]}}
\newcommand{\set}[1]{\ensuremath{\left\{ #1\right\}}}
\newcommand{\matr}[1]{\ensuremath{\clint{\begin{array} #1 \end{array}}}}
\newcommand{\norm}[1]{\ensuremath{\left\| #1\right\|}}
\newcommand{\snorm}[1]{\ensuremath{\| #1\|}}
\newcommand{\abs}[1]{\ensuremath{\left| #1\right|}}
\newcommand{\K}{\ensuremath{\mathcal{K}}}
\newcommand{\N}{\ensuremath{\mathcal{N}}}
\newcommand{\R}{\ensuremath{\mathbb{R}}}
\renewcommand{\P}{\ensuremath{\mathbb{P}}}
\renewcommand{\S}{\ensuremath{\mathbb{S}}}
\newcommand{\Reg}{\ensuremath{\mathcal{R}}}
\newcommand{\F}{\ensuremath{\mathcal{F}}}
\newcommand{\E}{\ensuremath{\mathbb{E}}}
\renewcommand{\O}{\ensuremath{\mathcal{O}}}
\newcommand{\C}{\ensuremath{\mathcal{C}}}
\newcommand{\T}{\ensuremath{\mathcal{T}}}
\newcommand{\LL}{\ensuremath{\mathcal{L}}}
\DeclareMathOperator{\Tr}{\mathrm{tr}}
\DeclarePairedDelimiter{\diagfences}{(}{)}
\newcommand{\diag}{\operatorname{diag}\diagfences}
\title{Online Learning of the Kalman Filter with Logarithmic Regret}
\author{Anastasios~Tsiamis $^\star$ and George~J.~Pappas % <-this % stops a space
	\thanks{The authors  are   with   the   Department   of   Electrical   and   Systems  Engineering,  University  of  Pennsylvania,  Philadelphia,  PA  19104.
		Emails: \{atsiamis,pappasg\}@seas.upenn.edu}
}
\begin{document}
\maketitle

\begin{abstract}
In this paper, we consider the problem of predicting observations generated online by an unknown, partially observed linear system, which is driven by stochastic noise. For such systems the optimal predictor in the mean square sense is the celebrated Kalman filter, which can be explicitly computed when the system model is known. When the system model is unknown, we have to learn how to predict observations online based on finite data, suffering possibly a non-zero regret with respect to the Kalman filter's prediction.
We show that it is possible to achieve a regret of the order of $\mathrm{poly}\log(N)$ with high probability, where $N$ is the number of observations collected. 
Our work is the first to provide logarithmic regret guarantees for the widely used Kalman filter. 
This is achieved using an online least-squares  algorithm, which exploits the approximately linear relation between future observations and past observations. The regret analysis is based on the stability properties of the Kalman filter, recent statistical tools for finite sample analysis of system identification, and classical results for the analysis of least-squares algorithms for time series. 
Our regret analysis can also be applied for state prediction of the hidden state, in the case of unknown noise statistics but known state-space basis.
A fundamental technical contribution is that our bounds hold even for the class of non-explosive systems, which includes the class of marginally stable systems, which was an open problem for the case of online prediction under stochastic noise.
\end{abstract}

\section{Introduction}\label{Section_Introduction}
The celebrated Kalman filter has been a fundamental approach for estimation and prediction of time-series data, with diverse applications ranging from control systems and robotics~\citep{bertsekas2017dynamic, durrant2006simultaneous} to computer vision~\citep{coskun2017long} and economics~\citep{harvey1990forecasting}. 
Given a known system model with known noise statistics, the Kalman filter predicts future observations of a \emph{partially observable} dynamical process by filtering past observations. When the underlying process is linear and the noise is Gaussian, the Kalman filter is optimal in the sense that it minimizes the mean square prediction error. Since Kalman's seminal paper~\citep{kalman1960new}, the stability and statistical properties of the Kalman filter have been well studied when the system model is known.  

Learning to predict unknown partially observed  systems is a significantly more challenging problem. Even in the case of linear systems, learning directly the model parameters of the system results in nonlinear, non-convex problems~\citep{yu2018identification}. 
Adaptive filtering algorithms address the problem of making observation predictions when the system model or the noise statistics are unknown or changing~\citep{ljung1978convergence,moore1979multivariable,lai1991recursive,ding2006performance}. 
These adaptive filtering approaches are usually based on variations of extended least squares. 
Despite the importance of adaptive filtering in applications such as GPS, the regret of online filtering algorithms has not been considered in this classical literature.  

In this paper, we consider the problem of predicting observations generated by an {\em unknown}, partially observable linear dynamical system in state-space form. We assume that the system dynamics and observation map are corrupted by Gaussian noise. 
Our goal is to find an online prediction algorithm that has provable regret bounds with respect to the Kalman filter that has access to the full system model.
Our technical contributions are:

{\bf System theoretic regret:} We define a notion of regret that has a natural, system theoretic interpretation.  The prediction error of an online prediction algorithm is compared against the prediction error of the Kalman filter that has access to the exact model, which is allowed to be arbitrary. Previous regret definitions~\citep{kozdoba2019line} required the model to lie in a finite set. 

{\bf Logarithmic regret for the Kalman filter:} We present the first online prediction algorithm with provable logarithmic regret upper bounds for the classical Kalman filter.  In fact, we prove that with high probability the regret of our algoritm is of the order of $\tilde{O}(1)$, where $\tilde{O}$ hides $\mathrm{poly}\log N$ terms, where $N$ is the number of observations collected. 
Our algorithm has polynomial time complexity, requires linear memory, and is based on subspace system identification techniques~\citep{qin2006overview}. Instead of optimizing over the state-space parameters, which is a non-convex problem, we  convexify the problem by establishing an approximate regression between the next observation and past observations.  Our analysis is based on the stability properties of the Kalman filter, tools for self-normalized martingales and matrices, and additional results for persistency of excitation developed in this paper.

{\bf Logarithmic regret for non-explosive systems:} Our regret guarantees hold for the class of non-explosive systems, which includes marginally stable linear systems as well as as systems where the  state can grow at a polynomial rate. This settles an open question and conludes that online prediction performance does not depend on the system stability gap\footnote{$1/(1-\rho)$, where $\rho$ is the spectral radius of the system~\citep{simchowitz2018learning}} of the system. Although it was recently shown that the stability gap does not affect system identification~\citep{simchowitz2019semi}, whether the stability gap affects online prediction under stochastic noise was an open problem.

{\bf Regret analysis for other predictors:} 
Our approach directly carries over to various interesting online predictors.  For example, our analysis can be directly extended to the case of $f-$step ahead prediction of observations.  Another extension focuses on the regret of hidden state predictors when the state-space basis representation is known a priori. The latter situation arises, for example, when the state-space model is known but the noise statistics are unknown.  All these predictors enjoy similar logarithmic regret bounds.

{\bf Gap between model-free LQR and Kalman filter:} One of the implications of our bounds is that learning to predict observations like the Kalman filter is provably easier than solving the online Linear Quadratic Regulator (LQR) problem, which in general requires $O(\sqrt{N})$ regret. In fact, recent results suggest that in the LQR case, the regret is lower bounded by $\Omega(\sqrt{N})$~\citep{simchowitz2020naive}. This might not be surprising due to the fact that, in the absence of exogenous inputs, we cannot inject exploratory signals into the system.

\subsection{Related work}
Recently, there have been very important results  addressing the regret of the adaptive Linear Quadratic Regulator (LQR) problem~\citep{abbasi2011regret,faradonbeh2017optimism,ouyang2017learning,abeille2018improved,dean2018regret,mania2019certainty,cohen2019learning}. The best regret for LQR is sublinear and of the order of $\tilde{O}(\sqrt{N})$, where $N$ is the numbers of state samples collected; an in-depth survey can be found in~\cite{matni2019self}. When the system model is {\em known}, then the Kalman filter is the dual of the Linear Quadratic Regulator, suggesting that this duality can be exploited in deriving the regret of the Kalman filter. However, when the system model is {\em unknown}, the Linear Quadratic Regular and the Kalman filter are not dual problems~\citep{tsiamis2019sample}.  As the state is fully observed in LQR, the system identification in adaptive LQR reduces to a simple least squares problem.  In the adaptive Kalman filter, the state is {\em partially} observed resulting in non-convex system identification problems 
 requiring us to consider a different approach.

A related but different problem focuses on online prediction algorithms for systems without internal states (such as ARMA - autoregressive moving average)~\citep{anava2013online}. 
Prediction of observations generated by state space models in the case of exogenous inputs and adversarial noise but with a bounded budget was studied in~\cite{hazan2018spectral}.  
The work closest to ours is the very recent work of~\cite{kozdoba2019line}, 
where 
 regret bounds with respect to the Kalman Filter was studied for the first time but in the restricted context of scalar and bounded observations. The regret is shown to be linear, where the linear term is small but nonzero. 
 
Our online algorithm is inspired by subspace identification techniques~\citep{bauer1999consistency}. The technical approach is based on classical results for the analysis of the least-squares estimator for time series~\citep{lai1982least}, high-dimensional statistics~\citep{vershynin2018high} as well as modern results for finite sample analysis of system identification in both the fully observed~\citep{faradonbeh2018finite,simchowitz2018learning,sarkar2018fast} and the partially observed case~\citep{hardt2018gradient,oymak2018non,simchowitz2019semi,sarkar2019finite,tsiamis2019finite}.

\textbf{Paper organization.} In Section~\ref{Section_Formulation} we provide some background on the classical  Kalman Filter and formulate the regret problem considered in this paper. In Section~\ref{Section_Algorithm} we introduce the online learning algorithm while the regret guarantees are presented in Section~\ref{Section_Analysis}. We conclude with generalizations  and discussion of future work in Sections~\ref{Section_Extensions},~\ref{Section_Discussion}. Detailed proofs can be found in the Appendix.\\
\textbf{Notation. } With $\norm{}_2$ we denote the Euclidean norm for vectors and the spectral norm for matrices. The spectral radius of a matrix $A$ is denoted by $\rho\paren{A}$.
The smallest singular value of a matrix $A$ is denoted by $\sigma_{\min}(A)$. By $A^*$ we denote the transpose of $A$. Unless explicitly stated, when using the standard $O(N),\tilde{O}(N),o(N)$ notation we hide all other quantities, e.g. system constants, system dimensions, logarithms of failure probabilities. The $\tilde{O}(N)$ notation hides (powers of) logarithmic terms of $N$. The $\mathrm{poly}(x)$ notation means a polynomial function of $x$.
\section{Problem Formulation}\label{Section_Formulation}
The Kalman filter considers the problem of predicting observations generated by the following \emph{state-space system}:
\begin{equation}\label{FOR_EQN_System_Original}
\begin{aligned}
x_{k+1}&=Ax_{k}+w_{k},&&\:w_k\stackrel{\text{i.i.d.}}{\sim}\mathcal{N}\paren{0,Q}\\
y_{k}&=Cx_k+v_k,&&\:v_k\stackrel{\text{i.i.d.}}{\sim}\mathcal{N}\paren{0,R}
\end{aligned}
\end{equation}
where $x_k\in\R^n$ is the state, $y_k,\in \R^m$ are the observations (outputs), $A\in\R^{n\times n}$ is the system matrix and $C\in\R^{m\times n}$ is the observation matrix. The time series $w_k,v_k$ represent the process and measurement noise respectively and are modeled as zero mean i.i.d. Gaussian variables, independent of each other, with covariances $Q$ and $R$ respectively. The initial state is zero mean Gaussian with covariance $\Sigma_0$ and independent of the noises. The following assumption holds throughout this paper.

\begin{assumption}
	System~\eqref{FOR_EQN_System_Original} is non-explosive\footnote{This class includes marginally stable systems as well as systems with polynomial state growth.}, namely the spectral radius is $\rho(A)\le 1$.
\end{assumption}

Let $\F_k\triangleq \sigma(y_{0},\dots,y_k)$ be the filtration generated by the observations
$y_{0},\dots,y_k$. Given the observations up to time $k$, the optimal prediction $\hat{y}_{k+1}$ at time $k+1$ in the minimum mean square error (mmse) sense is defined as:
\begin{equation}\label{FOR_EQN_MMSE}
\hat{y}_{k+1}\triangleq\arg\min_{z\in\F_k}\mathbb{E}\clint{\norm{y_{k+1}-z}^2_2|\F_k}.
\end{equation}
In the case of system~\eqref{FOR_EQN_System_Original},
the optimal predictor 
admits a recursive expression, known as the \emph{Kalman filter}:
\begin{equation}\label{FOR_EQN_System_Innovation}
\begin{aligned}
\hat{x}_{k+1}&=A\hat{x}_{k}+Ke_k,\,\hat{x}_0=0\\
\hat{y}_{k+1}&=C\hat{x}_{k+1}\\
y_k&=C\hat{x}_k+e_k
\end{aligned}
\end{equation}
where $e_k\triangleq y_k-C\hat{x}_k $ is the innovation noise process. Matrix $K\in\R^{n\times m}$ is called the Kalman filter gain, and can be computed based on $A,C,Q,R$--see ~\eqref{FOR_EQN_Riccati} in Subection~\ref{Section_Formulation_Technical}.

Although the Kalman filter gives the optimal mmse prediction, it requires the system matrices $A,C$ and noise covariances $Q,R$ to be known. In this paper, we seek online learning algorithms that can predict observations based only on past observation data, without any knowledge of system matrices of noise covariances. 
To quantify the online prediction performance, we define the regret of our online learning algorithm with respect to the Kalman filter~\eqref{FOR_EQN_System_Innovation} that has full knowledge of system model~(\ref{FOR_EQN_System_Original}). Our goal is to achieve sublinear regret, as defined in the following problem statement.
\begin{problem}
	Assume that $A,C,Q,R$  in system model~(\ref{FOR_EQN_System_Original}) are unknown. Consider a sequence $y_0,y_1\dots$ of observations generated by system~\eqref{FOR_EQN_System_Original}. Let $\tilde{y}_k\in \F_{k-1}$ be the prediction of an online learning algorithm based on the history $y_{k-1},\dots,y_0$ and $\hat{y}_k$ be the Kalman filter prediction~\eqref{FOR_EQN_System_Innovation} that has full knowledge of model~(\ref{FOR_EQN_System_Original}). Define the regret:
	\begin{equation}\label{FOR_EQN_Regret_Output}
	\Reg_{N}\triangleq \sum^{N}_{k=1}\snorm{y_k-\tilde{y}_k}^2-\sum^{N}_{k=1}\snorm{y_k-\hat{y}_k}^2
	\end{equation} 
Fix a failure probability $\delta>0$. Our goal is to find a learning algorithm such that with probability at least $1-\delta$:
	\[
	\Reg_{N}\le \mathrm{poly}(\log1/\delta)o(N),
	\]
	where $o(N)$ does not depend on $\delta$.
\end{problem}
Our regret definition has a natural system theoretic interpretation since it is defined with respect to the Kalman filter. In Section~\ref{Section_Extensions}, we discuss an alternative regret definition.

In the following subsection we provide some background on the Kalman filter and specify some standard assumptions, which guarantee that the Kalman filter is well-defined.

\subsection{Kalman Filter Background}\label{Section_Formulation_Technical}
The Kalman filter enjoys two critical properties, namely closed-loop stability and innovation orthogonality, that are now  reviewed.
The following standard assumption holds throughout the paper and guarantees that the Kalman filter is well-defined.
\begin{assumption}\label{ASS_Kalman}
	The system matrix pair $(A,C)$ is observable, i.e. the observability matrix:
	\begin{equation}\label{FOR_EQN_Observability_Matrix}
	\O_k\triangleq \matr{{cccc}C^*&A^*C^*&\dots&(A^*)^{k-1}C^*}^*
	\end{equation}
	has rank $n$ for all $k\ge n$. The pair $(A,Q^{1/2})$ is controllable, i.e. the controllability matrix
		\begin{equation}\label{FOR_EQN_Controllability_Matrix}
		\matr{{cccc}Q^{1/2}&AQ^{1/2}&\dots&A^{k-1}Q^{1/2}}
		\end{equation}
		has rank $n$ for all $k\ge n$, 
	and $R$ is strictly positive definite.
\end{assumption}

The following result shows that under Assumption~\ref{ASS_Kalman}, the closed loop matrix $A-KC$ of the Kalman filter is stable. 
\begin{proposition}[\citealt{anderson2005optimal}]\label{FOR_PROP_KFStab}
	Consider system~\eqref{FOR_EQN_System_Original} under Assumption~\ref{ASS_Kalman}. The Kalman filter gain in~(\ref{FOR_EQN_System_Innovation}) is computed by:	
	\begin{equation*}
	\begin{aligned}
	K&=APC^*\paren{CPC^*+R}^{-1},
	\end{aligned}
	\end{equation*}  
	where $P$ is the positive definite solution to
	\begin{equation}\label{FOR_EQN_Riccati}
	P=(A-KC)P(A-KC)^*+Q+KRK^*.
	\end{equation}
	Moreover, the closed-loop matrix $A-KC$ is stable, i.e. it has spectral radius $\rho(A-KC)<1$. 
\end{proposition}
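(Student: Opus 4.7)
The plan is to establish three things in turn: (i) existence of a positive definite $P$ solving the Riccati equation (\ref{FOR_EQN_Riccati}), (ii) the stated formula for $K$, and (iii) stability of $A-KC$. The first two are classical consequences of the convergence of the Riccati recursion; the third is a Lyapunov argument driven by Assumption~\ref{ASS_Kalman}.

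First I would run the time-varying Kalman filter from $\Sigma_0$ and let $\Sigma_k = \E[(x_k-\hat{x}_{k|k-1})(x_k-\hat{x}_{k|k-1})^*]$ be the one-step prediction-error covariance. A direct projection/conditioning argument on the Gaussian joint distribution gives the Riccati difference equation
\begin{equation*}
\Sigma_{k+1}=A\Sigma_k A^*+Q-A\Sigma_k C^*\paren{C\Sigma_k C^*+R}^{-1}C\Sigma_k A^*,
\end{equation*}
together with the time-varying gain $K_k=A\Sigma_k C^*(C\Sigma_k C^*+R)^{-1}$. I would then invoke the standard fact that, under observability of $(A,C)$ and controllability of $(A,Q^{1/2})$ with $R\succ 0$, the sequence $\Sigma_k$ converges monotonically to a unique positive definite limit $P$ independent of $\Sigma_0$. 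The key ingredients are the monotonicity of the Riccati operator in the Loewner order, an upper bound on $\Sigma_k$ obtained from observability (so the limit exists), and controllability to force strict positivity of the limit. Taking limits yields the algebraic Riccati equation and the claimed formula for $K$. The same algebra rewrites the Riccati equation in the Lyapunov form (\ref{FOR_EQN_Riccati}) by expanding $(A-KC)P(A-KC)^*+KRK^*$ and using $KCP=APC^*(CPC^*+R)^{-1}CP$.

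For the stability claim, suppose for contradiction that $(A-KC)^* v=\lambda v$ for some nonzero $v$ and $|\lambda|\ge 1$. Pre- and post-multiplying (\ref{FOR_EQN_Riccati}) by $v^*$ and $v$ yields
\begin{equation*}
(1-|\lambda|^2)\,v^*Pv = v^*Qv + v^*KRK^*v \ge 0.
\end{equation*}
Since $|\lambda|\ge 1$, both sides must vanish. Because $Q\succeq 0$ and $R\succ 0$, this forces $Q^{1/2}v=0$ and $K^*v=0$. Consequently $A^*v=(A-KC)^*v+C^*K^*v=\lambda v$, so $v$ is a left eigenvector of $A$ as well. Iterating, $v^* A^j Q^{1/2}=\lambda^{j}(Q^{1/2}v)^*=0$ for all $j\ge 0$, which means $v^*$ annihilates the controllability matrix in (\ref{FOR_EQN_Controllability_Matrix}). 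This contradicts Assumption~\ref{ASS_Kalman}, so no such eigenvalue exists and $\rho(A-KC)<1$.

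The main obstacle is step (i), the convergence of the Riccati recursion to a positive definite fixed point; this is where the observability and controllability hypotheses do real work, and a self-contained proof requires the monotone convergence/comparison machinery for the Riccati operator. Once $P\succ 0$ is in hand, the Lyapunov argument for stability is short and entirely algebraic. Since the result is classical, in the paper itself it suffices to cite \citet{anderson2005optimal} for the convergence step and reproduce only the Lyapunov-based stability argument.
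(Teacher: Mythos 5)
The paper does not prove this proposition at all: it is imported verbatim from \citet{anderson2005optimal}, so there is no in-paper argument to compare yours against. Your sketch is a correct reconstruction of the standard textbook proof, and the division of labor is right: observability/controllability plus $R\succ 0$ drive the monotone convergence of the Riccati recursion to a unique $P\succ 0$ (which you reasonably defer to the classical machinery), the algebraic identity $APC^*=K(CPC^*+R)$ converts the ARE into the Lyapunov form~\eqref{FOR_EQN_Riccati}, and the PBH-style eigenvector argument gives stability. Two cosmetic points in the last step: if $(A-KC)^*v=\lambda v$ then $v^*(A-KC)P(A-KC)^*v=|\lambda|^2 v^*Pv$ with the conjugate appearing, so the iterate should read $v^*A^jQ^{1/2}=\bar{\lambda}^{\,j}(Q^{1/2}v)^*=0$; and when $|\lambda|>1$ the identity $(1-|\lambda|^2)v^*Pv\ge 0$ already yields an immediate contradiction with $P\succ 0$ (indeed $P\succeq 0$ suffices, since otherwise one still gets $Q^{1/2}v=0$, $K^*v=0$ and the same contradiction with controllability of $(A,Q^{1/2})$). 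Neither affects correctness.
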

Proposition~\ref{FOR_PROP_KFStab} implies that the Kalman filter reaches steady state exponentially fast, allowing us to assume the following.
\begin{assumption}\label{ASS_Stationarity_Kalman}
	We assume that  the initial state covariance is $\Sigma_0=P$, where $P$ is defined in~\eqref{FOR_EQN_Riccati}. 
\end{assumption} 
If $\Sigma_0\neq P$, then we have to consider time-varying gains $K_k$ in~\eqref{FOR_EQN_System_Innovation}.  The condition $\Sigma_0=P$ guarantees that the Kalman filter~\eqref{FOR_EQN_System_Innovation} has stabilized to its steady-state so that the gain $K$ is constant. Since the Kalman filter converges exponentially fast to its steady-state~\cite{anderson2005optimal}, this is a very mild assumption; it is also standard~\cite{knudsen2001consistency}.  

The next assumption makes sure that system~\eqref{FOR_EQN_System_Innovation} is minimal.
\begin{assumption}
	The pair $(A,K)$ is controllable.
\end{assumption}
If the pair $(A,K)$ is not controllable, then we can find a similarity transformation $\bar{x}_k=S\hat{x}_k$ such that:
\begin{align*}
\matr{{c}\bar{x}_{1,k+1}\\\bar{x}_{2,k+1}}=\matr{{cc}\bar{A}_{11}&\bar{A}_{12}\\0&A_{22}}\bar{x}_k+\matr{{c}K_1\\0}\bar{e}_k.
\end{align*}
But since $\hat{x}_k=0$ this implies that $\bar{x}_{2,k}=0$ for all $k\ge 0$. Hence we could remove $\bar{x}_{2,k}$ and consider a reduced system representation with only $\bar{x}_{1,k+1}$.

The following assumption is for notational simplicity.  It assumes that the largest eigenvalue of  $A-KC$ is simple.
\begin{assumption}\label{ASS_Bounded_Responses}
	For some $M>0$ and all $t\ge 0$, the closed-loop matrix satisfies $\norm{(A-KC)^t}_2\le M\rho(A-KC)^t$. 
\end{assumption}
If the largest eigenvalue has larger multiplicity then we can just consider $\rho(A-KC)+\epsilon$ in the above bound, for sufficiently small $\epsilon$.

In addition to the previous stability properties , the other nice property of the Kalman Filter is that the innovation sequence $e_k=y_k-\hat{y}_k$ is orthogonal (uncorrelated) and, by Gaussianity, also i.i.d. 
By the law of large numbers, this implies that the $\ell_2$ accumulative error $\sum_{k=0}^{N}\norm{y_k-\hat{y}_k}^2_2$ will be of the order of $O(N)$ almost surely. Predicting the true observations exactly is impossible in the stochastic noise setting, even if we know the system model.

Note that both systems~\eqref{FOR_EQN_System_Original},~\eqref{FOR_EQN_System_Innovation} can generate the same observations~$y_k$, i.e. the noise parameterization is not unique~\cite{van2012subspace}.
Another source of ill-posedness is that the state space parameterization is non-unique. Any similarity transformation $S^{-1}AS$, $CS$, $S^{-1}QS^{-*}$ generates the same observations. In the following section, we will address these problems by considering an alternative system representation.
\section{Online Prediction Algorithm}\label{Section_Algorithm}
The main idea of our online prediction algorithm is based on a system representation that has been used in the subspace system identification~\cite{bauer1999consistency}. 
 Let $p$ be an integer that represents how far we look into the past. We define the vector of past observations at time $k$:
\begin{align}\label{OUT_EQN_future_past_output}
Z_{k,p}\triangleq\matr{{ccc}y^*_{k-p}&\dots&y^*_{k-1}}^*,\,k\ge p.
\end{align}
Define also the matrix of closed-loop responses 
\begin{equation}
\label{EQN_Kalman_Controllability}
G_p\triangleq\matr{{ccc}C(A-KC)^{p-1}K&\cdots&CK}
\end{equation}
By expanding the Kalman filter ~\eqref{FOR_EQN_System_Innovation} $p$-steps into the past, the observation at time $k$ can be rewritten as
\begin{equation}\label{OUT_EQN_Linear_Regression}
y_k=G_p Z_{k,p}+\underbrace{C(A-KC)^{p}\hat{x}_{k-p}}_{\text{bias}}+e_k.
\end{equation}
Instead of optimizing over system parameters $A,C,K$, which results in a non-convex optimization problem, we optimize over (the higher dimensional) $G_p$, which makes the problem convex. From an online learning perspective, this technique is also known as improper learning.
Using this lifting, we can learn a least squares estimate $\tilde{G}_{k,p}$ by regressing outputs $y_t$ to past outputs $Z_{t,p}$ for $t\le k$:
\begin{equation}\label{OUT_EQN_G_estimate}
\tilde{G}_{k,p}=\sum_{t=p}^{k}y_tZ^*_{t,p}\paren{\lambda I+\sum_{t=p}^{k}Z_{t,p}Z^*_{t,p}}^{-1},
\end{equation} 
where $\lambda>0$ is a regularization parameter.
Then, to predict the next observation, we can compute:
\begin{equation}\label{OUT_EQN_Y_prediction}
\tilde{y}_{k+1}=\tilde{G}_{k,p}Z_{k+1,p}.
\end{equation} 
We could also use the recursive update:
\begin{align*}
\bar{V}_{k,p}&=\bar{V}_{k-1,p}+Z_{k,p}Z_{k,p}^*\\
\tilde{G}_{k,p}&=\tilde{G}_{k-1,p}+(y_k-\tilde{y}_k)Z^*_{k,p}\bar{V}^{-1}_k
\end{align*}
as long as the past $p$ is kept constant.
\begin{algorithm}[!t]
	\caption{Online Prediction Algorithm}
	\label{OUT_ALG_Output}
	\begin{algorithmic}
		\STATE {\bfseries Input:} $\beta$, $\lambda$, $T_{\text{init}}$ such that $T_{\text{init}}>\beta \log T_{\text{init}}$
			\FOR {$\text{k}=0,\dots,T_{\text{init}}-1$}
		\STATE Observe $y_{k}$, $\tilde{y}_k=0$
		\ENDFOR
		\FOR {$\text{i}=1,2,\dots$}
		\STATE $T=2^{i-1}T_{\text{init}}$
		\STATE $p=\beta\log T$
		\STATE $\bar{V}_{T-1}=\lambda I+\sum_{t=p}^{T-1}Z_{t,p}Z_{t,p}^*$
		\STATE $\tilde{G}_{T-1}=\paren{\sum_{t=p}^{T-1}y_tZ^*_{t,p}}\bar{V}_{T-1}^{-1}$
		\FOR {$k=T,\dots,2T-1$}
		\STATE Predict $\tilde{y}_{k}=\tilde{G}_{k-1}Z_{k,p}$
		\STATE Observe $y_{k}$
		\STATE Update $\bar{V}_{k}=\bar{V}_{k-1}+Z_{k,p}Z^*_{k,p}$
		\STATE $\tilde{G}_{k}=\tilde{G}_{k-1}+\paren{y_{k}-\tilde{y}_{k}}Z^*_{k,p}\bar{V}_{k}^{-1}$
		\ENDFOR
		\ENDFOR
	\end{algorithmic}
\end{algorithm}

Due to the stability properties of the Kalman filter (Section~\ref{Section_Formulation_Technical}), if we consider $p$ past observations, then the bias term in equation~\eqref{OUT_EQN_Linear_Regression} is of the order of $\rho(A-KC)^{p}\norm{\hat{x}_{x-p}}_2$. Notice that for non-explosive systems the state $\hat{x}_{k-p}$ can grow polynomially fast in the worst case. Even if $\hat{x}_{k-p}$ remains bounded, keeping the past $p$ constant would lead to a non-vanishing bias error (linear regret). Thus, to make sure that the prediction error decreases, we need to gradually increase the past horizon $p$. 
For this reason, inspired by the ``doubling trick"~\cite{cesa2006prediction}, we divide the learning in epochs, where every epoch is twice longer than the previous one. During every epoch we keep the past horizon constant. 
Since $\rho(A-KC)^{p}$ is exponentially decreasing, it is sufficient to slowly increase the past as $p=O(\log T)$, where $T$ is the epoch duration.

The pseudo-code of our online prediction approach can be found in Algorithm~\ref{OUT_ALG_Output}. 
 Each epoch lasts from time $T_i,\dots,2T_{i}-1$, where $i=1,\dots,$ is the epoch, $T_i=2^{i-1}T_{\text{init}}$, and $T_{\text{init}}$ is a design parameter (the length of the first epoch). During every epoch, we keep the past $p_i=\beta \log (T_i)$ constant, where $\beta$ is a design parameter. Initially, from time $0$ to $T_{\text{init}}-1$, we have a warm-up phase where we gather enough observations to start predicting. To make sure  that $p_i<T_i$, we tune $T_{\text{init}}$ accordingly. 
Within an epoch, the least squares based predictor~\eqref{OUT_EQN_Y_prediction} can be implemented in a recursive way, which requires polynomial complexity and at most $O(\log T_i)$ memory. In the beginning of an epoch, when $p_i$ is updated, we re-initialize the recursive predictor based on the whole past, which requires polynomial complexity and $O(T_i)$ memory. 
Hence, in total, after $N$ collected samples, the computational complexity is polynomial and the memory requirement is $O(N)$.
In Section~\ref{Section_Discussion}, we discuss ways to modify the initialization when changing epochs without using the whole past, which can reduce the memory to $O(\log N)$. 

An important property of Algorithm~\ref{OUT_ALG_Output} is that no knowledge about the dynamics or even the state dimension~$n$ is required. Note that there is a tradeoff between the bias error and statistical efficiency. Increasing the past horizon by selecting larger $\beta$ leads to smaller bias error, but increases the sample complexity of learning $G_{p}$ since we have more unknowns; it is also harder to achieve persistency of excitation, i.e. to have a large enough smallest singular value of $\bar{V}_k$. 
\section{Regret Analysis}\label{Section_Analysis}
In this section, we prove that with high probability the prediction regret is not only sublinear, but also of the order of $\mathrm{poly}\log N$ (or $\tilde{O}(1)$), where $N$ is the number of observations collected so far.
The challenge in the non-explosive regime is that the observations grow unbounded polynomially fast ($\Omega(\sqrt{N})$). Meanwhile, recent work in finite sample analysis of system identification~\cite{oymak2018non,simchowitz2019semi,tsiamis2019finite,sarkar2019finite} shows that the model parameters can be learned at a slower rate ($O(1/\sqrt{N})$). Therefore these system identification results cannot be directly applied to obtain regret bounds for our problem.
Nonetheless, we show that our online Algorithm~\ref{OUT_ALG_Output} mitigates the effect of unbounded observations. As a result, the logarithmic regret bound of $\tilde{O}(1)$ remains valid even as we approach instability. 

We provide two results, one for non-explosive systems ($\rho(A)\leq1$) and one for stable systems $(\rho(A)<1)$. 
Before we present the regret results, let us introduce some standard notions. Let $a(s)=s^d-a_{d-1}s^{d-1}\dots-a_0$ be the minimal polynomial of matrix $A$, i.e. the minimum degree polynomial such that $a(A)=0$. Denote its degree by $d$. We define the $\ell_1$ norm of its coefficients as $\norm{a}_{1}\triangleq \sum_{i=0}^{d-1}\abs{a_i}$; the $\ell_2$ norm $\norm{a}_2$ is defined in a similar way.
 Let  $\kappa$ be the dimension of the largest Jordan block of $A$ that is a associated with an eigenvalue on the unit circle (i.e. $\rho(A) = 1$). Let $\kappa{\max}$ be the largest Jordan block among all eigenvalues. In general, $\kappa\le \kappa_{\max}\le d\le n$.

\begin{theorem}[Regret for non-explosive systems]\label{OUT_THM_Main_Bound}
    Consider system~\eqref{FOR_EQN_System_Innovation} with $\rho(A)\leq 1$. 
	Let $y_0,\dots,y_N$ be sequence of system observations  with $\hat{y}_0,\dots,\hat{y}_N$ being the respective Kalman filter predictions. Let $\tilde{y}_0,\dots,\tilde{y}_N$ be the predictions of Algorithm~\ref{OUT_ALG_Output} with \begin{equation}\label{OUT_EQN_beta}
	\beta=\Omega\paren{\frac{\kappa}{\log(1/\rho(A-KC))}}
	\end{equation}
	and fix a failure probability $\delta>0$. There exists a $N_0=\mathrm{poly}\paren{n,\beta,\kappa,\log1/\delta}$, independent of $N$, such that with probability at least $1-\delta$, if $N> N_0$ then:
	\begin{equation}\label{OUT_EQN_Main_Bound}
	\begin{aligned}
\Reg_N\le& \mathrm{poly}(d^{\kappa_{\max}},n,\beta,\norm{a}_2,\kappa,\log \frac{1}{\delta})\tilde{O}(1)
	\end{aligned}
	\end{equation}
where $\tilde{O}(1)$  hides $\tilde{O}(N_0)$ and $\tilde{O}(T^{2\kappa}_{\text{init}})$ terms.
\end{theorem}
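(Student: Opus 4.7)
My plan starts from the subspace identity~\eqref{OUT_EQN_Linear_Regression}, which gives $\hat{y}_k=G_p Z_{k,p}+b_k$ with $b_k\triangleq C(A-KC)^{p}\hat{x}_{k-p}$, and hence $y_k-\tilde{y}_k=e_k+\epsilon^{\mathrm{est}}_k+b_k$ where $\epsilon^{\mathrm{est}}_k\triangleq(G_p-\tilde{G}_{k-1,p})Z_{k,p}$. Since $\|y_k-\tilde{y}_k\|_2^2-\|e_k\|_2^2=\|\epsilon^{\mathrm{est}}_k+b_k\|_2^2+2e_k^*(\epsilon^{\mathrm{est}}_k+b_k)$, summing yields
\begin{equation*}
\Reg_N=\sum_{k=1}^N\|\epsilon^{\mathrm{est}}_k+b_k\|_2^2 + 2\sum_{k=1}^N e_k^*(\epsilon^{\mathrm{est}}_k+b_k).
\end{equation*}
The second summand is a martingale in $k$ because $e_k$ is the innovation and $\epsilon^{\mathrm{est}}_k+b_k$ is $\F_{k-1}$-measurable. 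I would bound the bias, estimation, and cross-martingale pieces on each epoch separately and then aggregate across the $O(\log N)$ epochs produced by the doubling schedule.

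\textbf{Bias control.} By Assumption~\ref{ASS_Bounded_Responses}, $\|b_k\|_2\le \|C\|_2 M \rho(A-KC)^p \|\hat{x}_{k-p}\|_2$. For non-explosive $A$, the Jordan decomposition combined with sub-Gaussian concentration on the noise driving $\hat{x}$ yields $\sup_{k\le N}\|\hat{x}_k\|_2^2=\tilde O(N^{2\kappa-1})$ with prefactors controlled by $d^{\kappa_{\max}}$, $\|a\|_2$, and $n$---precisely the constants appearing in~\eqref{OUT_EQN_Main_Bound}. On epoch $i$, the schedule $p_i=\beta\log T_i$ with $\beta$ as in~\eqref{OUT_EQN_beta} gives $\rho(A-KC)^{p_i}\le T_i^{-c}$ for an exponent $c$ that can be made arbitrarily large by enlarging $\beta$; choosing $c$ large enough to dominate the polynomial state growth makes $\sum_{k=T_i}^{2T_i-1}\|b_k\|_2^2=\tilde O(1)$ per epoch, and hence $\tilde O(1)$ overall.

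\textbf{Estimation error and persistency of excitation.} The main obstacle is the estimation contribution on a single epoch. The regularized least-squares identity
\begin{equation*}
\tilde{G}_{k-1,p_i}-G_{p_i}=\Big(\sum_{t=p_i}^{k-1}(e_t+b_t)Z_{t,p_i}^* - \lambda G_{p_i}\Big)\bar{V}_{k-1,p_i}^{-1}
\end{equation*}
combined with Cauchy--Schwarz gives $\sum_k\|\epsilon^{\mathrm{est}}_k\|_2^2 \le \|\tilde G_{\cdot,p_i}-G_{p_i}\|_{\bar V}^2\cdot\sum_k Z_{k,p_i}^*\bar V_{k-1,p_i}^{-1}Z_{k,p_i}$. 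The second factor is $\tilde O(p_i m)$ by the classical elliptic-potential lemma, \emph{no matter how fast $\|Z_{k,p_i}\|$ grows}---this self-normalization is the key device for coping with polynomially unbounded outputs. The first factor splits into a stochastic piece $\|\sum_t e_t Z_{t,p_i}^*\|_{\bar V^{-1}}^2$, controlled by a self-normalized martingale inequality in the spirit of~\cite{abbasi2011regret}, and a deterministic bias piece controlled by the same fast decay as above. To convert these self-normalized bounds into absolute ones one still needs a persistency-of-excitation lower bound $\sigma_{\min}(\bar V_{k,p_i})=\tilde\Omega(k)$ on the informative subspace, and this is where the analysis is hardest: classical PE arguments require bounded regressors and do not apply here because $Z_{k,p_i}$ can grow polynomially. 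My approach would be to adapt the Jordan-block decomposition used for system identification in~\cite{simchowitz2019semi,tsiamis2019finite}, separating the dynamics into stable, marginally stable, and polynomial-growth modes; the stable modes provide the standard linear-in-$k$ PE through the innovation noise, while the nonstable modes contribute extra singular values that only help. The burn-in needed to reach this regime is $N_0=\mathrm{poly}(n,\beta,\kappa,\log(1/\delta))$, and the unavoidable cost of the very first epoch before PE kicks in is the source of the $\tilde O(T_{\mathrm{init}}^{2\kappa})$ term hidden inside~\eqref{OUT_EQN_Main_Bound}. Finally, the cross-martingale term is handled by Freedman's inequality and absorbed into half the quadratic terms, and summing the $\tilde O(1)$ per-epoch regret across the $O(\log N)$ epochs of the doubling schedule yields the stated $\mathrm{poly}(\cdot)\tilde O(1)$ bound.
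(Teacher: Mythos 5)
Your regret decomposition, your treatment of the bias term via the choice of $\beta$, and your overall epoch-by-epoch strategy all match the paper. The genuine gap is in the estimation term, specifically the claim that $\sum_k Z_{k,p_i}^*\bar V_{k-1,p_i}^{-1}Z_{k,p_i}=\tilde O(p_i m)$ follows from ``the classical elliptic-potential lemma, no matter how fast $\|Z_{k,p_i}\|$ grows.'' This is precisely the step that fails in the non-explosive regime. The elliptic-potential argument (Lemma~\ref{OUT_LEM_self_normalization}) controls $\sum_k Z_{k-1}^*\bar V_{k-1}^{-1}Z_{k-1}$, where the current regressor is \emph{contained} in the Gram matrix; the prediction error instead involves $\bar V_{k-1}^{-1}Z_k$, and $Z_kZ_k^*$ is not in $\bar V_{k-1}$. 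The standard ways to bridge this one-step mismatch --- the $\min(1,\cdot)$ truncation, or taking $\lambda\ge\sup_k\|Z_k\|_2^2$ --- require bounded regressors, whereas here $\|Z_k\|_2$ grows polynomially in $k$. Persistency of excitation alone does not rescue the argument either: it only gives $\|\bar V_{k-1}^{-1/2}Z_k\|_2^2\lesssim\|Z_k\|_2^2/k$, whose sum over an epoch is polynomial rather than logarithmic once $\kappa\ge1$.

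The paper closes this gap with the ARMA-like representation (Lemma~\ref{OUT_LEM_ARMA_representation}): using the minimal polynomial $a$ of $A$, one writes $Z_k=a_{d-1}Z_{k-1}+\dots+a_0Z_{k-d}+\delta_k$, where $Z_{k-1},\dots,Z_{k-d}$ \emph{are} contained in $\bar V_{k-1}$ (so Lemma~\ref{OUT_LEM_self_normalization} applies to them after Cauchy--Schwarz) and the residual satisfies $\|\delta_k\|_2\le\Delta\sup_{i\le k-1}\|e_i\|_2$ with $\Delta=O(d^{\kappa_{\max}-1}\|a\|_1\sqrt p)$, which is only logarithmically large in $T$; the residual term $\bar V_{k-1}^{-1/2}\delta_k$ is then the one place where the uniform persistency-of-excitation bound (Lemma~\ref{Lemma_PE}) is actually invoked. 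This is also where the constants $d^{\kappa_{\max}}$ and $\|a\|_2$ in~\eqref{OUT_EQN_Main_Bound} originate --- you attribute them to the bias/state-norm bound, which in fact only produces the $T_{\text{init}}^{2\kappa}$ and covariance-type factors. Without this (or an equivalent) device your per-epoch estimation bound does not go through. A secondary issue: the cross term $\sum_k e_k^*(\epsilon_k^{\mathrm{est}}+b_k)$ has unbounded Gaussian increments, so plain Freedman does not apply; the paper instead normalizes by $(\LL_N+1)^{1/2}$ and applies the self-normalized martingale bound of Theorem~\ref{MART_THM_Vector}.
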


Theorem~\ref{OUT_THM_Main_Bound} provides the first logarithmic regret upper bounds for the general problem of Kalman filter prediction. The burn-in time $N_0$ is related to persistency of excitation conditions, i.e. initially we need enough samples to guarantee that the smallest singular value of the Gram matrix $\bar{V}_k$ increases linearly with $k$. Our bounds do not depend on the stability gap $1/(1-\rho(A))$ and they do not degrade as we approach instability. However, they suggest, via $\beta$, that the stability radius $\rho(A-KC)$ of the Kalman filter closed-loop matrix affects the difficulty of learning.

Interestingly, our bounds show that the problem of learning to predict observations like the Kalman filter is provably easier than the online LQR, in the case of unknown model. The latter requires in general regret of the order of $\sqrt{N}$~\citep{simchowitz2020naive}. This is another reason why the problems are not dual in the unknown model case. This gap might be expected since in the case of Kalman filter without exogenous inputs, there is no exploratory signal.

The upper bound also depends on the quantities $d^{\kappa_{\max}}$ and $\norm{a}_1$, both of which can be exponential in the dimension of the system state $n$ in the worst case. This can happen, for example, if $\kappa=n$, i.e. the system is an $n-$th order integrator. 
Dependence of learning performance on the coefficients of the characteristic or minimal polynomial has been found in related work~\citep{hardt2018gradient}. This dependence can be improved in some cases--see for example the phase polynomial in~\cite{hazan2018spectral}, where there are no repeated eigenvalues. In our case, this dependence could perhaps be improved by applying the techniques of~\cite{simchowitz2019semi}. 
However, it is an open question whether it is possible to avoid the exponential dependence on $\kappa$, $\kappa_{\max}$. It might be possible that systems with long chain structure, e.g. integrators,  are indeed harder to learn. In system theory it is known that even in the known model case, such systems might be difficult to observe. In open-loop system identification~\citep{simchowitz2019semi}, such a dependence also appears. 
It might be an inherent limitation of the problem, since fundamental quantities of the system, for example matrix $A^i$ or the observability matrix $\O_i$ scale with $i^{\kappa}$.

Both of the above issues are avoided in the case of stable systems ($\rho(A)<1$),
where we have the following result.
\begin{theorem}[Regret for stable systems]\label{STABLE_THM_Main_Bound}
	Consider system~\eqref{FOR_EQN_System_Innovation} with $\rho(A)<1$. 
Let $y_0,\dots,y_N$ be sequence of system observations  with $\hat{y}_0,\dots,\hat{y}_N$ being the respective Kalman filter predictions. Let $\tilde{y}_0,\dots,\tilde{y}_N$ be the predictions of Algorithm~\ref{OUT_ALG_Output} with $
\beta=\Omega\paren{\frac{1}{\log(1/\rho(A-KC))}}.
$
	Fix a failure probability $\delta>0$. Then there exists a 
	\[
	N_0=\mathrm{poly}\paren{n,\beta,\log 1/\rho(A),\log1/\delta}
	\] 
	such that with probability at least $1-\delta$, if $N> N_0$ then:
	{ \medmuskip=0mu\thickmuskip=1mu
		\begin{equation}\label{STABLE_EQN_Main_Bound}
		\Reg_N\le \mathrm{poly}(n,\beta,\log \frac{1}{\delta})\tilde{O}(1)
		\end{equation}}
where $\tilde{O}(1)$ hides $\tilde{O}(N_0)$ and $\tilde{O}(T_{\text{init}})$ terms.
\end{theorem}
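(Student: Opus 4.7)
The plan is to bound the per-step excess error by expanding it against the identity $y_k = G_p Z_{k,p} + C(A-KC)^p \hat{x}_{k-p} + e_k$ and then summing across the doubling epochs. Writing $\tilde{y}_k = \tilde{G}_{k-1,p} Z_{k,p}$, the one-step excess error decomposes as
\begin{equation*}
\snorm{y_k-\tilde{y}_k}^2-\snorm{e_k}^2 = \snorm{\Delta_{k,p} Z_{k,p} + b_{k,p}}^2 + 2 e_k^\ast\bigl(\Delta_{k,p} Z_{k,p} + b_{k,p}\bigr),
\end{equation*}
where $\Delta_{k,p}\triangleq G_p-\tilde{G}_{k-1,p}$ and $b_{k,p}\triangleq C(A-KC)^p\hat{x}_{k-p}$. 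Summing over an epoch $[T,2T)$, the cross terms involving $e_k$ form a martingale (since $e_k$ is independent of $\F_{k-1}$ and $\Delta_{k-1,p},\, Z_{k,p},\, \hat{x}_{k-p}$ are $\F_{k-1}$-measurable), which I would control by a self-normalized Hoeffding/Freedman-type inequality following \cite{abbasi2011regret}. The remaining squared term splits into an estimation part $\snorm{\Delta_{k,p} Z_{k,p}}^2$ and a bias part $\snorm{b_{k,p}}^2$.

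Next I would show that the bias is negligible under the choice of $\beta$. Because $\rho(A)<1$, standard Gaussian concentration gives $\snorm{\hat{x}_{k-p}}^2 \le \mathrm{poly}(n,\log N/\delta)$ uniformly over $k\le N$ with probability $1-\delta/3$; combined with $\snorm{(A-KC)^p}_2\le M\rho(A-KC)^p$ (Assumption~\ref{ASS_Bounded_Responses}) and $p=\beta\log T$, we get $\snorm{b_{k,p}}^2 \le T^{-2\beta\log(1/\rho(A-KC))}\cdot\mathrm{poly}(\log N)$. Choosing $\beta$ as in the statement makes this bound summable within each epoch at level $\tilde{O}(1)$, so the total bias contribution over all epochs is polylogarithmic in $N$.

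The core quantitative step is bounding the estimation term. The least-squares formula gives $\Delta_{k,p}= -\bigl(\lambda G_p + E_{k,p}W_{k,p}^\ast\bigr)\bar{V}_{k,p}^{-1}$, where $E_{k,p}$ collects the innovation-plus-bias residuals and $W_{k,p}$ the past-observation regressors. Using the standard self-normalized martingale argument, I would get a high-probability bound of the form $\snorm{\Delta_{k,p}\bar{V}_{k,p}^{1/2}}_2^2 \le \mathrm{poly}(n,\beta)\log(\det \bar{V}_{k,p}/\lambda^{mp}\delta)$, and then $\sum_{k=T}^{2T-1}\snorm{\Delta_{k-1,p}Z_{k,p}}^2 \le \snorm{\Delta_{2T-1,p}\bar{V}_{2T-1,p}^{1/2}}_2^2\cdot\sum_k Z_{k,p}^\ast \bar{V}_{k,p}^{-1}Z_{k,p}$, the second factor being $\tilde{O}(1)$ by the elliptical potential lemma. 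Since $\rho(A)<1$, $\det\bar{V}_{k,p}$ is only polynomial in $k$ (not exponential as in the non-explosive case), so this factor is $\tilde{O}(1)$ as well.

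The main obstacle, and the reason for the burn-in $N_0$, is establishing persistence of excitation, i.e.\ a linear-in-$k$ lower bound $\sigma_{\min}(\bar{V}_{k,p})\gtrsim k$ with high probability, which is needed to cancel the $\sum e_k^\ast \Delta_{k,p}Z_{k,p}$ cross term and to rule out the pathological dependence on $\bar{V}^{-1}$ when the regressor dimension $mp$ grows with the epoch. For this I would exploit minimality of $(A,K)$ (Assumption following Proposition~\ref{FOR_PROP_KFStab}) to show that the stationary covariance $\E Z_{k,p}Z_{k,p}^\ast$ has smallest eigenvalue bounded below uniformly in $p$, then invoke a matrix-Azuma/Hanson-Wright-style concentration for mixing Gaussian processes to transfer the population lower bound to the empirical Gram matrix after $N_0=\mathrm{poly}(n,\beta,\log 1/\rho(A),\log 1/\delta)$ samples. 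Once this is in place, summing the per-epoch regret over $\log_2(N/T_{\text{init}})$ epochs yields~\eqref{STABLE_EQN_Main_Bound} with the $T_{\text{init}}$ and $N_0$ contributions absorbed in the $\tilde{O}(1)$ factor.
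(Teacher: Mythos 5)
Your overall architecture matches the paper's: the same decomposition of the regret into a squared loss plus a martingale cross term (note $\hat{y}_k-\tilde{y}_k = -(\Delta_{k,p}Z_{k,p}+b_{k,p})$ up to the truncation contribution, so your expansion against $y_k=G_pZ_{k,p}+b_{k,p}+e_k$ is the paper's expansion in disguise), the same self-normalized-martingale control of the estimation error, the same exponentially small bias under $p=\beta\log T$, and the same mixing-time-based persistence of excitation with burn-in $N_0$ depending on $\log 1/\rho(A)$. One small attribution slip: the uniform-in-$p$ lower bound on $\sigma_{\min}(\E Z_{k,p}Z_{k,p}^*)$ comes from $R\succ 0$ (via $\Sigma_E\succeq \sigma_{\min}(R)I$), not from minimality of $(A,K)$; the latter is what makes the \emph{state} covariance $\Gamma_\infty$ strictly positive definite.

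There is, however, one step that is wrong as written and happens to sit exactly where the stable-case analysis does its real work. Your displayed bound
\[
\sum_{k=T}^{2T-1}\snorm{\Delta_{k-1,p}Z_{k,p}}^2 \le \snorm{\Delta_{2T-1,p}\bar{V}_{2T-1,p}^{1/2}}_2^2\cdot\sum_k Z_{k,p}^\ast \bar{V}_{k,p}^{-1}Z_{k,p}
\]
is not a valid inequality: the correct Cauchy--Schwarz factorization is $\snorm{\Delta_{k-1,p}Z_{k,p}}^2\le \snorm{\Delta_{k-1,p}\bar{V}_{k-1,p}^{1/2}}_2^2\, Z_{k,p}^*\bar{V}_{k-1,p}^{-1}Z_{k,p}$, so (i) you need a \emph{uniform-over-$k$} bound on the first factor (which the self-normalized martingale theorem does give you), not just the terminal one, and (ii) the second factor is normalized by $\bar{V}_{k-1,p}$, which does \emph{not} contain $Z_{k,p}Z_{k,p}^*$, so the elliptical potential lemma does not apply to it. This index mismatch is precisely the obstruction the paper isolates; in the stable case it is resolved by the stronger persistence-of-excitation statement $\bar{V}_{k-1,p}\succeq \frac{k-p+1}{32}\Gamma_{Z,k}$ (after the mixing time), which gives $Z_{k,p}^*\bar{V}_{k-1,p}^{-1}Z_{k,p}\le \frac{32}{k-p+1}\snorm{\Gamma_{Z,k}^{-1/2}Z_{k,p}}_2^2$, with the normalized vector behaving like a standard Gaussian of dimension $mp$; summing $1/(k-p+1)$ over one epoch then yields the $\tilde{O}(1)$ you want. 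You do invoke persistence of excitation in your last paragraph and sense that it is needed ``to rule out the pathological dependence on $\bar{V}^{-1}$,'' but you never connect it to this specific $Z_k\notin\bar{V}_{k-1}$ gap, and without that connection the chain of inequalities you wrote does not close. You should also say a word about the epochs with $T<N_0$ (where only the crude $\snorm{\bar{V}_{k-1}^{-1}}_2\le 1/\lambda$ bound is available, contributing the $\tilde{O}(N_0)$ term) and the warm-up phase (contributing $\tilde{O}(T_{\text{init}})$), both of which the theorem explicitly absorbs into its constant.
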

Notice that for stable systems we no longer have quantities that depend exponentially on the dimension $n$. The main bound~\eqref{STABLE_EQN_Main_Bound} does not depend on the stability gap $1/(1-\rho(A))$ However, via $N_0$, the guarantees depend logarithmically on the inverse radius $\log 1/\rho(A)$. This quantity is related to the time needed for a stable system to approach stationarity.

The proofs of Theorem~\ref{OUT_THM_Main_Bound} and Theorem~\ref{STABLE_THM_Main_Bound} can be found in the Appendix.  In the next subsection, we provide an overview of the regret analysis and explain why the quantities $d^{\kappa_{\max}}$ and $\norm{a}_{2}$ appear in the bound in Theorem~\ref{OUT_THM_Main_Bound}. We also explain what changes in the case of stable systems addressed by Theorem~\ref{STABLE_THM_Main_Bound}.

\subsection{Regret  analysis overview}
Recall the definition of the innovation error $e_k=y_k-\hat{y}_k$. For brevity, we also define the error  $\tilde{e}_k\triangleq\tilde{y}_k-\hat{y}_k$ between the online prediction of Algorithm~\ref{OUT_ALG_Output} and the Kalman Filter prediction. By adding and subtracting $\hat{y}_k$ in the first term, we obtain
\begin{align*}
\Reg_N&=\sum_{k=1}^{N}\norm{e_k+\hat{y}_k-\tilde{y}_k}^2_2-\norm{e_k}^{2}_2\\
&=\underbrace{\sum_{k=1}^{N}\norm{\hat{y}_k-\tilde{y}_k}^2_2}_{\LL_N}+2\underbrace{\sum_{k=1}^{N}e^*_k\paren{\hat{y}_k-\tilde{y}_k}}_{\text{martingale term}}
\end{align*}
It is sufficient to prove that the square loss $\ell_2$:
\begin{equation}
\LL_N\triangleq\sum_{k=1}^{N}\norm{\hat{y}_k-\tilde{y}_k}^2_2
\end{equation} 
is logarithmic in $N$. Because the innovations are i.i.d., we have a martingale structure for the second term since $e_k\in\F_k$, while $\tilde{e}_k\in\F_{k-1}$.
 The martingale term will in general be small
and can be bounded in terms of the square loss $\LL_N$. 
In particular, the quantity \[(\LL_N+1)^{-1/2}\sum_{k=1}^{N}e^*_k\paren{\hat{y}_k-\tilde{y}_k}\] is a self-normalized martingale and can be analyzed based on the techniques of~\cite{abbasi2011improved,sarkar2018fast}, which imply that  
\[
\sum_{k=1}^{N}e^*_k\paren{\hat{y}_k-\tilde{y}_k}=\tilde{O}(\sqrt{\LL_N})
\]
with high probability.
Hence, we will focus on bounding the square loss $\LL_N$. 

For the remaining section,
we will assume that we are within one epoch $i$ so that the past horizon $p=p_i$ and $T=2^{i-1}T_{init}$ are kept constant. For brevity, we omit the subscript $p$ from all variables and write $G,\tilde{G}_k,Z_k$ instead of $G_p,\tilde{G}_{k,p},Z_{k,p}$.

Define $S_{k-1}\triangleq \sum_{i=p}^{k-1}e_{i}Z^*_{i}$ and $\bar{V}_{k-1}\triangleq \lambda I+\sum_{i=p}^{k-1}Z_{i}Z^*_{i}$.
Then, the error between our online prediction and the Kalman filter prediction can be written as:
\begin{align}
\tilde{e}_k&=(\tilde{G}_{k-1}-G)Z_k-C(A-KC)^p\hat{x}_{k-p}
\nonumber\\&=\underbrace{S_{k-1}\bar{V}^{-1}_{k-1}Z_{k}}_{\text{regression}}+\underbrace{\lambda G\bar{V}^{-1}_{k-1}Z_{k}}_{\text{regularization}}\label{OUT_EQN_error_expression}\\
&+\underbrace{C(A-KC)^p\paren{\sum_{i=T}^{k-1}\hat{x}_{i-p}Z^*_i\bar{V}^{-1}_{k-1}Z_k-\hat{x}_{k-p}}}_{\text{truncation bias}}\nonumber.
\end{align}
The regression term is due to the noise $e_k$ entering the system. The truncation bias is due to using only $p$ past observations and not all of them. 

The key ingredients to analyze the cumulative error $\LL_N$ are i) the stability properties of the closed-loop matrix $A-KC$; ii) self-normalization properties of predictor~\eqref{OUT_EQN_Y_prediction};   and iii) persistency of excitation for the past observations with high probability. By persistency of excitation we mean that the least singular value of the Gram matrix $\bar{V}_k$ is increasing as fast as $O(k)$ with high probability.

\textbf{Regression term.} We can rewrite the regression term as a product of two separate terms:
\[
S_{k-1}\bar{V}^{-1}_{k-1}Z_{k}=(S_{k-1}\bar{V}^{-1/2}_{k-1})(\bar{V}^{-1/2}_{k-1}Z_{k}).
\]
The first term, $S_{k-1}\bar{V}^{-1/2}_{k-1}$ is again a self-normalized martingale and can be analyzed based on the techniques of~\cite{abbasi2011improved,sarkar2018fast}, which imply that  the term $\sup_{T\le k\le 2T-1}\norm{S_{k-1}\bar{V}^{-1/2}_{k-1}}^2_2$ grows logarithmically with $T$. The martingale property again comes from the fact that the innovation process $e_k$ of the Kalman Filter is  i.i.d.--see Section~\ref{Section_Formulation_Technical}.

The second term, $\bar{V}^{-1/2}_{k-1}Z_{k}$, is almost self-normalized since $\bar{V}_{k-1}$ is the Gram matrix of $Z_{k-1},\dots,Z_p$. It could be bounded using the following lemma which is inspired by~\cite{lai1982least}.
\begin{lemma}\label{OUT_LEM_self_normalization}
	Let $\bar{V}_{k-1}=\lambda I+\sum_{i=p}^{k-1}Z_iZ^*_i$. Then, the following inequality holds:
	\[
	\sum_{k=T+1}^{2T}Z_{k-1}\bar{V}^{-1}_{k-1}Z_{k-1}\le \log\det (\bar{V}_{2T-1}\bar{V}^{-1}_{T-1})
	\]
\end{lemma}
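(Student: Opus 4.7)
The plan is to prove this via the standard elliptical potential / self-normalized regression argument, which reduces to a one-step determinant identity combined with the elementary inequality $\log(1+x)\ge x/(1+x)$ for $x\ge 0$, followed by a telescoping sum.

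First I would rewrite the incremental update for $\bar{V}$: for $k-1\ge p$ we have $\bar{V}_{k-1}=\bar{V}_{k-2}+Z_{k-1}Z_{k-1}^{*}$. Setting $a_k \triangleq Z_{k-1}^{*}\bar{V}_{k-2}^{-1}Z_{k-1}\ge 0$, the matrix determinant lemma yields
\[
\det\bar{V}_{k-1}=\det\bar{V}_{k-2}\,(1+a_k),
\]
and the Sherman--Morrison formula applied to $\bar{V}_{k-1}^{-1}$ gives
\[
Z_{k-1}^{*}\bar{V}_{k-1}^{-1}Z_{k-1}=a_k-\frac{a_k^{2}}{1+a_k}=\frac{a_k}{1+a_k}.
\]

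Next I would apply $\log(1+x)\ge x/(1+x)$ (valid for $x\ge 0$), which delivers the per-step bound
\[
Z_{k-1}^{*}\bar{V}_{k-1}^{-1}Z_{k-1}=\frac{a_k}{1+a_k}\le \log(1+a_k)=\log\det\bar{V}_{k-1}-\log\det\bar{V}_{k-2}.
\]
Summing this inequality over $k=T+1,\dots,2T$ telescopes cleanly:
\[
\sum_{k=T+1}^{2T} Z_{k-1}^{*}\bar{V}_{k-1}^{-1}Z_{k-1}\le \log\det\bar{V}_{2T-1}-\log\det\bar{V}_{T-1}=\log\det\!\bigl(\bar{V}_{2T-1}\bar{V}_{T-1}^{-1}\bigr),
\]
which is the claimed bound. (I interpret the left-hand side as $Z_{k-1}^{*}\bar{V}_{k-1}^{-1}Z_{k-1}$, i.e.\ a scalar quadratic form, since that is the only way the inequality makes dimensional sense.)

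There is essentially no obstacle here; the whole argument is two lines of linear algebra plus one scalar inequality, and the regularization $\lambda I$ just ensures $\bar{V}_{k-2}$ is invertible so that the Sherman--Morrison step is legitimate for every $k\ge p+1$. The only thing to be careful about is index alignment in the telescoping range and the fact that the bound is a deterministic pathwise inequality (it holds regardless of any probabilistic structure on $Z_k$), which is what makes it useful downstream as the ``self-normalization'' piece that pairs with the martingale bound on $S_{k-1}\bar{V}_{k-1}^{-1/2}$ in the regression term of~\eqref{OUT_EQN_error_expression}.
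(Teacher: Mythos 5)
Your proof is correct and follows essentially the same route as the paper's: a one-step determinant identity for the rank-one update, the elementary inequality $x/(1+x)\le\log(1+x)$ (the paper phrases it as $1-x\le\log(1/x)$), and a telescoping sum. The only cosmetic difference is that you make the Sherman--Morrison step explicit, while the paper writes the quadratic form directly as $1-\det\bar{V}_{k-1}/\det\bar{V}_k$; the index bookkeeping and the final bound agree.
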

The intuition is that $Z_{k-1}Z_{k-1}^*$ appears in $\bar{V}_{k-1}$ and, hence, it cancels out the effect of  $Z_{k-1}$. 
Unfortunately, we cannot directly use the above inequality for $\bar{V}^{-1/2}_{k-1}Z_{k}$ since $Z_kZ^*_k$ is not explicitly contained in $\bar{V}_{k-1}$.
However, we can exploit the fact that $Z_k$ does not change too fast compared to the most recent past $Z_{k-1}$, $\dots$, $Z_{k-n}$. 
\begin{lemma}[ARMA-like representation]\label{OUT_LEM_ARMA_representation}
	Let $y_{0},y_1\dots$ be observations generated by system~\eqref{FOR_EQN_System_Original}. Fix a past horizon $p$ and let $a$ be the minimal polynomial of $A$ with degree $d$. Then, the past observations satisfy the following recursion
\begin{equation}\label{OUT_EQN_ARMA}
Z_{k}=a_{d-1}Z_{k-1}+\dots+a_{0}Z_{k-d}+\delta_k,
\end{equation}
where $\delta_k\in\F_{k-1}$ with
\begin{equation}
\norm{\delta_k}_2\le \Delta \sup_{i\le k-1}\norm{e_i}_2,
\end{equation}
where $\Delta=O(d^{\kappa_{\max}-1}\norm{a}_1\sqrt{p})$
\end{lemma}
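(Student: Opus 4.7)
The plan is to derive the ARMA-like recursion from the Cayley--Hamilton identity applied to the minimal polynomial. By definition of $a(s)$, we have $a(A)=0$, i.e. $A^d=\sum_{j=0}^{d-1}a_j A^j$. First, I would unroll the innovation-form recursion $\hat{x}_{t+1}=A\hat{x}_t+Ke_t$ for $d$ steps, giving
\[
\hat{x}_t = A^d\hat{x}_{t-d} + \sum_{i=0}^{d-1}A^{d-1-i}K e_{t-d+i},
\]
and then substitute $A^d=\sum_j a_j A^j$ on the right.

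Next, for each $j\in\{0,\ldots,d-1\}$, unrolling the innovation form $j$ steps from time $t-d$ gives $\hat{x}_{t-d+j}=A^j\hat{x}_{t-d}+\sum_{i=0}^{j-1}A^{j-1-i}K e_{t-d+i}$, which I can rearrange to express $A^j\hat{x}_{t-d}$ in terms of the ``real'' state $\hat{x}_{t-d+j}$ modulo a short tail of innovations. Plugging this into the previous display and using $y_t=C\hat{x}_t+e_t$ eliminates all unobservable states and yields the scalar identity
\[
y_t-\sum_{j=0}^{d-1}a_j y_{t-d+j}=\eta_t,
\]
where $\eta_t$ is an explicit linear combination of the innovations $e_{t-d},\ldots,e_t$ whose coefficients are built from $C$, $K$, $\{a_j\}$, and matrix powers $A^m$ with $m\le d-1$. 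Stacking this identity for $t=k-p,\ldots,k-1$ (which are exactly the blocks of $Z_k$) produces the block-vector recursion $Z_k=\sum_j a_j Z_{k-d+j}+\delta_k$, with $\delta_k$ the vertical stack of the $\eta_t$'s. Since the latest innovation appearing in any block of $\delta_k$ is $e_{k-1}$, measurability $\delta_k\in\F_{k-1}$ follows for free.

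For the norm bound I would invoke $\rho(A)\le 1$ together with the Jordan structure of $A$: the standard estimate $\norm{A^m}_2=O(m^{\kappa_{\max}-1})$ holds for non-explosive $A$ whose largest Jordan block has size $\kappa_{\max}$. Combining this with the triangle inequality gives, for each block, $\norm{\eta_t}_2\le C_\eta\sup_{i\le t}\norm{e_i}_2$, where $C_\eta$ absorbs $\norm{C}_2$, $\norm{K}_2$, and the sums $\sum_j |a_j|\sum_{i=0}^{j-1}\norm{A^{j-1-i}}_2$ and $\sum_i \norm{A^{d-1-i}}_2$; both scale like $\|a\|_1 \cdot \mathrm{poly}(d)$ with polynomial exponent dictated by $\kappa_{\max}$. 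Finally, $Z_k$ has $p$ stacked blocks, so $\norm{\delta_k}_2\le \sqrt{p}\,C_\eta\sup_{i\le k-1}\norm{e_i}_2$, producing the claimed $\Delta=O(d^{\kappa_{\max}-1}\|a\|_1\sqrt{p})$.

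The main technical obstacle is the sharpness of the polynomial factor in $d$: the double sum $\sum_j |a_j|\sum_{i=0}^{j-1}\norm{A^{j-1-i}}_2$ is naively bounded by $\|a\|_1 d^{\kappa_{\max}}$, and extracting the slightly tighter $d^{\kappa_{\max}-1}$ in the statement requires the precise Jordan-form estimate $\norm{A^m}_2\le M_\kappa\binom{m+\kappa_{\max}-1}{\kappa_{\max}-1}$ (an $O(\rho(A)^m)$-type bound degenerates when $\rho(A)=1$), together with a careful accounting of which $a_j$ are nonzero and how the inner sums telescope. The rest of the argument is essentially algebraic manipulation of the innovation recursion and the Cayley--Hamilton identity, with no probabilistic content beyond the already-established structure of $e_k$.
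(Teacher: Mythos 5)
Your proposal is correct and follows essentially the same route as the paper: Proposition~\ref{ARMA_PROP_ARMA_form} there derives the per-output identity $y_k=a_{d-1}y_{k-1}+\dots+a_0y_{k-d}+\sum_{s=0}^{d}M_se_{k-s}$ from the minimal polynomial applied to the unrolled innovation recursion, then stacks it over the $p$ blocks of $Z_k$ and bounds $\norm{\delta_k}_2\le(d+1)\max_{0\le s\le d}\norm{M_s}_2\sqrt{p}\,\sup_{i\le k-1}\norm{e_i}_2$ using the Jordan-form growth of $\norm{A^i}_2$, exactly as you sketch. Your worry about extracting $d^{\kappa_{\max}-1}$ rather than $d^{\kappa_{\max}}$ is legitimate but is not resolved any more carefully in the paper: its proof invokes $\norm{A^{i-1}}_2=O(i^{\kappa_{\max}-2})$, which is off by one against its own estimate $\max_{0\le i\le d}\norm{A^i}_2=O(d^{\kappa_{\max}-1})$, so the honest output of both arguments is the $O(d^{\kappa_{\max}}\norm{a}_1\sqrt{p})$ you describe.
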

Intuitively, the unbounded components of $Z_{k}$ are captured by the recent history $Z_{k-1}$, $\dots$, $Z_{k-d}$ and the residual $\delta_k$ is bounded.
Replacing $Z_k$ with~\eqref{OUT_EQN_ARMA} we obtain by two Cauchy-Schwarz inequality applications: \begin{align*}
\norm{\bar{V}^{-1/2}_{k-1}Z_{k}}^2_2\le 2\norm{a}^2_2\sum_{i=0}^{d-1}\norm{\bar{V}^{-1/2}_{k-1}Z_{k-d+i}}^2_{2}+2\norm{\bar{V}^{-1/2}_{k-1}\delta_k}^2_2.
\end{align*}
The terms $\bar{V}^{-1/2}_{k-1}Z_{k-d+i}$ in the sum are now indeed normalized and can be bounded using Lemma~\ref{OUT_LEM_self_normalization}. For $\bar{V}^{-1/2}_{k-1}\delta_k$ we exploit a new persistency of excitation result.
\begin{lemma}[Uniform Persistency of Excitation]\label{Lemma_PE}
	Consider the conditions of Theorem~\ref{OUT_THM_Main_Bound}. Select a failure probability $\delta>0$. Let $T=2^{i-1}T_{\text{init}}$ for some fixed epoch $i$ with $p=\beta \log T$ the corresponding past horizon.
	There exists a $N_0=\mathrm{poly}(n,\beta,\kappa,\log 1/\delta)$ such that if $T\ge N_0$, then with probability at least $1-\delta$:
	\begin{align}
\sum_{j=p}^{k}Z_{j}Z^*_{j}\succeq \frac{k-p+1}{4}\sigma_{\min}(R)I,
	\end{align}
 uniformly for all $T\le k\le 2T-1$.
\end{lemma}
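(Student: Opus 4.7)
The plan is to combine a conditional Gaussian small-ball lower bound on $Z_{j,p}$ with a matrix-martingale concentration inequality in the spirit of \cite{simchowitz2018learning,tsiamis2019finite}, adapted to the partially observed innovation representation of the Kalman filter. Uniformity over $k\in[T,2T-1]$ will follow from monotonicity of the partial sums together with a union bound.

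\textbf{Decomposition.} Iterating the innovation form~\eqref{FOR_EQN_System_Innovation} $p$ steps into the past, I would write
\[
Z_{j,p}\,=\,\mathcal{T}\,E_j\,+\,\mathcal{O}_p\,\hat{x}_{j-p},
\]
where $E_j=(e_{j-p}^*,\dots,e_{j-1}^*)^*$ stacks the Kalman innovations, $\mathcal{T}$ is the block lower-triangular matrix with identity diagonal blocks and subdiagonal blocks $CA^{\ell-1-s}K$, and $\mathcal{O}_p$ is the $p$-step observability matrix~\eqref{FOR_EQN_Observability_Matrix}. By the orthogonality of innovations, $\hat{x}_{j-p}$ is $\F_{j-p-1}$-measurable while $E_j\mid\F_{j-p-1}$ is Gaussian with covariance $I_p\otimes R_e$, where $R_e=CPC^*+R\succeq R$.

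\textbf{Small-ball and matrix concentration.} For any unit $v\in\R^{pm}$, the scalar $v^*Z_{j,p}$ is Gaussian conditional on $\F_{j-p-1}$ with conditional variance $v^*\mathcal{T}(I_p\otimes R_e)\mathcal{T}^*v$. Exploiting that $\mathcal{T}$ has identity diagonal blocks, I would lower bound this conditional variance by a constant multiple of $\sigma_{\min}(R)$, which combined with Gaussian anti-concentration delivers a block-martingale-small-ball (BMSB) condition
\[
\P\!\left[\,|v^*Z_{j,p}|^2\,\ge\,c\,\sigma_{\min}(R)\,\big|\,\F_{j-p-1}\right]\,\ge\,q
\]
for constants $c,q>0$ independent of $j,k,p$. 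Applying the matrix BMSB theorem of \cite{simchowitz2018learning} to the adapted sequence $(Z_{j,p})$ with filtration $(\F_{j-p-1})_j$ then gives the claimed lower bound at a fixed $k$ with failure probability $\le\delta/(2T)$, provided $k-p\ge N_0=\mathrm{poly}(n,\beta,\kappa,\log 1/\delta)$. A union bound over $k\in[T,2T-1]$ upgrades this to the required uniform statement.

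\textbf{Main obstacle.} The delicate step is obtaining a $p$-independent constant in the small-ball lower bound. Because $\|A^t\|$ can grow like $t^{\kappa_{\max}}$ for non-explosive systems, the off-diagonal blocks of $\mathcal{T}$ grow polynomially in $p$, and the naive estimate $v^*\mathcal{T}(I_p\otimes R_e)\mathcal{T}^*v\ge\sigma_{\min}(R)\sigma_{\min}(\mathcal{T})^2\|v\|^2$ loses $\mathrm{poly}(p)$ factors. Instead, I would exploit the block-triangular structure directly: for each direction $v$, identify the ``deepest'' block (reading from the bottom) and extract from the corresponding fresh innovation---which enters its observation with coefficient $I$ and cannot be cancelled by later terms---a contribution of variance at least a constant multiple of $\sigma_{\min}(R)$. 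Orchestrating this triangular reduction so that the final constant is genuinely independent of the (logarithmically growing) past horizon $p$ is the core technical challenge, and the polynomial factors in $p$ and $n$ that arise along the way are what produce the burn-in $N_0$.
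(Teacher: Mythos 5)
Your route is genuinely different from the paper's. You use the same decomposition $Z_{j}=\O_p\hat{x}_{j-p}+\T_pE_j$ that the paper uses (see~\eqref{DEF_EQN_Past_Outputs_Toeplitz}), but the paper then argues directly on the Gram matrix: Lemma~\ref{NoisePE_LEM_Noise_PE} shows $\T_p\bar{E}_k\bar{E}_k^*\T_p^*\succeq\frac{k-p+1}{2}\Sigma_E\succeq\frac{k-p+1}{2}\sigma_RI$ via an $\epsilon$-net plus Hanson--Wright concentration for the Toeplitz matrix of i.i.d.\ innovations (Lemma~\ref{STAT_LEM_Toeplitz}), and Lemma~\ref{OutputPE_LEM_PE} shows the cross term $\O_p\bar{X}_k\bar{E}_k^*\T_p^*$ is dominated by the squared terms using the self-normalized martingale bound of Theorem~\ref{MART_THM_Vector}; uniformity then comes from a union bound with per-$k$ confidence $\delta/k^2$, exactly as you propose. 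Your block-martingale small-ball argument would, if completed, also deliver the statement (with a different absolute constant in place of $1/4$, and with the block length $\ell=O(p)=O(\log T)$ entering only the burn-in $N_0$, both of which are harmless downstream), and it has the advantage of not needing an explicit cross-term domination step. The paper's choice of the $\epsilon$-net/Hanson--Wright route is deliberate: it keeps the burn-in's dependence on $\log(1/\delta)$ mild enough ($p\log(1/\delta)$, see $k_1$ in~\eqref{FIXED_EQN_Index_Functions}) that the $\delta/k^2$ union bound still resolves to $N_0=\mathrm{poly}(\cdot)$; the BMSB threshold has the same character, so this is not an obstruction for you.

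The genuine gap is precisely the step you flag as the ``main obstacle,'' and your proposed resolution does not work as described. The ``deepest block'' extraction only yields $v^*\T_p(I\otimes\bar{R})\T_p^*v\ge\sigma_{\min}(\bar{R})\norm{v_{s^*}}_2^2$, where $s^*$ is the \emph{last} nonzero block of $v$ (only there does the identity diagonal enter without contamination from later innovations); but $\norm{v_{s^*}}_2$ can be an arbitrarily small fraction of $\norm{v}_2$ for a unit vector whose mass sits in earlier blocks, so this gives no $p$-independent (indeed, no positive uniform) lower bound. The correct fix uses the triangular structure differently: $\T_p$ is block lower triangular with identity diagonal blocks, hence invertible, and $\T_p^{-1}$ is the block Toeplitz operator of the whitening filter, generated by $I,-CK,-C(A-KC)K,\dots,-C(A-KC)^{p-2}K$. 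By stability of $A-KC$ (Proposition~\ref{FOR_PROP_KFStab} and Assumption~\ref{ASS_Bounded_Responses}) and Lemma~\ref{ALG_LEM_TOEPLITZ}, $\snorm{\T_p^{-1}}_2\le1+\norm{C}_2\norm{K}_2\sum_{t\ge0}M\rho(A-KC)^t=O(1)$ uniformly in $p$, so
\[
v^*\T_p(I\otimes\bar{R})\T_p^*v\;\ge\;\sigma_{\min}(\bar{R})\,\snorm{\T_p^*v}_2^2\;\ge\;\frac{\sigma_{\min}(\bar{R})}{\snorm{\T_p^{-1}}_2^2}\,\norm{v}_2^2,
\]
a $p$-independent constant multiple of $\sigma_{\min}(R)$ since $\bar{R}=CPC^*+R\succeq R$. (The paper sidesteps this entirely by invoking $\Sigma_E\succeq\sigma_RI$ from~\cite{tsiamis2019finite}.) Separately, be careful with the filtration: $Z_{j}$ is not measurable with respect to $\F_{j-p-1}$, so you cannot literally run BMSB against the shifted filtration; instead keep the natural filtration, take block length $\ell\ge 2p$, and verify the small-ball inequality for the indices $i$ in the second half of each block, for which $E_{j+i}$ is independent of $\F_j$ and the $\hat{x}_{j+i-p}$ term only adds conditional variance. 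With these two repairs the argument goes through.
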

The above persistency of excitation condition holds uniformly over all times $k$ as long as $k\ge N_0$. This is why the burn-in time $N_0$ appears in~Theorem~\ref{OUT_THM_Main_Bound}; if $k$ is very small, then matrix $\sum_{j=p}^{k}Z_{j}Z^*_{j}$ is not even invertible.  A similar persistency of excitation result  was proved in~\cite{tsiamis2019finite} for a fixed time $k$. However, the result of Lemma~\ref{Lemma_PE} is more general since it holds for all $k$.

\textbf{Regularization and Truncation terms.} For the regularization term we follow the same steps as with the regression one. Since matrix $A-KC$ is stable, the truncation term decreases exponentially fast with $p$.  System quantity $\kappa$ governs how fast the observations grow polynomially. Parameter $\beta$ should be large enough cancel out this polynomial rate. This explains why $\kappa$ affects the choice of $\beta$ in~\eqref{OUT_EQN_beta}.

\textbf{Stable Systems.}
If $\rho(A)<1$, then we can exploit the fact that $Z_k$ converges exponentially fast to a stationary distribution. Hence the term $\bar{V}^{-1/2}_{k-1}Z_k$ will effectively be self-normalized, without the need to express $Z_k$ as a function of the past observations. 
In particular, for stable systems we prove a new and stronger persistency of excitation result. Denote:
\[
\Gamma_{Z,k}\triangleq \mathbb{E} Z_{k}Z_{k}^*
\]
Then, we have the following.
\begin{lemma}[Uniform Persistency of Excitation: Stable case]\label{Lemma_PE_stable}
	Consider the conditions of Theorem~\ref{STABLE_THM_Main_Bound}. Select a failure probability $\delta>0$. Let $T=2^{i-1}T_{\text{init}}$ for some fixed epoch $i$ with $p=\beta \log T$ the corresponding past horizon.
	There exists a $N_0=\mathrm{poly}(n,\beta,\log 1/\rho(A),\log 1/\delta)$ such that if $T\ge N_0$, with probability at least $1-\delta$:
	\begin{align}
	\sum_{j=p}^{k-1}Z_{j}Z^*_{j}\succeq \frac{k-p+1}{32}\Gamma_{Z,k},
	\end{align}
	uniformly for all $T\le k\le 2T-1$.
\end{lemma}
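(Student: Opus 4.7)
The plan is to reduce the uniform-in-$k$ PSD lower bound to two ingredients: (i) a single-time concentration bound for the Gram matrix of dependent Gaussians, and (ii) a discretization / monotonicity argument to upgrade from a grid to all $k$ in $[T, 2T-1]$. The key enabling fact for stable systems is exponential mixing: since $\rho(A)<1$, the process $Z_k$ converges to a stationary Gaussian process with covariance $\Gamma_{Z,\infty}$, and $\snorm{\Gamma_{Z,k}-\Gamma_{Z,\infty}}_2 \le c_1\,\rho(A)^{2(k-p)}$. For $T \ge N_0 = \mathrm{poly}(n,\beta,\log 1/\rho(A),\log 1/\delta)$ this implies $\tfrac{1}{2}\Gamma_{Z,\infty} \preceq \Gamma_{Z,k} \preceq 2\Gamma_{Z,\infty}$ uniformly on the epoch.

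First I would prove the single-time bound: for a fixed reference time $k_\ell$ with $k_\ell - p$ sufficiently large,
\[
\sum_{j=p}^{k_\ell-1} Z_j Z_j^* \;\succeq\; \tfrac{k_\ell-p}{8}\,\Gamma_{Z,\infty}
\]
with failure probability at most $\delta'$. Writing $Z_j$ as a linear functional of the Gaussian innovations plus a transient term from the initial Kalman state, the Gram matrix is a quadratic form in a Gaussian vector. After whitening by $\Gamma_{Z,\infty}^{-1/2}$, which is well-defined since $(A,K)$ is controllable and $R\succ 0$ forces $\Gamma_{Z,\infty}\succ 0$, one applies a matrix Bernstein or Hanson--Wright-type inequality for dependent Gaussians (in the style of the persistency-of-excitation lemma of Tsiamis--Pappas 2019) to show that the normalized Gram matrix concentrates around the identity. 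The required sample size is $\mathrm{poly}(n,\log 1/\rho(A),\log 1/\delta')$, crucially depending on $\log 1/\rho(A)$ rather than on the stability gap $1/(1-\rho(A))$.

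To obtain uniformity over $k \in [T,2T-1]$, I would introduce a logarithmic grid $T=k_0<k_1<\dots<k_L=2T-1$ with $k_{\ell+1}/k_\ell \le 1+\epsilon$ for a small absolute constant $\epsilon$, so $L=O(\log T)$. Applying the single-time bound to each grid point with failure probability $\delta/L$ and taking a union bound, monotonicity of the partial sums in the PSD order yields, for any $k_\ell \le k < k_{\ell+1}$,
\[
\sum_{j=p}^{k-1} Z_j Z_j^* \;\succeq\; \sum_{j=p}^{k_\ell-1} Z_j Z_j^* \;\succeq\; \tfrac{k_\ell-p}{8}\,\Gamma_{Z,\infty} \;\succeq\; \tfrac{k-p+1}{32}\,\Gamma_{Z,k},
\]
where the last step uses $k_\ell \ge (1+\epsilon)^{-1}k$ and $\Gamma_{Z,\infty}\succeq \tfrac{1}{2}\Gamma_{Z,k}$; the $\log L$ cost is absorbed into $N_0$. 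The main obstacle I anticipate is the single-time step: establishing a sharp PSD lower bound for a Gram matrix of dependent Gaussians without incurring a $1/(1-\rho(A))$ mixing penalty, which will require a conditioning-on-recent-innovations argument rather than a naive mixing-time decomposition. The discretization and covariance-stationarity estimates are comparatively routine once exponential mixing of $\Gamma_{Z,k}$ is in hand.
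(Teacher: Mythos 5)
Your outline matches the architecture of the paper's proof (Lemma~\ref{STABLE_UniformPE_LEM}): use $\rho(A)<1$ to get $\Gamma_{Z,k}$ within a constant factor of its limit after a mixing time $\tau=\tilde{O}\bigl(1/\log(1/\rho(A))\bigr)$, prove a pointwise PSD lower bound on the Gram matrix, and union-bound over the epoch. Your discretization step is fine (and in fact unnecessary: the paper simply union-bounds over every $k\in\{T,\dots,2T-1\}$ with failure probability $\delta/k^2$, paying only a $\log k$ increase in the pointwise sample requirement, and uses $\Gamma_{Z,\infty}\succeq\Gamma_{Z,k}$ directly by monotonicity). The problem is that the entire content of the lemma lives in the single-time step, and there you write only that "one applies a matrix Bernstein or Hanson--Wright-type inequality for dependent Gaussians," while simultaneously conceding that the naive version of that argument is exactly the obstacle. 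Naming the obstacle is not the same as overcoming it: as written, the proposal has a genuine gap at its load-bearing step, because whitening by $\Gamma_{Z,\infty}^{-1/2}$ does not make the sequence $\{Z_j\}$ independent, and you never specify the covariance structure that the concentration inequality is supposed to exploit, nor why the resulting variance proxy and net argument over $\S^{mp-1}$ yield a burn-in that is $\mathrm{poly}(n,\beta,\log 1/\delta)$ times the mixing time rather than something worse.

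For reference, the paper fills this gap with an explicit conditioning decomposition rather than a generic dependent-Gaussian inequality: $Z_j=\O_p\hat{x}_{j-p}+\T_pE_j$, where the recent-noise part $\T_p\bar{E}_k$ is a block-Toeplitz array of i.i.d.\ innovations whose Gram concentrates around $(k-p+1)\Sigma_E$ with variance proxy $p$ (Lemma~\ref{STAT_LEM_Toeplitz}, a net argument plus Hanson--Wright where the Toeplitz covariance~\eqref{NoisePE_EQN_Toeplitz} has norm at most $p$); the cross terms $\O_p\bar{X}_k\bar{E}_k^*\T_p^*$ are self-normalized martingales bounded by $\log\det$ terms and shown to be dominated (Lemma~\ref{OutputPE_LEM_PE}); and the state Gram $\bar{X}_k\bar{X}_k^*$ is lower-bounded by $\frac{k-p+1}{16}\Gamma_\infty$ by running the identical argument once more on $\hat{x}_t=A^\tau\hat{x}_{t-\tau}+\C_\tau U_t$ (Lemma~\ref{STABLE_LEM_PE_Pontwise}). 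Combining the two pieces gives $\sum_j Z_jZ_j^*\succeq\frac{k-p+1}{32}\bigl(\O_p\Gamma_\infty\O_p^*+\Sigma_E\bigr)\succeq\frac{k-p+1}{32}\Gamma_{Z,k+1}$. If you want to salvage your more direct whitened-Gram route, you would need to compute the covariance of $\bigl(v^*\Gamma_{Z,\infty}^{-1/2}Z_j\bigr)_{j=p}^{k}$ for a fixed direction $v$, bound its operator norm by $O\bigl(p+1/\log(1/\rho(A))\bigr)$ using the exponential decay of the state correlations, and then run the $\epsilon$-net argument; that is a viable alternative for the stable case, but it must actually be written down.
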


Hence, the term $\bar{V}^{-1/2}_{k-1}Z_k$ can be bounded by:
\[
\norm{\bar{V}^{-1/2}_{k-1}Z_k}_2\le O(\frac{1}{\sqrt{k-p+1}})\norm{\Gamma^{-1/2}_{Z,k}Z_k}_2
\]
where now the normalized term $\Gamma^{-1/2}_{Z,k}Z_k$ behaves like a standard isotropic Gaussian variable.
The term $\log 1/\rho(A)$ is due to the fact that it takes $O(\log 1/\rho(A))$ time for the state to approach the stationary distribution (mixing time).

\section{Extensions}\label{Section_Extensions}
In this section, we discuss generalizations of Algorithm~\ref{OUT_ALG_Output} and the regret analysis.
\paragraph{Alternative regret definition}
Denote the system responses by $g_{t}=C(A-KC)^{t-1}K$, for $t\ge 0$. Let $g\triangleq\set{g_t,t\ge 0}$ be the sequence of system responses. Then, a parameterization for online prediction could be
\[
\bar{y}^g_k=g_1y_{k-1}+\dots+g_{k}y_{0},\text{ for all }k\ge 0.
\]
Let $\mathcal{G}_{\rho,L}\triangleq\set{g:\:\snorm{g_t}_2\le L\rho^{t}}$ be the set of system responses which decay exponentially for some $L$ and $\rho<1$, which are larger than $M,\rho(A-KC)$ in Assumption~\ref{ASS_Bounded_Responses}. This set can include for example, stable IIR filters or FIR filters. 
Then, an alternative regret definition would be:
\begin{equation}\label{EXT_Online_Regret}
\tilde{\Reg}_N\triangleq 	\sum^{N}_{k=1}\snorm{y_k-\tilde{y}_k}^2-\inf_{g\in\mathcal{G}_{\rho,L}}\sum^{N}_{k=1}\snorm{y_k-\bar{y}^g_k}^2.
\end{equation}
The above definition captures the one in~\cite{kozdoba2019line}, where the unknown system lies in a finite set.

Since the observations increase at most polynomially fast and due to the properties of the Kalman filter, we can show that the difference $\tilde{\Reg}_N-\Reg_N$  depends on logarithmic terms of $N$. Hence our definition~\eqref{FOR_EQN_Regret_Output}, which does not require any model restriction is general. The details can be found in the Appendix.

\paragraph{$f$-steps ahead predictor}
An immediate generalization of Algorithm~\ref{OUT_ALG_Output} is to consider the $f-$steps ahead predictor, where $f$ is some future horizon. Instead of predicting only the next observation, we predict the sequence $y_{k}$, $\dots$, $y_{k+f-1}$. 
Denote the future observations and noises by:
\begin{align*}
Y_{k}&=\matr{{ccc}y_k^*&\cdots&y_{k+f-1}^*}^*\\
E^{+}_{k}&=\matr{{ccc}e_k^*&\cdots&e_{k+f-1}^*}^*.
\end{align*}
Similar to~\eqref{OUT_EQN_future_past_output}, we can establish a regression:
\begin{align*}
Y_k&=\O_f\K_p Z_k+\O_f(A-KC)^p\hat{x}_{k-p}+\T_f E^{+}_k
\end{align*}
where 
$
\K_p\triangleq\matr{{ccc}(A-KC)^{p-1}K&\cdots&K},
$ and
$\T_f$ is a lower triangular block Toeplitz matrix generated by $I,CK,\dots,CA^{f-2}K$.
The optimal Kalman filter predictor in this case is
\[
\hat{Y}_k=\O_f\K_p Z_k+\O_f(A-KC)^p\hat{x}_{k-p}
\]
Hence, the regret can be defined as in~\eqref{FOR_EQN_Regret_Output}, with the lowercase $y$ replaced with uppercase $Y$.
The online predictor~\eqref{OUT_EQN_Y_prediction} can be adapted here:
\[
\tilde{Y}_k=\tilde{G}_{k,f,p} Z_k,
\]
where $\tilde{G}_{k,f,p}$ is obtained similar to~\eqref{OUT_EQN_G_estimate} by regressing future observations $Y_t$ to past observations $Z_t$ from time $p$ up to $k-f$.
 The logarithmic regret guarantees of $\tilde{O}(1)$ also hold with the final bound depending polynomially on $f$ and $\norm{\T_f}_2$. 
\paragraph{State prediction}
If we have some knowledge about the state, e.g. the state space basis and the state space dimension $n$, then we can use the $f-$step ahead predictor to predict the hidden state $\hat{x}_k$. 
Notice that the Kalman filter state prediction $\hat{x}_k$ can be rewritten as:
\[
\hat{x}_k=\K_pZ_k+(A-KC)^p\hat{x}_{k-p}=\O^{\dagger}_f\hat{Y}_k
\]
If we know $\O_f$ and the future horizon is large enough $f\ge n$ we can compute the state prediction:
\[
\tilde{x}_k=\O^{\dagger}_f\tilde{Y}_k,
\]
where $\tilde{Y}_k$ is our $f-$step ahead prediction and $\dagger$ denotes the pseudo-inverse. In this case the regret:
\begin{equation}\label{EXT_EQN_State_Regret}
\Reg_{x,N}\triangleq \sum^{N}_{k=1}\snorm{x_k-\tilde{x}_k}^2-\sum^{N}_{k=1}\snorm{x_k-\hat{x}_k}^2
\end{equation}
will enjoy the same logarithmic guarantees.
Hence, our algorithm can be used to solve the adaptive Kalman filter problem posed in~\cite{mehra1970identification,anderson2005optimal}, where the dynamics $A,C$ are known but the noise statistics $Q,R$ are unknown, with logarithmic regret.

If we do not know $A,C$, then we could estimate the range space of $\O_f$ by performing singular value decomposition on $\tilde{G}_{k,f,p}$. However, there are infinite representations $\O_fS$, for any invertible $S$, all of which can explain the same observations. The definition~\eqref{EXT_EQN_State_Regret} is ill-posed since $x_k$ and $\tilde{x}_k$ might be based on different transformations $S$. Finding an alternative regret definition is subject of future work.
\paragraph{Logarithmic memory}
It is possible to achieve the logarithmic regret guarantees with logarithmic $O(\log N)$ memory, by modifying the initialization step in the beginning $T$ of every epoch $i$ in Algorithm~\ref{OUT_ALG_Output}. 
For stable systems, we could just reset $\tilde{G}_{T-1}$ and $\bar{V}_{T-1}$ to zero and $\lambda I$ respectively. 
This might not work for non-explosive systems, since $Z_{T,p}$ can be polynomially large in $T$. In this case, based on the regret analysis, we could initialize $\tilde{G}_{T-1},\bar{V}_{T-1}$ with the recent history $Z_{T-1,p},\dots,Z_{T-\hat{d},p}$ and $y_{T-1},\dots,y_{T-\hat{d}}$, where $\hat{d}$ is an upper bound for the degree of the minimal polynomial $a$. This gives us control over $\bar{V}_{T-1}^{-1/2}Z_{T,p}$--see Section~\ref{Section_Analysis}, and requires  only $O(\log T)$ memory.

\section{Conclusion and Future Work}\label{Section_Discussion}
In this paper, we provided the first logarithmic regret upper bounds for learning the classical Kalman filter of an unknown system with unknown stochastic noise. Our regret analysis holds for non-explosive systems and our bounds do not degrade with the system stability gap.

Going forward, our paper opens up several research directions. An open question that is whether we can define an appropriate regret notion in the case of state prediction, when matrices $A,C$ are unknown, and prove logarithmic bounds. 
Another interesting direction is to study how the learning performance is affected by system theoretic properties, such as  the exponential quantity $d^{\kappa}$ in the case of systems with long chain structure, e.g.  $\kappa$-order integrators. 
Analyzing the regret of other online algorithms, e.g. extended least squares, is also an open problem.
Another challenging problem for both prediction and system identification is the case of explosive systems. Although in the fully observed case, this problem has been studied~\citep{faradonbeh2018finite,sarkar2018fast}, it remains open in the case of partially observable systems.
Finally, in this work we considered that the state is only driven by stochastic noise. A more general problem to study is when we also have exogenous inputs. One of the challenges is that it is harder to prove persistency of excitation in the case of closed-loop systems.

\section*{Acknowledgments}
The authors would like to thank Nikolai Matni for useful discussions.
\bibliography{Regret}
\bibliographystyle{plainnat}

\appendix
\counterwithin{lemma}{section}
\counterwithin{theorem}{section}
\counterwithin{proposition}{section}
\counterwithin{corollary}{section}
\counterwithin{definition}{section}
\counterwithin{equation}{section}
\onecolumn
\newpage
\tableofcontents
\newpage
\section*{Appendix}
\addcontentsline{toc}{part}{Appendix}
\section{Notation and organization of the Appendix}
	\paragraph{Structure.} In the first Sections~\ref{APP_Section_Lin_Sys},~\ref{APP_Section_Stat} we review results from system theory and statistics. These include the main tools with which we will prove Theorems~\ref{OUT_THM_Main_Bound},~\ref{STABLE_THM_Main_Bound}. In Section~\ref{APP_Section_Fixed}, we provide PAC bounds and persistency of excitation (PE) results for a fixed time $k$ (pointwise) and fixed past horizon $p$. In Section~\ref{APP_Section_Uniform}, we generalize those PAC bounds and PE results from pointwise to uniform over all times $k$ in one epoch. In Section~\ref{APP_Section_Normalized} we prove Lemma~\ref{OUT_LEM_self_normalization}. By combining the uniform bounds and Lemma~\ref{OUT_LEM_self_normalization}, we prove in Section~\ref{APP_Section_Epoch} that the square loss suffered within one epoch is logarithmic with respect the length of the epoch. Hence, we can now prove Theorem~\ref{OUT_THM_Main_Bound}--see Section~\ref{APP_Section_TH1}. In Section~\ref{APP_Section_Stable}, we analyze the case of stable systems and prove Theorem~\ref{STABLE_THM_Main_Bound}. Finally, in Section~\ref{APP_Section_ALT}, we show how the alternative regret definition~\eqref{EXT_Online_Regret} is equivalent to ours~\eqref{FOR_EQN_Regret_Output} up to logarithmic terms.
Section~\ref{APP_Section_Log} includes some technical results about logarithmic inequalities, which are used to show that the burn-in time $N_0$ depends polynomially on the various system parameters.
\paragraph{Notation.}
Before we proceed with the regret analysis, let us introduce some notation. A summary can be found in Table~\ref{FIXED_Table_notation}
We will analyze the performance of Algorithm~\ref{OUT_ALG_Output} based mainly on a fixed epoch $i$. Since the past horizon $p$ is kept constant during an epoch, we will drop the index $p$ from $Z_{k,p}$, $G_{p}$, $\tilde{G}_{k,p}$, $\bar{V}_{k,p}$ and write $Z_{k}$, $G$, $\tilde{G}_{k}$, $\bar{V}_{k}$ instead. 
Similar to the past outputs $Z_k$, we also define the past noises:
\begin{equation}\label{DEF_EQN_Past_Noises}
E_k\triangleq\matr{{ccc}e^*_{t-p}&\cdots&e^{*}_{t-1}}^*
\end{equation}
The batch past outputs, batch past noises, and batch past Kalman filter states are defined as:
\begin{equation}\label{DEF_EQN_Batch_vectors}
\bar{Z}_k\triangleq\matr{{ccc}Z_{p}&\cdots&Z_{k}},\, \bar{E}_k\triangleq\matr{{ccc}E_{p}&\cdots&E_{k}},\, 
\bar{X}_k\triangleq\matr{{ccc}\hat{x}_{0}&\cdots&\hat{x}_{k-p}}
\end{equation}
This notation will simplify the sums $\sum_{t=p}^{k} Z_tZ^*_t=\bar{Z}_k\bar{Z}^*_k$, $\sum_{t=p}^{k} E_tZ^*_t=\bar{E}_k\bar{Z}_k$ etc.

Recall the definition of the correlations between the current innovation and the past outputs $S_k\triangleq\sum_{t=p}^{k}e_{t}Z^*_t$ and the regularized autocorrelations of past outputs $\bar{V}_{k}\triangleq \lambda I+\bar{Z}_k\bar{Z}^*_k$.
The innovation sequence $e_k$ is i.i.d. zero mean Gaussian. Its covariance has a closed-form expression and is defined as:
\begin{equation}\label{DEF_EQN_Innovation_Covariance}
\bar{R}\triangleq \E e_ke^*_k=CPC^*+R,
\end{equation}
where $P$ is the solution to the Riccati equation~\eqref{FOR_EQN_Riccati}. 
Next we define the Toeplitz matrix $\T_k$, for some $k\ge 1$:
\begin{equation}\label{DEF_EQN_Toeplitz}
\T_k\triangleq \mathrm{Toep}(I,CK,\dots,CA^{k-2}K)=\matr{{cccc}I_m&0& &0\\CK&I_m&\cdots&0\\ \vdots&\vdots& &\vdots \\CA^{k-2}K&CA^{k-3}K&\cdots&I_m}.
\end{equation}

A useful property of system~\eqref{FOR_EQN_System_Innovation} is that the past outputs can be written as:
\begin{equation}\label{DEF_EQN_Past_Outputs_Toeplitz}
Z_t=\O_p\hat{x}_{t-p}+\T_pE_t
\end{equation}
The covariance of $\T_pE_t$ is denoted by:
\begin{equation}\label{DEF_EQN_Sigma_E}
\Sigma_E\triangleq \E \T_pE_tE^*_t\T^*_p=\T_p\diag{\bar{R},\dots,\bar{R}}\T^*_p.
\end{equation}

We define the covariance of the state predictions:
\begin{equation}\label{DEF_EQN_State_Covariance}
\Gamma_{k}\triangleq \E \hat{x}\hat{x}^*_k
\end{equation}
and the covariance of the past outputs:
\begin{equation}\label{DEF_EQN_Past_Outputs_Covariance}
\Gamma_{Z,k}\triangleq \E Z_kZ^*_k=\O_p\Gamma_{k-p}\O^*_p+\Sigma_E.
\end{equation}

Finally, let $A=SJS^{-1}$ be the Jordan form of $A$.
With the big  O notation we also hide parameters like $\norm{C}_2,\norm{K}_2$, $\norm{R}_2,\norm{S}_2,\norm{S^{-1}}_2$ etc.
\begin{table}[h]\caption{Notation table for fixed past horizon $p$}\label{FIXED_Table_notation}
	\begin{center}
		\begin{tabular}{r l c}
			\toprule
			$Z_{t}$&$\triangleq \matr{{ccc}y^*_{t-p}&\cdots&y^{*}_{t-1}}^*$& Past outputs at time $t$\\
			$E_{t}$&$\triangleq\matr{{ccc}e^*_{t-p}&\cdots&e^{*}_{t-1}}^*$& Past noises at time $t$\\
			$\bar{Z}_k$&$\triangleq\matr{{ccc}Z_{p}&\cdots&Z_{k}}$& Batch past outputs up to time $k$\\
			$\bar{E}_k$&$\triangleq\matr{{ccc}E_{p}&\cdots&E_{k}}$ &Batch past noises up to time $k$\\
			$\bar{X}_k$&$\triangleq\matr{{ccc}\hat{x}_{0}&\cdots&\hat{x}_{k-p}}$ &Batch past states up to time $k$\\
			$S_k$&$\triangleq \sum_{t=p}^{k}e_{t}Z^*_t$& Correlation of current noise with past outputs\\
			$V_k$&$\triangleq \bar{Z}_k\bar{Z}^*_k=\sum_{t=p}^{k}Z_{t}Z^*_t$& Gram matrix of past outputs\\
			$\bar{V}_k$&$\triangleq \lambda I+V_k$& Regularized Gram matrix of past outputs\\
			$\bar{R}$&$\triangleq\E e_ke_k^*$& Covariance of innovations\\
			$\T_k$&$\triangleq\mathrm{Toep}\paren{I,CK,\dots,CA^{k-2}K}$& See~\eqref{DEF_EQN_Toeplitz}, Toeplitz matrix of $I_m$ and system responses $CA^tK$\\
			$\Sigma_{E}$&$\triangleq\E \T_pE_t E_t^*T^*_p$ & Covariance of weighted past noises\\
			$\sigma_R$&$\triangleq\sigma_{\min}(R)$& Smallest singular value of $R$\\
			$\Gamma_{t}$&$\triangleq\E \hat{x}_t\hat{x}^*_t$ & Covariance of Kalman filter state prediction\\
			$\Gamma_{Z,t}$&$\triangleq\E Z_t Z_t^*$ & Covariance of past outputs\\ 
			$\tilde{G}_k$&$\triangleq (\sum_{t=p}^{k}y_tZ^*_t) \bar{V}^{-1}_k$& Estimated responses\\
			$G$&$\triangleq \matr{{ccc}C(A-KC)^{p-1}K&\dots&CK}$& Kalman filter responses\\
			\bottomrule
		\end{tabular}
	\end{center}
\end{table}

\section{Linear Systems Theory}\label{APP_Section_Lin_Sys}
\subsection{Bounds for system matrices}
Next, we provide a bound for the powers of $A$.
\begin{lemma}
	Consider matrix $A$ with Jordan form $SJS^{-1}$. Let $\kappa$ be the largest Jordan block of $A$ corresponding to a unit circle eigenvalue $\abs{\lambda}=1$. Let $\kappa_{\max}$ be the largest Jordan block among all eigenvalues.
	Then:
	\begin{enumerate}
				\item If the system is asymptotically stable $\rho(A)<1$, then $\norm{A^{i}}_2=O\paren{1}$,
		$S_t=\sum_{i=0}^{t}\norm{A^i}_2=O\paren{1}$, $\snorm{\T_t}_2=O(1)$, $\snorm{\O_t}_2=O(1)$, $\snorm{\Gamma_{t}}_2=O(1)$.
	\item If the system is non-explosive then $\norm{A^{i}}_2=O\paren{i^{\kappa-1}}$,
	$S_t=\sum_{i=0}^{t}\norm{A^i}_2=O\paren{t^{\kappa}}$, $\snorm{\O_t}_2=O\paren{t^{\kappa}}$, $\snorm{\T_t}_2=O\paren{t^{\kappa}}$ and $\snorm{\Gamma_{t}}_2=O\paren{t^{2\kappa-1}}$.
	\item For both cases
	\[
	\max_{0\le i\le d}\norm{A^i}_2\le O(d^{\kappa_{\max}-1})
	\]
\end{enumerate}
\end{lemma}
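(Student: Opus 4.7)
The plan is to reduce everything to bounds on powers of a single Jordan block, then derive the remaining quantities as simple consequences. Write $A = SJS^{-1}$ so that $\|A^i\|_2 \le \|S\|_2 \|S^{-1}\|_2 \|J^i\|_2$, and the eigenvalue/block structure is made transparent. For a Jordan block $J_\lambda$ of size $k$ with nilpotent part $N$, the identity
\[
J_\lambda^i \;=\; \sum_{j=0}^{k-1} \binom{i}{j}\lambda^{i-j} N^j
\]
yields $\|J_\lambda^i\|_2 \le \sum_{j=0}^{k-1}\binom{i}{j}|\lambda|^{i-j}$. This single estimate drives all three parts.

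For part 1 (stable), every $|\lambda|<1$, so the geometric decay dominates the polynomial factor $\binom{i}{j}$, giving a uniform bound $\|A^i\|_2 \le C$ and hence $S_t = O(1)$ as a convergent geometric series. For part 2 (non-explosive), split the spectrum into the unit-circle part and the strictly stable part. On the unit circle, $|\lambda|^{i-j}=1$ and the block size is at most $\kappa$, producing $\|J_\lambda^i\|_2 = O(i^{\kappa-1})$; strictly stable blocks remain bounded as above. Summing gives $\|A^i\|_2 = O(i^{\kappa-1})$ and $S_t = \sum_{i=0}^{t} O(i^{\kappa-1}) = O(t^\kappa)$. For part 3, simply use $|\lambda|^{i-j}\le 1$ (valid for every eigenvalue since $\rho(A)\le 1$) and $\binom{i}{j}\le i^{\kappa_{\max}-1}$ for $j\le \kappa_{\max}-1$, giving $\|A^i\|_2 = O(i^{\kappa_{\max}-1})$ uniformly; restricting to $i\le d$ yields the claim.

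The remaining quantities follow mechanically from the bound on $\|A^i\|_2$. For the observability matrix, stacking rows gives $\|\mathcal{O}_t\|_2 \le \sum_{i=0}^{t-1}\|C\|_2\|A^i\|_2 = \|C\|_2 S_{t-1}$. For the Toeplitz matrix $\mathcal{T}_t$, the spectral norm is bounded by the sum of the norms of its block diagonals, yielding $\|\mathcal{T}_t\|_2 \le 1 + \|C\|_2\|K\|_2\, S_{t-2}$. For the state covariance, unrolling the recursion $\hat{x}_{k+1}=A\hat{x}_k+Ke_k$ with $\hat{x}_0=0$ gives $\hat{x}_t = \sum_{i=0}^{t-1} A^i K e_{t-1-i}$, so the orthogonality of innovations yields $\Gamma_t = \sum_{i=0}^{t-1} A^i K\bar{R}K^* (A^*)^i$ and therefore $\|\Gamma_t\|_2 \le \|K\bar{R}K^*\|_2 \sum_{i=0}^{t-1}\|A^i\|_2^2$. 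In the stable case this sum converges, giving $O(1)$; in the non-explosive case it is $\sum_{i=0}^{t-1}O(i^{2\kappa-2}) = O(t^{2\kappa-1})$, matching the claim.

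The only slightly delicate point is keeping $\kappa$ (the largest block on the unit circle) distinct from $\kappa_{\max}$ (the largest block overall): for strictly stable eigenvalues, even a large Jordan block contributes only a bounded factor to $\|A^i\|_2$, so those blocks do not enter the polynomial growth rate in parts 1 and 2; they only appear in part 3, where we have thrown away the geometric decay $|\lambda|^{i-j}$ and pay the worst-case polynomial $i^{\kappa_{\max}-1}$. Beyond this bookkeeping, the proof is essentially a direct calculation, and I do not anticipate any serious obstacle.
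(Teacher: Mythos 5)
Your proof is correct and follows essentially the same route as the paper: Jordan decomposition with the binomial expansion $J_\lambda^i=\sum_j\binom{i}{j}\lambda^{i-j}N^j$, the block-Toeplitz norm bound for $\O_t$ and $\T_t$, and summation for $S_t$ and $\Gamma_t$. The only cosmetic differences are that the paper invokes Gelfand's formula for the stable case where you use the Jordan estimate directly, and that you spell out the multi-block/mixed-spectrum bookkeeping that the paper dismisses as ``similar''; both are fine.
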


\begin{proof}
	
	\textbf{Proof of first part.}
	By Gelfand's formula~\cite{horn2012matrix}, for every $\epsilon>0$, there exists a $i_0=i_0(\epsilon)$ such that $\norm{A^i}\le (\rho(A)+\epsilon)^i$, for all $i\ge i_0$.
	Just pick $\epsilon$ such that $\rho\paren{A}+\epsilon<1$. Then,
	\[
	S_t\le \sum_{i=0}^{i_0} \norm{A^i}_2+\frac{1}{1-\rho(A)-\epsilon}=O\paren{1}.
	\] 
	The proof for the other system quantities is similar.
	
	\textbf{Proof of second part.}
	Assume that $A$ is equal to a $n\times n$ Jordan block corresponding to $\lambda=1$. The proof for the other cases is similar.
	Then we have that:
	\[
	A^{i}=\matr{{cccc}1& {i }\choose {1}&\dots& {i} \choose{ n-1}\\0&1&\dots&{i} \choose{ n-2}\\ & &\ddots &\\0&0&\dots&1}
	\]
	By Lemma~\ref{ALG_LEM_TOEPLITZ}, we obtain:
	\[
	\norm{A^{i}}_2 \le \sum_{k=0}^{n-1} {{i} \choose{ k}}\le \paren{\frac{e i}{n-1}}^{n-1}
	\]
	where the second inequality is classical, see Exercise~0.0.5 in~\cite{vershynin2018high}.
	
	Hence, we have:
	\[
	S_t\le t \paren{\frac{e t}{n-1}}^{n-1}=O\paren{t^n}
	\]
		\[
	\norm{\O_t}_2\le \norm{C}_2 S_{t-1}=O\paren{t^n}.
	\]
	
	By Lemma~\ref{ALG_LEM_TOEPLITZ}
		\[
	\norm{\T_t}_2\le 1+\norm{C}_2\norm{K}_2 S_{t-2}=O\paren{t^n}.
	\]
	Finally
	\[
	\norm{\Gamma_t}_2\le \norm{K\bar{R}K^*}_2\sum_{i=0}^{t-1}\norm{A^i}^2=tO\paren{t^{2n-2}}.
	\]
	
		\textbf{Proof of third part.} In the general case of pairs $(\lambda_j,\kappa_j)$ of eigenvalues and Jordan block sizes, similar to the previous proof:
		\[
		\max_{0\le i\le d}\norm{A^i}_2\le \norm{S}_2\norm{S^{-1}}_2 \max_{j} \abs{\lambda_j^{d-\kappa_j+1}}\paren{\frac{e d}{\kappa_j-1}}^{\kappa_j-1}= O(d^{\kappa_{\max}-1})
		\]
\end{proof}

In the above proofs we used a standard result for the norm of (block) Toeplitz matrices. A proof can be found in~\cite{tsiamis2019finite}.
\begin{lemma}[Toeplitz norm]\label{ALG_LEM_TOEPLITZ}
	Let $M\in \R^{m_1n\times m_2n}$, for some integers $n,m_1,m_2$ be an (upper) block triangular Toeplitz matrix:
	\[
	M=\matr{{cccccc}M_1&M_2&M_3&\cdots&\cdots&M_n\\0&M_1&M_2& & &M_{n-1}\\\vdots & & \ddots & \ddots& &\vdots\\ \\ \\ & & & &M_1&M_2\\0&0&\cdots& &0&M_1},
	\]
	where $M_{i}\in \R^{m_1\times m_2}$, $i=1,\dots,n$.
	Then:
	\[
	\norm{M}_2\le \sum_{i=1}^{n}\norm{M_i}_2
	\]
\end{lemma}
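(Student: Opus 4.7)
The plan is to write $M$ as a sum of $n$ simpler matrices, one for each distinct block $M_i$, and then apply the triangle inequality to the operator norm. Concretely, for $i = 1, \ldots, n$, let $N_i \in \R^{m_1 n \times m_2 n}$ be the block matrix that has $M_i$ in every block position $(j, j+i-1)$ for $j = 1, \ldots, n-i+1$, and zero elsewhere. By inspection $M = \sum_{i=1}^{n} N_i$, so it suffices to show that $\norm{N_i}_2 \le \norm{M_i}_2$ for every $i$.

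To bound $\norm{N_i}_2$, I would argue directly from the definition. Partition an arbitrary input vector as $x = [x_1^*, \ldots, x_n^*]^*$ with $x_j \in \R^{m_2}$. From the placement of the blocks, the $j$-th block of $N_i x$ equals $M_i x_{j+i-1}$ for $j = 1, \ldots, n-i+1$ and is zero for $j > n-i+1$. Consequently
\begin{equation*}
\norm{N_i x}_2^2 \;=\; \sum_{j=1}^{n-i+1} \norm{M_i x_{j+i-1}}_2^2 \;\le\; \norm{M_i}_2^2 \sum_{j=1}^{n-i+1} \norm{x_{j+i-1}}_2^2 \;\le\; \norm{M_i}_2^2 \norm{x}_2^2,
\end{equation*}
which gives $\norm{N_i}_2 \le \norm{M_i}_2$. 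The key structural fact that makes this work is that each block row of $N_i$ contains at most one nonzero block and each block column contains at most one nonzero block, so $N_i$ behaves, after suitable permutation, like a block diagonal matrix with $M_i$ (or zeros) on the diagonal; this is precisely why the overall norm does not accumulate across the superdiagonal.

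Combining the decomposition with the triangle inequality yields
\begin{equation*}
\norm{M}_2 \;=\; \Bigl\| \sum_{i=1}^{n} N_i \Bigr\|_2 \;\le\; \sum_{i=1}^{n} \norm{N_i}_2 \;\le\; \sum_{i=1}^{n} \norm{M_i}_2,
\end{equation*}
which is the claim. There is no genuine obstacle here; the only thing to be careful about is the bookkeeping of block indices, since the nonzero block columns of $N_i$ are $\{i, i+1, \ldots, n\}$ and the nonzero block rows are $\{1, 2, \ldots, n-i+1\}$, so one must verify that, for fixed $i$, the index $j+i-1$ appearing in $x_{j+i-1}$ runs through distinct values as $j$ varies. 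An essentially identical argument applies to the lower block triangular case, so the lemma as stated covers both.
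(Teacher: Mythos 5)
Your proof is correct: the decomposition of $M$ into its block superdiagonals $N_i$, the observation that each $N_i$ acts on disjoint blocks of the input (so $\norm{N_i}_2\le\norm{M_i}_2$), and the triangle inequality together give exactly the claimed bound, and your index bookkeeping checks out. The paper does not include its own proof of this lemma (it cites an external reference), but your argument is the standard one for this standard fact, so there is nothing to reconcile.
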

\subsection{Properties of covariance matrix $\Gamma_{k}$}
The following result is standard, for a proof see~\cite{tsiamis2019finite}.
\begin{lemma}[monotonicity]
Consider system~\eqref{FOR_EQN_System_Innovation}, with $\Gamma_k\triangleq \E \hat{x}_k\hat{x}^*_k$. The sequence $\Gamma_k$ is increasing in the positive semi-definite cone.
\end{lemma}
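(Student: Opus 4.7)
The plan is to derive a discrete Lyapunov-type recursion for $\Gamma_k$ and then conclude monotonicity by a short induction (or by writing $\Gamma_k$ as a growing sum of PSD matrices).

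First I would use the innovation-form recursion from~\eqref{FOR_EQN_System_Innovation}: $\hat{x}_{k+1}=A\hat{x}_k+Ke_k$ with $\hat{x}_0=0$. The key structural fact I would invoke is the orthogonality of innovations reviewed in Section~\ref{Section_Formulation_Technical}: the innovation $e_k$ is $\F_k$-measurable, i.i.d., zero-mean Gaussian with covariance $\bar{R}=CPC^*+R$, and orthogonal (hence, by joint Gaussianity, independent) of $\F_{k-1}$, and in particular of $\hat{x}_k$ which is $\F_{k-1}$-measurable. Taking outer expectations in the recursion therefore yields the cross term $\E[\hat{x}_k e_k^*]=0$, so
\begin{equation*}
\Gamma_{k+1}=A\,\Gamma_k\,A^*+K\bar{R}K^*,\qquad \Gamma_0=0.
\end{equation*}

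Given this recursion, monotonicity follows immediately by induction. For the base case, $\Gamma_1-\Gamma_0=K\bar{R}K^*\succeq 0$ since $\bar{R}\succ 0$ (because $R\succ 0$ by Assumption~\ref{ASS_Kalman}). For the inductive step, assuming $\Gamma_k\succeq \Gamma_{k-1}$, subtracting the recursion for consecutive indices gives
\begin{equation*}
\Gamma_{k+1}-\Gamma_k = A(\Gamma_k-\Gamma_{k-1})A^*\succeq 0,
\end{equation*}
since conjugation by $A$ preserves the PSD cone. Equivalently, one may unroll the recursion to obtain the closed form $\Gamma_k=\sum_{j=0}^{k-1}A^{j}K\bar{R}K^{*}(A^{*})^{j}$, from which $\Gamma_{k+1}-\Gamma_k = A^{k}K\bar{R}K^{*}(A^{*})^{k}\succeq 0$ is manifest.

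There is essentially no obstacle here; the only subtle point is justifying $\E[\hat{x}_k e_k^*]=0$, and this is exactly the innovation-orthogonality property already highlighted in Section~\ref{Section_Formulation_Technical}. Everything else is an algebraic manipulation of the PSD cone, and the argument does not require Assumption~\ref{ASS_Stationarity_Kalman} beyond what is needed to make the Kalman gain $K$ time-invariant so that the recursion above holds with a single matrix $K$.
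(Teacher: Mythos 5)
Your proof is correct and is the standard argument for this result: derive the Lyapunov recursion $\Gamma_{k+1}=A\Gamma_k A^*+K\bar{R}K^*$ from the innovation-form dynamics and the orthogonality $\E[\hat{x}_k e_k^*]=0$, then conclude monotonicity either by induction or from the closed-form sum $\Gamma_k=\sum_{j=0}^{k-1}A^{j}K\bar{R}K^{*}(A^{*})^{j}$. The paper itself does not spell out a proof (it defers to a citation), but the same recursion is established and used in the adjacent Lemma on the Lyapunov difference equation, so your argument is exactly the intended one; note also that $\bar{R}\succeq 0$ already suffices, so positive definiteness of $R$ is not essential here.
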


The following results is also standard, but we include a proof for completeness.
\begin{lemma}[Lyapunov difference equation]\label{SYS_LEM_Covariance}
	Consider system~\eqref{FOR_EQN_System_Innovation}, with $\Gamma_k\triangleq \E \hat{x}_k\hat{x}^*_k$.  Assume that the system is stable with $\rho(A)<1$. Then, the sequence $\Gamma_k$ satisfies:
	\[
	\Gamma_k=A\Gamma_{k-1}A^*+K\bar{R}K^*
	\]
	and converges to $\Gamma_{\infty}\succ 0$, the unique positive definite solution of the Lyapunov equation:
	\[
		\Gamma_{\infty}=A\Gamma_{\infty}A^*+K\bar{R}K^*.
	\]
	Moreover, there exists a $\tau=\frac{1}{\log 1/\rho(A)}\tilde{O}(\max\set{\log\mathrm{cond}(\Gamma_{\infty}),\kappa_{\max}})$ such that
	\[
	\Gamma_{k}\succeq \frac{1}{2}\Gamma_{\infty}, \text{ for all }k\ge \tau
	\]
	where $\mathrm{cond}(\Gamma_{\infty})=\frac{\sigma_{\min}(\Gamma_{\infty})}{\sigma_{\max}(\Gamma_{\infty})}$.
\end{lemma}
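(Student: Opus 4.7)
My plan is to establish the three assertions (recursion, convergence to a positive definite limit, and the quantitative burn-in $\tau$) in that order, each building on the previous.

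For the recursion, I would start from the Kalman filter state evolution $\hat{x}_k = A\hat{x}_{k-1} + Ke_{k-1}$ in~\eqref{FOR_EQN_System_Innovation} together with the standing Assumption~\ref{ASS_Stationarity_Kalman} ($\Sigma_0 = P$, giving $\hat{x}_0 = 0$ and hence $\Gamma_0 = 0$). Since the innovation $e_{k-1}$ is i.i.d.\ with covariance $\bar{R}$ defined in~\eqref{DEF_EQN_Innovation_Covariance} and is independent of $\hat{x}_{k-1} \in \F_{k-2}$, taking outer products and expectations immediately yields $\Gamma_k = A\Gamma_{k-1}A^* + K\bar{R}K^*$. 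Unrolling gives the closed form $\Gamma_k = \sum_{i=0}^{k-1} A^i K\bar{R}K^*(A^*)^i$, which is monotone increasing in the PSD order.

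For convergence and positive definiteness, because $\rho(A)<1$ I can pick $\epsilon$ with $\rho(A)+\epsilon<1$ and use Gelfand's formula to get $\snorm{A^i}_2 \le C(\rho(A)+\epsilon)^i$, so the series $\sum_{i=0}^\infty \snorm{A^i K\bar{R}K^*(A^*)^i}_2$ is absolutely summable and $\Gamma_k \to \Gamma_\infty$. Taking limits in the recursion shows $\Gamma_\infty$ satisfies the Lyapunov equation; uniqueness of the stable Lyapunov solution is classical. For positive definiteness I would invoke the assumption that $(A,K)$ is controllable: then $\Gamma_n = [K,\,AK,\ldots,A^{n-1}K]\,\diag{\bar{R},\ldots,\bar{R}}\,[K,\,AK,\ldots,A^{n-1}K]^* \succ 0$, and monotonicity gives $\Gamma_\infty \succeq \Gamma_n \succ 0$.

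For the quantitative rate, the identity $\Gamma_\infty - \Gamma_k = \sum_{i=k}^\infty A^i K\bar{R}K^*(A^*)^i$ combined with the non-explosive-style bound $\snorm{A^i}_2 \le \tilde{O}(i^{\kappa_{\max}-1})\rho(A)^i$ (valid for any matrix when we absorb Jordan block factors, as in the third part of the preceding lemma) yields $\snorm{\Gamma_\infty - \Gamma_k}_2 \le C'\sum_{i=k}^\infty i^{2(\kappa_{\max}-1)}\rho(A)^{2i}$, which is bounded by $\tilde{O}(k^{2(\kappa_{\max}-1)})\rho(A)^{2k}/(1-\rho(A)^2)$. A sufficient condition for $\Gamma_k \succeq \tfrac{1}{2}\Gamma_\infty$ is $\snorm{\Gamma_\infty - \Gamma_k}_2 \le \tfrac{1}{2}\sigma_{\min}(\Gamma_\infty)$, obtained by sandwiching with $\Gamma_\infty^{-1/2}$ and using $\sigma_{\max}(\Gamma_\infty^{-1}) = 1/\sigma_{\min}(\Gamma_\infty)$. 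Setting the decay bound below this threshold gives an inequality of the form $2k\log(1/\rho(A)) \ge 2(\kappa_{\max}-1)\log k + \log(\snorm{K\bar{R}K^*}_2/\sigma_{\min}(\Gamma_\infty)) + O(1)$.

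The main obstacle is the last step: solving this transcendental inequality in $k$ to extract the clean form $\tau = \tfrac{1}{\log(1/\rho(A))}\tilde{O}(\max\{\log\mathrm{cond}(\Gamma_\infty),\kappa_{\max}\})$, since $k$ appears both linearly on the left and logarithmically on the right. I would handle this by invoking the technical logarithmic inequalities developed in Section~\ref{APP_Section_Log}, which are precisely designed to convert such implicit inequalities into the polynomial-in-parameters burn-in expressions used throughout the paper. Everything else is a routine computation once the operator-norm bound on $\Gamma_\infty - \Gamma_k$ and the PSD comparison are in place.
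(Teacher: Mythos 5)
Your proposal is correct in overall architecture (recursion, monotone convergence to the positive definite Lyapunov solution via controllability of $(A,K)$, then a quantitative burn-in extracted with the logarithmic inequalities of Section~\ref{APP_Section_Log}), and the first two parts match the paper essentially verbatim. The one place you diverge is the step that actually determines the form of $\tau$: you bound the tail $\Gamma_\infty-\Gamma_k=\sum_{i\ge k}A^iK\bar{R}K^*(A^*)^i$ term by term, which costs you a geometric-series factor $1/(1-\rho(A)^2)$ and expresses the threshold through $\snorm{K\bar{R}K^*}_2/\sigma_{\min}(\Gamma_\infty)$. The paper instead observes that the tail factors \emph{exactly}: $\Gamma_\infty-\Gamma_k=A^k\Gamma_\infty(A^*)^k$ (pull $A^k$ and $(A^*)^k$ out of the tail sum and what remains is $\Gamma_\infty$ itself). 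This reduces the requirement to $\norm{A^\tau}_2^2\,\sigma_{\max}(\Gamma_\infty)\le\tfrac{1}{2}\sigma_{\min}(\Gamma_\infty)$, which is why $\mathrm{cond}(\Gamma_\infty)$ — and nothing else — appears in the stated $\tau$. Your version still proves a burn-in bound, but it is parametrized differently and carries an extra additive term of order $\frac{1}{\log(1/\rho(A))}\log\frac{1}{1-\rho(A)^2}$, which is not subsumed by $\frac{1}{\log(1/\rho(A))}\tilde{O}(\max\{\log\mathrm{cond}(\Gamma_\infty),\kappa_{\max}\})$ as $\rho(A)\to1$; so to recover the lemma exactly as stated you should replace the term-by-term tail bound with the identity $\Gamma_\infty-\Gamma_k=A^k\Gamma_\infty(A^*)^k$. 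Everything else (the PSD comparison via $\snorm{\Gamma_\infty-\Gamma_k}_2\le\tfrac12\sigma_{\min}(\Gamma_\infty)$, the appeal to Lemma~\ref{LogIn_Lem_Log} to solve the transcendental inequality) is exactly what the paper does.
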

\begin{proof}
Since $A$ is stable $\Gamma_{\infty}=\sum_{k=0}^{\infty} A^{k}K\bar{R}K^*(A^*)^k$ is well defined and solves the Lyapunov equation. 
Since $(A,K)$ is controllable $\Gamma_{\infty}$ is strictly positive definite:
\[
\Gamma_{\infty}\succeq \matr{{cccc}KR^{1/2}&AKR^{1/2}&\dots A^{n-1}KR^{1/2}}\matr{{cccc}KR^{1/2}&AKR^{1/2}&\dots A^{n-1}KR^{1/2}}^*\succ 0,
\]
where the controllability matrix $\matr{{cccc}KR^{1/2}&AKR^{1/2}&\dots A^{n-1}KR^{1/2}}$ has full rank. It is unique since the operator $\mathcal{L}(M)=M-AMA^*$ is invertible; it has eigenvalues bounded below by $1-\rho^2(A)$.

Notice that $\Gamma_0=0\preceq \Gamma_{\infty}$ and by induction, we can show that $\Gamma_k\preceq \Gamma_{\infty}$. Since $\Gamma_k$ is also monotone, it converges to the unique $\Gamma_{\infty}$.

Now form the difference
\[
\Gamma_k-\Gamma_{\infty}=-A^{k}\Gamma_{\infty}(A^*)^k.
\]
It is sufficient to find a $\tau$ such that:
\[
 \norm{A^{\tau}}^2 \sigma_{\max}(\Gamma_{\infty}) \le \frac{\sigma_{\min}(\Gamma_{\infty})}{2}
\]
Since the norm of grows as fast as $ \norm{A^{\tau}}_2=O(\rho(A)^{\tau-\kappa_{\max}+1}\tau^{\kappa_{\max}})$, it is sufficient to pick:
\[
\tau\ge \frac{\kappa_{\max}\log \tau}{\log\frac{1}{\rho(A)}}-\frac{\log \mathrm{cond}(\Gamma_{\infty})/2}{\log\frac{1}{\rho(A)}}+\kappa_{\max}-1.
\]
By Lemma~\ref{LogIn_Lem_Log}, the order of $\tau$ is
\[
\tau=\frac{1}{\log 1/\rho(A)}\tilde{O}(\max\set{\log\mathrm{cond}(\Gamma_{\infty}),\kappa_{\max}})
\]
\end{proof}
\begin{lemma}[Convergence rate: Lyapunov difference equation]
	Consider system~\eqref{FOR_EQN_System_Innovation}, with $\Gamma_k\triangleq \E \hat{x}_k\hat{x}^*_k$.  Assume that the system is stable with $\rho(A)<1$. 
\end{lemma}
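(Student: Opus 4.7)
The plan is to leverage the explicit expression derived in the proof of the preceding lemma. Iterating the recursion $\Gamma_k = A\Gamma_{k-1}A^* + K\bar{R}K^*$ from $\Gamma_0 = 0$, and subtracting from the fixed-point equation $\Gamma_\infty = A\Gamma_\infty A^* + K\bar{R}K^*$, one obtains the closed form
\[
\Gamma_\infty - \Gamma_k = A^k\Gamma_\infty (A^*)^k.
\]
Taking spectral norms and using submultiplicativity gives immediately
\[
\snorm{\Gamma_\infty - \Gamma_k}_2 \le \snorm{A^k}_2^{\,2}\, \snorm{\Gamma_\infty}_2,
\]
so the convergence rate of $\Gamma_k \to \Gamma_\infty$ is governed entirely by the decay of $\snorm{A^k}_2$.

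Next, I would invoke the bounds on powers of $A$ already established in the ``Bounds for system matrices'' lemma. In the simplest case where the largest-magnitude eigenvalue of $A$ is simple (mirroring Assumption~\ref{ASS_Bounded_Responses} applied to $A$ itself), we have $\snorm{A^k}_2 \le M\rho(A)^k$ for some constant $M$, yielding geometric convergence
\[
\snorm{\Gamma_\infty - \Gamma_k}_2 \le M^2 \rho(A)^{2k}\,\snorm{\Gamma_\infty}_2.
\]
In the general Jordan case, the same lemma gives $\snorm{A^k}_2 = O\!\paren{\rho(A)^{k-\kappa_{\max}+1}\,k^{\kappa_{\max}-1}}$, so the convergence remains geometric at rate $\rho(A)^2$ up to a polynomial prefactor $k^{2(\kappa_{\max}-1)}$. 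Alternatively, by Gelfand's formula one can absorb the polynomial factor into the rate and write $\snorm{\Gamma_\infty - \Gamma_k}_2 \le C_\epsilon (\rho(A)+\epsilon)^{2k}\snorm{\Gamma_\infty}_2$ for any $\epsilon>0$, with $C_\epsilon$ depending on $\epsilon$ and the Jordan structure.

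The proof is mostly bookkeeping; the step that demands care is choosing the form of the bound on $\snorm{A^k}_2$ that matches downstream usage. If subsequent results only require an explicit mixing-time-style statement of the form ``for $k \ge \tau$, $\snorm{\Gamma_\infty-\Gamma_k}_2 \le \epsilon\,\snorm{\Gamma_\infty}_2$,'' then inverting the geometric bound and solving for $\tau$ via Lemma~\ref{LogIn_Lem_Log} (the logarithmic inequality tool used in the preceding lemma) yields
\[
\tau = \frac{1}{\log 1/\rho(A)}\,\tilde{O}\!\paren{\max\set{\log(1/\epsilon),\kappa_{\max}}},
\]
consistent in form with the $\tau$ already produced in the previous lemma. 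The main obstacle is being explicit about the polynomial-in-$k$ correction from repeated eigenvalues, since ignoring it would give a bound that is asymptotically correct but off by a factor of $k^{2(\kappa_{\max}-1)}$ at moderate horizons; I would handle it by carrying the Jordan exponent $\kappa_{\max}$ explicitly through all bounds and only invoking Gelfand at the end if a cleaner statement is desired.
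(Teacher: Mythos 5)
Your proof is mathematically sound, but you should be aware that the statement you were asked to prove is an unfinished stub in the paper: the lemma has a title and hypotheses but no conclusion, and the paper's own proof environment for it is empty. There is therefore no "paper proof" to compare against; what you have done is reconstruct a plausible intended statement from the title and prove it. That reconstruction is correct, and your argument is essentially a restatement of the proof of the immediately preceding Lemma~\ref{SYS_LEM_Covariance}: that proof already derives the identity $\Gamma_k-\Gamma_{\infty}=-A^{k}\Gamma_{\infty}(A^*)^k$, bounds $\snorm{A^{\tau}}_2=O\paren{\rho(A)^{\tau-\kappa_{\max}+1}\tau^{\kappa_{\max}}}$ via the Jordan form, and inverts the resulting inequality with Lemma~\ref{LogIn_Lem_Log} to obtain $\tau=\frac{1}{\log 1/\rho(A)}\tilde{O}\paren{\max\set{\log\mathrm{cond}(\Gamma_{\infty}),\kappa_{\max}}}$, which is exactly your mixing-time bound with $\log(1/\epsilon)$ playing the role of $\log\mathrm{cond}(\Gamma_{\infty})$. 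Your individual steps check out (iterating the recursion from $\Gamma_0=0$ does give $\Gamma_\infty-\Gamma_k=A^k\Gamma_\infty(A^*)^k$, and the submultiplicative norm bound and the Jordan-block polynomial correction are handled correctly), so the only substantive comment is that this lemma adds nothing beyond Lemma~\ref{SYS_LEM_Covariance} and in the paper appears to be a vestigial duplicate that was never filled in; everything downstream (in particular the stable-case persistency-of-excitation argument of Lemma~\ref{STABLE_UniformPE_LEM}) cites only the preceding lemma's $\tau$.
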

\begin{proof}

\end{proof}
\subsection{Proof of Lemma~\ref{OUT_LEM_ARMA_representation}}
We first prove a result for just the observations $y_k$.
\begin{proposition}[ARMA-like representation]\label{ARMA_PROP_ARMA_form}
	Consider system~\eqref{FOR_EQN_System_Innovation} with $a(s)=s^{d}-a_{d-1}s^{d-1}-\dots-a_0$ the minimal polynomial with degree $d$.
	Then the outputs can be represented by the ARMA recursion:
	\begin{equation}\label{ARMA_EQN_System}
	y_{k}=a_{d-1}y_{k-1}+\dots+a_0 y_{k-d}+\sum_{s=0}^{d}M_{s}e_{k-s},
	\end{equation}
	where $M_s$ are matrices of appropriate dimensions such that:
	\[
	\norm{M_s}_2\le \norm{a}_1\max\set{\norm{C}_2\norm{K}_2\max_{0\le i\le d}\norm{A^{i-1}}_2,1},
	\]
	where $\norm{a}_1$ denotes the $\ell_1$ norm of the polynomial coefficients $1+\sum_{i=0}^{d-1} \abs{a_i}$.
\end{proposition}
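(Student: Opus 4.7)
}
The plan is to use the Cayley--Hamilton-type identity $a(A)=0$ that defines the minimal polynomial to kill the state dependence in a $d$-th order linear combination of the outputs $y_k,y_{k-1},\dots,y_{k-d}$, leaving only a moving average of innovations. Concretely, set $c_0=1$ and $c_j=-a_{d-j}$ for $j=1,\dots,d$, so that $\sum_{j=0}^{d}c_j s^{d-j}=a(s)$. The target identity~\eqref{ARMA_EQN_System} is then equivalent to showing
\[
\sum_{j=0}^{d}c_j y_{k-j}=\sum_{s=0}^{d}M_s e_{k-s}
\]
for appropriate $M_s$.

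First I would unroll the Kalman filter recursion~\eqref{FOR_EQN_System_Innovation} backwards from each time $k-j$ to the common base time $k-d$:
\[
\hat{x}_{k-j}=A^{d-j}\hat{x}_{k-d}+\sum_{i=1}^{d-j}A^{i-1}Ke_{k-j-i},\qquad j=0,1,\dots,d,
\]
and substitute into $y_{k-j}=C\hat{x}_{k-j}+e_{k-j}$. Taking the linear combination with coefficients $c_j$ yields
\[
\sum_{j=0}^{d}c_j y_{k-j}=C\Bigl(\sum_{j=0}^{d}c_j A^{d-j}\Bigr)\hat{x}_{k-d}+C\sum_{j=0}^{d}c_j\sum_{i=1}^{d-j}A^{i-1}Ke_{k-j-i}+\sum_{j=0}^{d}c_j e_{k-j}.
\]
The key cancellation is that the first coefficient is exactly $a(A)=0$, so the term involving $\hat{x}_{k-d}$ vanishes. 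After this step I would reindex the double sum by $s=i+j$ (with $s\in\{1,\dots,d\}$ and $j\in\{0,\dots,s-1\}$, $i=s-j$) to collect the coefficients of each individual innovation $e_{k-s}$. This produces
\[
M_0=I,\qquad M_s=c_s I+C\Bigl(\sum_{j=0}^{s-1}c_j A^{s-j-1}\Bigr)K\ \ (1\le s\le d).
\]

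Finally, to bound $\norm{M_s}_2$ I would use the triangle inequality and submultiplicativity:
\[
\norm{M_s}_2\le \abs{c_s}+\norm{C}_2\norm{K}_2\sum_{j=0}^{s-1}\abs{c_j}\,\norm{A^{s-j-1}}_2\le \norm{a}_1\max\bigl\{\norm{C}_2\norm{K}_2\max_{0\le i\le d-1}\norm{A^i}_2,\,1\bigr\},
\]
since $\sum_{j=0}^{d}\abs{c_j}=\norm{a}_1$ by definition of the $\ell_1$ norm of the polynomial coefficients. This matches the bound claimed in the statement.

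\paragraph{Anticipated difficulty.} The proof is essentially an exercise in index bookkeeping; no inequalities or probabilistic estimates are required. The only subtlety is the reindexing of the double sum and making sure the minimal polynomial identity $a(A)=0$ is applied at the right moment to eliminate the $\hat{x}_{k-d}$ dependence; this is what crucially keeps the recursion of order exactly $d$ (rather than $n$), which matters later in Lemma~\ref{OUT_LEM_ARMA_representation} where the bound $\Delta=O(d^{\kappa_{\max}-1}\norm{a}_1\sqrt{p})$ is obtained by multiplying $\norm{M_s}_2$ by $\sqrt{p}$ (from concatenating $p$ stacked copies to pass from $y_k$ to $Z_k$) and invoking the power bound $\max_{0\le i\le d-1}\norm{A^i}_2=O(d^{\kappa_{\max}-1})$ from the preceding lemma.
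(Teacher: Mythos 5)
Your proof follows essentially the same route as the paper's: unroll the filter recursion to the common base time $k-d$, invoke $a(A)=0$ to cancel the $\hat{x}_{k-d}$ term, reindex the double sum to read off $M_0=I$ and $M_s$, and bound via the triangle inequality to get the $\norm{a}_1$ factor. Your explicit expression for $M_s$ is the correct one (the paper's displayed formula for $M_s$ has a minor indexing slip in the inner sum), so the proposal is correct and matches the paper's argument.
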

\begin{proof}
	We start from the fact that:
	\[
	y_{k-t}=CA^{d-t}\hat{x}_{k-d}+\underbrace{\sum_{s=t+1}^{d}CA^{s-t-1}Ke_{k-s}+e_{k-t}}_{\tilde{e}_{k,t}},
	\]
	for $t=0,\dots,d$.
From the properties of the minimal polynomial we obtain that:
	\[
	CA^d\hat{x}_k=a_{d-1}CA^{d-1}\hat{x}_{k-1}+\dots+a_0C\hat{x}_{k-d}
	\]
	which leads to:
	\begin{align}
	y_{k}&=a_{d-1}y_{k-1}+\dots+a_0y_{k-d}+\tilde{e}_{k,0}-\sum_{t=1}^{d}a_{d-t}\tilde{e}_{k,t}\\
	&a_{d-1}y_{k-1}+\dots+a_0y_{k-d}+\sum_{s=0}^{d}M_se_{k-s},
	\end{align}
	with $M_0=I$ and
	\[
	M_s=-a_{d-s}-\sum_{t=1}^{s} a_{d-t} CA^{t-1}K+CA^{s-1}K
	\]
\end{proof}
The same will now hold for the past outputs:
\[
Z_{k}=a_{d-1}Z_{k-1}+\dots+a_0Z_{k-p}+\underbrace{\sum_{s=0}^{d}\diag{M_s,\dots,M_s}E_{k-s}
}_{\delta_k}\]
where
\[
E_k\triangleq\matr{{ccc}e_{k-p}^*&\cdots&e_{k-1}^{*}}^*.
\]
We can bound $\delta_k$ by:
\[
\norm{\delta_k}_2\le (d+1)\max_{0\le s\le d}{\norm{M_s}_2}\max_{s\le k}\norm{E_{s}}_2\le (d+1)\max_{0\le s\le d}{\norm{M_s}_2}\sqrt{p}\max_{s\le k}\norm{e_{s}}_{2}.
\]
Define:
\begin{equation}\label{ARMA_EQN_Delta}
\Delta\triangleq (d+1)\norm{a}_1\max\set{\norm{C}_2\norm{K}_2\max_{0\le i\le d}\norm{A^{i-1}}_2,1}\sqrt{p}
\end{equation}
The fact that $\Delta=O(d^{\kappa_{\max}-1}\norm{a}_1\sqrt{p})$ follows from the fact that $\norm{A^{i-1}}_2=O(i^{\kappa_{\max}-2})$.\hfill$\square$ 
\section{Statistical Toolbox}\label{APP_Section_Stat}
\subsection{Least singular value of Toeplitz matrix}
Let $u_t\in\R^{m}$, $t=0,\dots$ be an i.i.d. sequence, where $u_k\sim\mathcal{N}(0,I)$ are isotropic Gaussians. The following results shows that the Toeplitz matrix is well conditioned with high probability.
Similar results have been reported in~\cite{sarkar2019finite,oymak2018non}. 
Compared to~\cite{oymak2018non} we have better dependence between the number of samples $k$ and $\log 1/\delta$, which will allow us to prove uniform persistency of excitation. 
 Compared to~\cite{sarkar2019finite}, we have similar terms, but we also include universal constants.
\begin{lemma}\label{STAT_LEM_Toeplitz}
	Let $u_t\in \R^{m}$, $t=0,\dots,$ be an i.i.d. sequence of Gaussian variables with unit covariance matrix.
	Consider the Toeplitz matrix \[U=\matr{{cccc}u_{k-p} & u_{k-p-1}&\dots&u_{0}\\u_{k-p+1}& u_{k-p}&\dots&u_{1}\\\vdots\\u_{k-1}&u_{k-2}&\dots&u_{p-1}}.\] If
	\begin{equation*}
	k\ge f_1\paren{p,\delta}\triangleq p+128\paren{mp^2\log 9+p\log2+p\log\tfrac{1}{\delta}}
	\end{equation*}
	then with probability at least $1-\delta$:
	\begin{equation}\label{NoisePE_Basic_Inequality}
	\tfrac{1}{2}(k-p+1)I\preceq UU^*\preceq \tfrac{3}{2}(k-p+1)I.
	\end{equation}
\end{lemma}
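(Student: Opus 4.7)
The plan is a standard $\varepsilon$-net argument combined with chi-squared concentration. Since $UU^{*}$ is a $pm\times pm$ symmetric PSD matrix, it suffices to control $v^{*}UU^{*}v$ uniformly over a $(1/4)$-net $\mathcal{N}\subset S^{pm-1}$, whose cardinality satisfies $|\mathcal{N}|\le 9^{pm}$. The textbook net-to-sphere bound at scale $1/4$ costs a multiplicative factor $1/(1-2\cdot1/4)=2$, so if $\sup_{v\in\mathcal{N}}|v^{*}UU^{*}v-(k-p+1)|\le (k-p+1)/4$, then $\|UU^{*}-(k-p+1)I\|_{2}\le (k-p+1)/2$, which is exactly~\eqref{NoisePE_Basic_Inequality}.

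For a fixed unit vector $v=(v_{1},\ldots,v_{p})\in S^{pm-1}$, the row $v^{*}U\in\R^{k-p+1}$ is centered Gaussian, and its covariance $T_{v}:=\mathrm{Cov}(v^{*}U)$ is a banded Hermitian Toeplitz matrix of bandwidth at most $p-1$. Its diagonal entries sum to $\sum_{i}\|v_{i}\|^{2}=1$, so $\mathrm{tr}(T_{v})=k-p+1$; and a row-sum estimate together with Cauchy-Schwarz gives $\|T_{v}\|_{2}\le \sum_{|s|<p}\bigl|\sum_{i}\langle v_{i},v_{i+s}\rangle\bigr|=O(p)$. Diagonalising $T_{v}$ yields $\|v^{*}U\|_{2}^{2}=\sum_{i}\lambda_{i}g_{i}^{2}$ with $g_{i}\sim N(0,1)$ i.i.d., and the Laurent-Massart inequality gives, for every $t>0$,
\begin{equation*}
\P\!\left(\left|\|v^{*}U\|_{2}^{2}-(k-p+1)\right|\ge 2\sqrt{\mathrm{tr}(T_{v}^{2})\,t}+2\|T_{v}\|_{2}\,t\right)\le 2e^{-t}.
\end{equation*}
Since $\mathrm{tr}(T_{v}^{2})\le \|T_{v}\|_{2}\,\mathrm{tr}(T_{v})=O\!\bigl(p(k-p+1)\bigr)$, the choice $t=(k-p+1)/(Cp)$ with a sufficiently large absolute constant $C$ makes the fluctuation at most $(k-p+1)/4$.

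A union bound over $\mathcal{N}$ then bounds the failure probability by $2\cdot 9^{pm}\exp\!\bigl(-(k-p+1)/(Cp)\bigr)$, and requiring this to be $\le\delta$ rearranges to $k-p+1\ge Cp\bigl(pm\log 9+\log 2+\log(1/\delta)\bigr)$, which, after tightening $C$ to $128$ via the sharper Fourier-symbol estimate $\|T_{v}\|_{2}\le \|v\|_{\ell_{1}}^{2}\le p$, matches the hypothesis $k\ge f_{1}(p,\delta)$. The main technical point is the linear-in-$p$ control of $\|T_{v}\|_{2}$: this is what lets the chi-squared bias term $2\|T_{v}\|_{2}t$ and the variance term $2\sqrt{\mathrm{tr}(T_{v}^{2})\,t}$ both be absorbed into $(k-p+1)/4$ while still paying only a single $\exp(pm\log 9)$ union-bound overhead and yielding the exact constant $128$ in the statement; the net-to-sphere reduction, the combinatorial cover estimate, and the Laurent-Massart tail are otherwise routine.
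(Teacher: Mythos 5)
Your proposal is correct and follows essentially the same route as the paper: a $1/4$-net reduction on $\S^{mp-1}$ with $|\mathcal{N}|\le 9^{mp}$, the observation that $U^*v$ is Gaussian with a banded Toeplitz covariance whose trace is $k-p+1$ and whose operator norm is at most $p$ by Cauchy--Schwarz, a quadratic-form tail bound, and a union bound. The only (immaterial) difference is that you invoke the Laurent--Massart inequality where the paper derives its own Hanson--Wright specialization with explicit constants; with $t=(k-p+1)/(128p)$ the Laurent--Massart deviation $2\sqrt{\Tr(T_v^2)t}+2\snorm{T_v}_2 t\le (1/(4\sqrt{2})+1/64)(k-p+1)<(k-p+1)/4$ already yields the stated constant $128$, so no further tightening is needed.
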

\begin{proof}
	Consider a $1/4$-net $\N$ of the unit sphere $\S^{mp-1}$. Then, from~\citep[Exercise 4.4.3 b]{vershynin2018high}:
	\begin{equation}\label{NoisePE_EQN_Proof_Net}
	\norm{UU^*-(k-p+1)I}_2\le 2\sup_{v\in \N}\abs{v^*UU^*v-(k-p+1)}.
	\end{equation}
	Denote the partition of $v$ in $p$ blocks of length $m$ as:
	\[
	v^*=\matr{{cccc}v^*_1&v^*_2&\dots&v^*_p}
	\]
	Notice that $U^*v\in\R^{k-p}$ is zero-mean Gaussian with covariance matrix:
	\begin{equation}\label{NoisePE_EQN_Toeplitz}
	\Sigma=\matr{{ccccccc}1& r(1)&\cdots&r(p-1)&0&\cdots&0\\
	r(1)&1&\cdots&r(p-2)&r(p-1)&\cdots&0\\ \vdots& &\ddots \\r(p-1)&r(p-2)&\cdots&1&r(1)&\cdots&0\\0&r(p-1)&\cdots&r(1)&1&\cdots&0\\& &\vdots\\ 0& 0&\cdots &0 & 0& \cdots&1}
	\end{equation}
	where we define the convolutions:
	\[
	r(t)=\sum_{s=1}^{p-t}v^*_sv_{t+s}, t=0,\dots,p-1
	\]
	with $r(0)=1$.
	Due to the Toeplitz structure:
	\[
	\norm{\Sigma}_2\le 1+2\sum_{i=1}^{p-1}\abs{r(i)}\le 1+  \sum_{i,j=1,i\neq j}^{p}\norm{v}_i\norm{v}_j=(\sum_{i=1}^p \norm{v_i})^2,
	\]
	where the first inequality follows from Lemma~\ref{ALG_LEM_TOEPLITZ}, while the second follows from the triangle inequality.
	By Cauchy-Schwartz:
	\[
	\norm{\Sigma}_2\le (\sqrt{p}\norm{v}_2)^2=p
	\]
	Moreover, for positive definite matrices the following bound holds
	\[
	\norm{\Sigma}^2_F\le \Tr{\Sigma}\norm{\Sigma}_2=(k-p+1)p.
	\]
	Hence, by Lemma~\ref{NoisePE_Scalar_Concentration} below, we obtain
	\[
	\P(\abs{v^*UU^*v-(k-p+1)}\ge \frac{k-p+1}{4})\le 2e^{-\frac{k-p+1}{128 p}}.
	\]
	Taking the union bound over the whole net $\N$, we obtain
	\[
	\P(\sup_{v\in N}\abs{v^*UU^*v-(k-p)}\ge \frac{k-p+1}{4})\le 2*\abs{\mathcal{N}}e^{-\frac{k-p+1}{128 p}}\le 2*9^{mp}e^{-\frac{k-p+1}{128 p}},
	\]
	where we used that $\abs{\mathcal{N}}\le 9^{mp}$~\cite{vershynin2018high}.
	From equation~\eqref{NoisePE_EQN_Proof_Net}, we get that:
	\[
	\norm{UU^*-(k-p)I}_2\le \frac{k-p+1}{2}
	\]
	with probability at least $1-\delta$
	if we choose
	\[
	k-p+1\ge 128\paren{mp^2\log 9+p\log2+p\log\tfrac{1}{\delta}},
	\] 
	which completes the proof. 
\end{proof}

The following result is standard but we include it to have a sense of the universal constants. It is a specialized version of the Hanson-Wright theorem~\cite{vershynin2018high}. 
\begin{lemma}[Hanson-Wright specialization]\label{NoisePE_Scalar_Concentration}
	Let $Z\in\R^N\sim\N\paren{0,\Sigma}$. Then, for every $t\ge 0$ the following inequalities hold:
	\begin{align*}
	&	\P\paren{Z^*Z\ge\Tr\Sigma+t}\le \exp\paren{- \min\set{\frac{t^2}{8\snorm{\Sigma}^2_{F}},\frac{t}{8\snorm{\Sigma}_2}}}\\
	&	\P\paren{-Z^*Z\ge-\Tr\Sigma+t}\le \exp\paren{- \frac{t^2}{8\snorm{\Sigma}^2_{F}}}
	\end{align*}
\end{lemma}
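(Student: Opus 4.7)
The plan is to apply the standard Chernoff--MGF argument for Gaussian quadratic forms. Diagonalizing $\Sigma = U\Lambda U^*$ with $\Lambda=\diag{\lambda_1,\dots,\lambda_N}$ and writing $Z = \Sigma^{1/2}g$ for $g\sim\N(0,I_N)$, rotational invariance gives that $Z^*Z$ is equal in distribution to $\sum_{i=1}^N \lambda_i g_i^2$ with $g_i$ i.i.d.\ standard normal. Note $\sum_i \lambda_i = \Tr\Sigma$, $\sum_i \lambda_i^2 = \snorm{\Sigma}_F^2$, and $\max_i \lambda_i = \snorm{\Sigma}_2$ since $\Sigma\succeq 0$. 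This reduces the problem to concentration for a weighted chi-square.

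\textbf{Upper tail.} For $0<\theta<1/(2\snorm{\Sigma}_2)$ the chi-square MGF yields
\[
\log\E\exp(\theta Z^*Z) = -\tfrac{1}{2}\sum_i \log(1-2\theta\lambda_i).
\]
Using the elementary inequality $-\log(1-x)\le x + \tfrac{x^2}{1-x}$ on $[0,1)$ termwise, the right-hand side is at most $\theta\Tr\Sigma + \tfrac{\theta^2\snorm{\Sigma}_F^2}{1-2\theta\snorm{\Sigma}_2}$. A Chernoff step then gives
\[
\P(Z^*Z\ge \Tr\Sigma + t)\le \exp\Bigl(-\theta t + \tfrac{\theta^2\snorm{\Sigma}_F^2}{1-2\theta\snorm{\Sigma}_2}\Bigr).
\]
I would split on the regime governing the $\min$. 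If $t/\snorm{\Sigma}_F^2\le 1/\snorm{\Sigma}_2$ (sub-Gaussian regime), choose $\theta=t/(4\snorm{\Sigma}_F^2)$, which keeps $1-2\theta\snorm{\Sigma}_2\ge 1/2$ and yields a $\exp(-t^2/(8\snorm{\Sigma}_F^2))$ bound. Otherwise (sub-exponential regime) take $\theta$ of order $1/\snorm{\Sigma}_2$ to get a $\exp(-t/(8\snorm{\Sigma}_2))$ bound. Taking the larger of the two (worse bound) produces the $\min$ form in the statement.

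\textbf{Lower tail.} Here the MGF is well-defined for every $\theta>0$ with no singularity, so only the sub-Gaussian regime survives. Using $\log(1+x)\ge x - x^2/2$ for $x\ge 0$ termwise,
\[
\log\E\exp(-\theta Z^*Z) = -\tfrac{1}{2}\sum_i \log(1+2\theta\lambda_i) \le -\theta\Tr\Sigma + \theta^2\snorm{\Sigma}_F^2.
\]
Chernoff gives $\P(-Z^*Z \ge -\Tr\Sigma + t)\le \exp(-\theta t + \theta^2\snorm{\Sigma}_F^2)$, and optimizing at $\theta=t/(2\snorm{\Sigma}_F^2)$ yields $\exp(-t^2/(4\snorm{\Sigma}_F^2))$, which is strictly stronger than the stated bound with $8\snorm{\Sigma}_F^2$ in the denominator.

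There is no real obstacle: the whole argument is the template for quadratic Gaussian concentration à la Laurent--Massart/Bernstein. The only mildly fiddly part is arranging universal constants so that both tails come out uniformly with the factor $8$ required by the statement; I would deliberately make the log-inequalities a little loose rather than optimize sharply, so the $8$ drops out cleanly from the choice $\theta\in\{t/(4\snorm{\Sigma}_F^2),\,1/(4\snorm{\Sigma}_2)\}$ used downstream in Lemma~\ref{STAT_LEM_Toeplitz}.
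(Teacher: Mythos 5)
Your proposal is correct and follows essentially the same route as the paper: a Chernoff bound on the MGF of the Gaussian quadratic form (the paper computes $\E e^{sZ^*Z/2}=\det(I-s\Sigma)^{-1/2}$ directly, which is the same as your eigenvalue reduction), an elementary logarithm inequality to pass to $\Tr\Sigma$ and $\snorm{\Sigma}_F^2$, and a constrained optimization over the Chernoff parameter that produces the $\min$ automatically (your explicit two-regime split is equivalent). One small bookkeeping point: to get the displayed bound $\theta\Tr\Sigma+\theta^2\snorm{\Sigma}_F^2/(1-2\theta\snorm{\Sigma}_2)$ you need the sharper form $-\log(1-x)\le x+\tfrac{x^2}{2(1-x)}$ rather than $x+\tfrac{x^2}{1-x}$; with the sharper form your constants come out exactly to the stated factor of $8$.
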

\begin{proof}
	We only prove the first inequality, the second one is similar and easier. Following the standard exponential bound procedure:
	\[
	\P\paren{Z^*Z\ge\Tr\Sigma+t}\le e^{-\tfrac{1}{2}s(\Tr\Sigma+t)}\E\paren{e^{\frac{s}{2}Z^*Z}},\text{ for all }0\le s\le \tfrac{1}{2\snorm{\Sigma}_2}
	\]
	By Lemma~\ref{NoisePE_Lemma_Moment_Generating}, we obtain:
	\[
	\psi(t)\triangleq	\log\P\paren{Z^*Z\ge\Tr\Sigma+t}\le -\tfrac{1}{2}\paren{s\Tr\Sigma+st+\log\det(I-s\Sigma)}
	\]
	Now let the eigenvalues of $\Sigma$ be $\lambda_i$, for $i=1,\dots,N$. Then:
	\[
	\psi(t)=-\tfrac{1}{2}\paren{st+\sum_{i=1}^N s\lambda_i+\log(1-s\lambda_i)}.
	\]
	Hence, by Lemma~\ref{NoisePE_logarithm_inequality}
	\[
	\psi(t)\le \tfrac{1}{2}\paren{-st+s^2\norm{\Sigma}^2_{F}},
	\]
	where all the products $0\le s\lambda_i\le 1/2$ since $\Sigma$ is positive definite and $s\le \tfrac{1}{2\snorm{\Sigma}_2}$ . The result follows by minimizing over $s$, from Lemma~\ref{NoisePE_Minimum}.	
\end{proof}

\begin{lemma}\label{NoisePE_Lemma_Moment_Generating}
	Let $Z\in\R^N\sim\N\paren{0,\Sigma}$ and $0\le s < 1/(2\snorm{\Sigma}_2)$. Then:
	\[
	\E(e^{\frac{s}{2}Z^*Z})=\frac{1}{\sqrt{\det\paren{I-s\Sigma}}}
	\]
\end{lemma}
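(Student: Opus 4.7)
The plan is to reduce this multivariate identity to the one-dimensional Gaussian moment generating function by diagonalizing the covariance. Since $\Sigma$ is symmetric positive semi-definite, I would write $\Sigma = U\Lambda U^*$ with $\Lambda = \operatorname{diag}(\lambda_1,\dots,\lambda_N)$ and set $Y = U^* Z$, which gives $Y \sim \N\paren{0,\Lambda}$ with independent coordinates $Y_i \sim \N(0,\lambda_i)$. Since $Z^*Z = Y^*Y = \sum_{i=1}^{N} Y_i^2$, the expectation factorizes:
\[
\E\, e^{\frac{s}{2} Z^*Z} \;=\; \prod_{i=1}^{N} \E\, e^{\frac{s}{2} Y_i^2}.
\]

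The next step is the scalar identity $\E\, e^{sX^2/2} = (1 - s\sigma^2)^{-1/2}$ for $X \sim \N(0,\sigma^2)$ whenever $s\sigma^2 < 1$, which follows in one line by completing the square inside the Gaussian integral and recognizing the remaining factor as the normalizing constant of a rescaled Gaussian density. Applying this coordinate-wise and repackaging the product as a determinant yields
\[
\E\, e^{\frac{s}{2} Z^*Z} \;=\; \prod_{i=1}^{N} \frac{1}{\sqrt{1 - s\lambda_i}} \;=\; \frac{1}{\sqrt{\det\paren{I - s\Sigma}}},
\]
which is the claim. The hypothesis $0 \le s < 1/(2\snorm{\Sigma}_2)$ ensures $s\lambda_i \le s\snorm{\Sigma}_2 < 1/2 < 1$ for every $i$, so each scalar integral converges and every square root is well defined.

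An equally short alternative is to proceed directly: write the expectation as the integral of the Gaussian density of $Z$ against $e^{sZ^*Z/2}$, combine the two quadratic forms in the exponent into one with matrix $\Sigma^{-1} - sI$, note that this matrix is strictly positive definite under the hypothesis (since $s < 1/\snorm{\Sigma}_2 = \lambda_{\min}(\Sigma^{-1})$), and recognize the remaining integral as an unnormalized Gaussian. Evaluating it produces the factor $(2\pi)^{N/2}/\sqrt{\det(\Sigma^{-1} - sI)}$, and combining with the $1/\sqrt{\det \Sigma}$ normalization collapses to $1/\sqrt{\det(I - s\Sigma)}$ via $\det(\Sigma)\det(\Sigma^{-1} - sI) = \det(I - s\Sigma)$.

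There is essentially no obstacle: this is a textbook computation, and the only care needed is tracking the admissible range of $s$ to keep integrals convergent and determinants positive. The factor of $1/2$ in the hypothesis $s < 1/(2\snorm{\Sigma}_2)$ is stronger than this lemma strictly requires, but it is the natural choice because this MGF is consumed immediately afterward (inside the proof of Lemma~\ref{NoisePE_Scalar_Concentration}) through a Chernoff argument that relies on a quadratic Taylor control of $-\log\det(I - s\Sigma)$ valid precisely on that half-sized range.
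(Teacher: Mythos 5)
Your proof is correct. Your primary route—diagonalize $\Sigma=U\Lambda U^*$, pass to independent scalar Gaussians $Y_i\sim\N(0,\lambda_i)$, apply the one-dimensional identity $\E e^{sY_i^2/2}=(1-s\lambda_i)^{-1/2}$, and repackage the product as $\det(I-s\Sigma)^{-1/2}$—is a mild variant of what the paper does; the paper instead takes exactly your ``alternative'' route, writing $P=(\Sigma^{-1}-sI)^{-1}$ and collapsing the integrand into an unnormalized Gaussian density, so that $\E(e^{\frac{s}{2}Z^*Z})=\sqrt{\det P}/\sqrt{\det\Sigma}=\det(I-s\Sigma)^{-1/2}$. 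The two arguments are essentially the same computation in different coordinates, but your diagonalization version has one small advantage: it does not require $\Sigma$ to be invertible (a zero eigenvalue contributes a factor $1$ on both sides), whereas the paper's change-of-measure argument implicitly assumes $\Sigma\succ 0$ in order to write the density and $\Sigma^{-1}$. You are also right that the hypothesis $s<1/(2\snorm{\Sigma}_2)$ is stronger than needed for the identity itself ($s\snorm{\Sigma}_2<1$ suffices) and is chosen with the downstream Hanson--Wright-style Chernoff bound of Lemma~\ref{NoisePE_Scalar_Concentration} in mind, where the logarithmic inequality $x+\log(1-x)\ge -x^2$ is applied on $0\le x\le 1/2$. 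No gaps.
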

\begin{proof}
	First, notice that for $0\le s < 1/(2\snorm{\Sigma}_2)$, the eigenvalues of $\paren{I-s\Sigma}$ are bounded away from $1/2$, hence:
	\[
	\Sigma^{-1}-sI \succeq \frac{1}{2\snorm{\Sigma}_2}I \succ 0
	\] $I-s\Sigma$ is invertible.
	Now let $P=(\Sigma^{-1}-sI)^{-1} $. By changing the measure:
	\[
	\E(e^{\frac{s}{2}Z^*Z})=\int \frac{(2\pi)^{-N/2}}{\sqrt{\det\Sigma}}e^{-\frac{1}{2}z^*P^{-1}z}dz=\frac{\sqrt{\det P}}{\sqrt{\det\Sigma}}=\frac{1}{\sqrt{\det\paren{I-s\Sigma}}}
	\]
\end{proof}
\begin{lemma}\label{NoisePE_logarithm_inequality}
	Let $0\le x\le 1/2$, then the following inequality holds
	\[
	x+\log{(1-x)}\ge -x^2,
	\]	
	where $\log$ is the natural logarithm.
\end{lemma}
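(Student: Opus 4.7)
The plan is to reduce the inequality to a single-variable monotonicity argument. Define
\[
f(x) \triangleq x + \log(1-x) + x^2,
\]
so the claim is equivalent to $f(x) \geq 0$ on $[0, 1/2]$. I would first check the boundary value at the origin, namely $f(0) = 0$, which anchors the bound.

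Next, I would compute the derivative. Differentiating gives
\[
f'(x) = 1 - \frac{1}{1-x} + 2x = \frac{(1-x) - 1 + 2x(1-x)}{1-x} = \frac{x(1-2x)}{1-x}.
\]
On the interval $[0, 1/2]$ each factor in the numerator is nonnegative and the denominator is strictly positive, so $f'(x) \geq 0$ throughout the interval. Combined with $f(0) = 0$, this yields $f(x) \geq 0$ for all $x \in [0, 1/2]$, which is exactly the desired inequality.

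There is essentially no obstacle here: the statement is a purely calculus-flavored scalar inequality, and the monotonicity approach avoids any need for series expansions or case analysis. As a sanity check one could verify the endpoint $x = 1/2$, where the inequality reduces to $1/2 + \log(1/2) \geq -1/4$, i.e.\ $\log 2 \leq 3/4$, which holds since $\log 2 \approx 0.693$. An alternative, equally short route would be to use the Taylor expansion $-\log(1-x) = \sum_{k\geq 1} x^k/k$ and bound $\sum_{k\geq 2} x^k/k$ by $x^2 \sum_{k\geq 0}(1/2)^k \cdot (1/2) = x^2$ on $[0,1/2]$, but the monotonicity proof above is cleaner and makes the tightness at $x=0$ transparent.
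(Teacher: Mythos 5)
Your proof is correct and is essentially identical to the paper's: both define $g(x)=x+\log(1-x)+x^2$, show $g'(x)=1-\frac{1}{1-x}+2x\ge 0$ on $[0,1/2]$, and conclude from $g(0)=0$. Your explicit factorization $g'(x)=\frac{x(1-2x)}{1-x}$ is a nice touch that makes the sign of the derivative immediate, but the argument is the same.
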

\begin{proof}
	Let $g(x)=x+\log{(1-x)}+x^2$. Then:
	\[
	g'(x)=1-\frac{1}{1-x}+2x \ge 0, 
	\]	
	for $0\le x\le 1/2$. Hence, $g(x)\ge g(0)=0$
\end{proof}
\begin{lemma}\label{NoisePE_Minimum}
	Let $c,\bar{s}>0$, then
	\[
	\min_{0\le s\le \bar{s}} -as+cs^2\le \max\set{-\frac{a^2}{4c},-\frac{a\bar{s}}{2}}
	\]
\end{lemma}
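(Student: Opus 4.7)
The plan is to treat this as a one-variable calculus problem and split into two cases depending on whether the unconstrained minimizer of the quadratic $q(s) = -as + cs^2$ falls inside the feasible interval $[0, \bar s]$.

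First, I would differentiate: $q'(s) = -a + 2cs$, so the unique critical point of $q$ on $\mathbb{R}$ is $s^{\star} = a/(2c)$, and $q(s^{\star}) = -a^2/(4c)$. Since $c>0$, the parabola is convex, so $q$ is strictly decreasing on $(-\infty, s^{\star}]$ and strictly increasing on $[s^{\star}, \infty)$.

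Case 1: $s^{\star} \le \bar s$, equivalently $a \le 2c\bar s$. Then $s^{\star} \in [0, \bar s]$ (it is nonnegative because $a, c > 0$), so the constrained minimum is attained at $s^{\star}$ and equals $-a^2/(4c)$. This is clearly bounded above by $\max\{-a^2/(4c),\, -a\bar s/2\}$.

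Case 2: $s^{\star} > \bar s$, equivalently $a > 2c\bar s$. Then $q$ is strictly decreasing on $[0, \bar s]$, so the constrained minimum is attained at $s = \bar s$ and equals $-a\bar s + c\bar s^2$. Using $c\bar s < a/2$ from the case hypothesis, I get $c\bar s^2 < a\bar s / 2$, so $-a\bar s + c\bar s^2 < -a\bar s/2$, which again is bounded above by the required maximum.

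There is no real obstacle here; the only thing to be a little careful about is handling the boundary case $s^{\star} = \bar s$ consistently (either case assignment works and gives the same value), and remembering that $a, c, \bar s > 0$ guarantees $s^{\star} \ge 0$ so the critical point is never excluded from below. The two cases together yield the claimed inequality.
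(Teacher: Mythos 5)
Your proof is correct and follows essentially the same route as the paper's: locate the unconstrained minimizer $s^{\star}=a/(2c)$, split on whether $s^{\star}\le\bar{s}$, and in the boundary-constrained case use $c\bar{s}<a/2$ to bound $-a\bar{s}+c\bar{s}^2$ by $-a\bar{s}/2$. No substantive differences.
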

\begin{proof}
	By elementary calculus:
	\[
	s^*=\left\{ 
	\begin{aligned}
	&\frac{a}{2c},&&\text{ when }\frac{a}{2c}\le \bar{s}\\
	&\bar{s}, &&\text{ when }\frac{a}{2c}> \bar{s}
	\end{aligned}\right.
	\]
	As a result, the minimum is equal to
	\[
	\left\{ 
	\begin{aligned}
	&-\frac{a^2}{4c},&&\text{ when }\frac{a}{2c}\le \bar{s}\\
	&-a\bar{s}+c\bar{s}^2\le -a\bar{s}/2, &&\text{ when }\frac{a}{2c}> \bar{s}
	\end{aligned}\right.
	\]
\end{proof}
\subsection{Self-normalized martingales}
The following theorem, which can be found in~\cite{tsiamis2019finite} is an extension of Theorem~1 in~\cite{abbasi2011improved} and Proposition~8.2 in~\cite{sarkar2018fast}.
\begin{theorem}[Cross terms,\cite{tsiamis2019finite}]\label{MART_THM_Vector}
	Let $\set{\F_t}_{t=0}^{\infty}$ be a filtration. Let $\eta_{t}\in\R^m$, $t\ge 0$ be  $\F_t$-measurable, independent of $\F_{t-1}$. Suppose also that $\eta_{t}\sim\mathcal{N}(0,I)$ is isotropic Gaussian.
	Let $X_{t}\in\R^{d}$, $t\ge 0$ be $\F_{t-1}-$measurable. Assume that $V$ is a $d\times d$ positive definite matrix. For any $t\ge 0$, define:
	\[
	\bar{V}_t=V+\sum_{s=1}^{t}X_sX_s^*,\qquad S_t=\sum_{s=1}^{t} X_sH^*_s,
	\]
	where
	\[
	H^{*}_s=\matr{{ccc}\eta^*_s&\dots&\eta^*_{s+r-1}}\in\R^{rm},
	\]
	for some integer $r$.
	Then, for any $\delta>0$, with probability at least $1-\delta$, for all $t\ge 0$
	\begin{equation*}
	\norm{\bar{V}_t^{-1/2} S_t }^2_2\le 8r\paren{\log\frac{r5^m}{\delta}+\frac{1}{2}\log\det\bar{V}_tV^{-1}}
	\end{equation*}
	\hfill $\diamond$
\end{theorem}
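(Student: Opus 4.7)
The plan is to reduce to the classical vector self-normalized martingale inequality (Theorem~1 of~\cite{abbasi2011improved} / Proposition~8.2 of~\cite{sarkar2018fast}), which applies when the noise is isotropic Gaussian, independent of the past filtration, and the predictor is predictable. The main obstacle is that the noise block $H_s^* = [\eta_s^*, \ldots, \eta_{s+r-1}^*]$ is $\F_{s+r-1}$-measurable rather than $\F_{s-1}$-measurable, and consecutive blocks $H_s, H_{s+1}$ overlap in their $\eta$-constituents. Consequently, one cannot treat $H_s$ as a single $rm$-dimensional noise vector within a standard martingale-difference scheme.

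The key step is to split $S_t$ column-wise along the shift index. Write $S_t = [T_{t,0} \mid T_{t,1} \mid \cdots \mid T_{t,r-1}]$ where $T_{t,i} \triangleq \sum_{s=1}^{t} X_s \eta_{s+i}^*$ is a $d \times m$ matrix. Then the identity $S_t S_t^* = \sum_{i=0}^{r-1} T_{t,i} T_{t,i}^*$ together with the triangle inequality for the spectral norm yields
\[
\snorm{\bar{V}_t^{-1/2} S_t}_2^2 = \snorm{\sum_{i=0}^{r-1} (\bar{V}_t^{-1/2} T_{t,i})(\bar{V}_t^{-1/2} T_{t,i})^*}_2 \le \sum_{i=0}^{r-1} \snorm{\bar{V}_t^{-1/2} T_{t,i}}_2^2.
\]
Each piece $T_{t,i}$ now fits the classical template after shifting the filtration: define $\tilde{\F}_s^{(i)} \triangleq \F_{s+i}$. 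Then $X_s \in \F_{s-1} \subseteq \tilde{\F}_{s-1}^{(i)}$, so $X_s$ is predictable, and $\eta_{s+i} \sim \mathcal{N}(0, I_m)$ is independent of $\tilde{\F}_{s-1}^{(i)}$. Crucially, at the next step $X_{s+1} \in \F_s \subseteq \tilde{\F}_s^{(i)}$ and $\eta_{s+1+i}$ is independent of $\tilde{\F}_s^{(i)}$, so the martingale-difference structure propagates cleanly under the shifted filtration. The regularized Gram matrix $\bar{V}_t = V + \sum_s X_sX_s^*$ is unchanged.

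Applying the classical vector self-normalized bound for $m$-dimensional isotropic Gaussian noise with failure parameter $\delta/r$ to each $T_{t,i}$, then taking a union bound over $i = 0, \ldots, r-1$, yields with probability at least $1-\delta$, uniformly in $t \ge 0$,
\[
\snorm{\bar{V}_t^{-1/2} T_{t,i}}_2^2 \le c\bigl(\log(r5^m/\delta) + \tfrac{1}{2}\log\det(\bar{V}_t V^{-1})\bigr)
\]
for every $i$, and summing the $r$ bounds produces the stated inequality after tracking the absolute constant $c$ (arising from a $1/2$-net of $\mathbb{S}^{m-1}$, whence the $5^m$ factor). The ``for all $t \ge 0$'' conclusion is inherited automatically from the classical bound, which is proved via the method of mixtures and holds simultaneously for all $t$. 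The most delicate bookkeeping point is the insistence on $5^m$ rather than $5^{rm}$ inside the logarithm: this is precisely why we split $S_t$ column-wise into $m$-dimensional pieces instead of handling $H_s$ en bloc as an $rm$-dimensional Gaussian, which would inflate the log-covering number by a factor of $r$ and yield a qualitatively worse bound.
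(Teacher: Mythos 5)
The paper does not prove this theorem; it is imported verbatim from \cite{tsiamis2019finite} and stated as an extension of Theorem~1 of \cite{abbasi2011improved} and Proposition~8.2 of \cite{sarkar2018fast}. Your argument is correct and is the standard (and, to my knowledge, the intended) route: the column-wise split $S_tS_t^*=\sum_{i=0}^{r-1}T_{t,i}T_{t,i}^*$ with $T_{t,i}=\sum_s X_s\eta_{s+i}^*$, the shifted filtrations $\tilde{\F}_s^{(i)}=\F_{s+i}$ restoring predictability of $X_s$ against the independent noise $\eta_{s+i}$, and the union bound over $i$ with confidence $\delta/r$ reproduce exactly the factor $r$, the $\log(r5^m/\delta)$ term, and the constant $8$ from the $1/2$-net over $\S^{m-1}$.
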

\subsection{Gaussian suprema}
\begin{lemma}\label{SUP_LEM_Supremum}
Consider $v_t\in \R^d\sim \N\paren{0,I}$ i.i.d., for $t=1,\dots,k$. Let $X_k\in \R^{q}$ be a linear combination:
\[
X_k\triangleq\sum_{t=1}^{k} M_{k,t}v_t, \text{ for }k=1,\dots,T
\]
where $M_{t,k}\in \R^{q\times d}$. For some $\mu>0$ define:
\[
\Sigma_{k}\triangleq \mu I+\mathbb{E}X_kX^*_k
\]
Fix a failure probability $\delta>0$.
Then with probability at least $1-\delta$:
\begin{equation}\label{BOUND_EQN_Supremum}
\sup_{k=1,\dots,T}{\snorm{\Sigma^{-1/2}_kX_k}_2}\le \sqrt{q}+\sqrt{2\log\frac{ T}{\delta}}
\end{equation}
If $\mathbb{E}X_kX^*_k$ is invertible for all $k=1,\dots,T$, the result holds for $\mu=0$.
\end{lemma}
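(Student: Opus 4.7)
The plan is to reduce the claim to Gaussian Lipschitz concentration applied to each fixed $k$, and then to take a union bound over $k = 1,\dots,T$. The key observation is that $X_k$ is itself a centered Gaussian vector in $\R^q$ with covariance $\mathbb{E}X_kX_k^* = \sum_{t=1}^{k}M_{k,t}M_{k,t}^* = \Sigma_k - \mu I$. Therefore the whitened vector $Y_k \triangleq \Sigma_k^{-1/2} X_k$ is centered Gaussian with covariance $\Sigma_k^{-1/2}(\Sigma_k - \mu I)\Sigma_k^{-1/2} = I - \mu \Sigma_k^{-1} \preceq I$, regardless of the rank of $\mathbb{E}X_kX_k^*$ (and when $\mu=0$ we just need $\mathbb{E}X_kX_k^*$ invertible so that $\Sigma_k^{-1/2}$ is well defined, and then the covariance of $Y_k$ is exactly $I$).

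First, for each fixed $k$, I would express $\|Y_k\|_2$ as a Lipschitz function of the underlying i.i.d.\ standard Gaussian vector $\bar{v} \triangleq (v_1^*, \dots, v_k^*)^*\in \R^{kd}$. Writing $M_k \triangleq [M_{k,1},\dots,M_{k,k}]$, we have $Y_k = \Sigma_k^{-1/2} M_k \bar v$, so $\bar v \mapsto \|Y_k(\bar v)\|_2$ is Lipschitz with constant
\[
\|\Sigma_k^{-1/2}M_k\|_2 = \sqrt{\|\Sigma_k^{-1/2}(\mathbb{E}X_kX_k^*)\Sigma_k^{-1/2}\|_2} = \sqrt{\|I - \mu \Sigma_k^{-1}\|_2}\le 1.
\]
Next, Jensen's inequality bounds the expectation:
\[
\mathbb{E}\|Y_k\|_2 \le \sqrt{\mathbb{E}\|Y_k\|_2^2} = \sqrt{\operatorname{tr}(I - \mu\Sigma_k^{-1})}\le \sqrt{q}.
\]
Then Gaussian Lipschitz concentration (e.g.\ Theorem 5.2.2 in Vershynin) gives the one-sided tail bound
\[
\mathbb{P}\bigl(\|Y_k\|_2 \ge \sqrt{q} + t\bigr) \le \mathbb{P}\bigl(\|Y_k\|_2 - \mathbb{E}\|Y_k\|_2 \ge t\bigr) \le e^{-t^2/2},
\]
for every $t\ge 0$.

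Finally, I would take a union bound over $k=1,\dots,T$ (which is valid because the $Y_k$'s are allowed to be dependent):
\[
\mathbb{P}\Bigl(\sup_{k\le T}\|Y_k\|_2 \ge \sqrt{q} + t\Bigr) \le T\,e^{-t^2/2},
\]
and setting this equal to $\delta$ yields $t = \sqrt{2\log(T/\delta)}$, which is precisely the bound~\eqref{BOUND_EQN_Supremum}.

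There is no real technical obstacle: the only mild subtlety is verifying that the Lipschitz constant of the map $\bar v \mapsto \|Y_k(\bar v)\|_2$ is bounded by $1$ uniformly in $k$, which is exactly what the regularization term $\mu I$ in $\Sigma_k$ provides (when $\mu = 0$, invertibility of $\mathbb{E}X_kX_k^*$ plays the same role). The rest is standard: Jensen for the mean, Gaussian Lipschitz concentration for the tail, and a union bound for uniformity.
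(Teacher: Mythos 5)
Your proof is correct and follows essentially the same route as the paper's: Jensen's inequality for the bound $\E\snorm{\Sigma_k^{-1/2}X_k}_2\le\sqrt{q}$, the observation that $\snorm{\Sigma_k^{-1/2}[M_{k,1},\dots,M_{k,k}]}_2\le 1$ to get a unit Lipschitz constant, Gaussian concentration of Lipschitz functions for the pointwise tail, and a union bound over $k=1,\dots,T$. No gaps.
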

\begin{proof}
Fix a $k$. By the following Lemma~\ref{SUP_LEM_Norm} we obtain that we probability at least $1-\delta/T$:
\[
\snorm{\Sigma^{-1/2}_kX_k}_2\le \sqrt{q}+\sqrt{2\log\frac{ T}{\delta}}.
\]
The result follows by a simple union bound.
\end{proof}
\begin{lemma}\label{SUP_LEM_Norm}
Consider $v_t\in \R^d\sim \N\paren{0,I}$ i.i.d., for $t=1,\dots,k$. Let $X_k\in \R^{q}$ be a linear combination:
\[
X_k\triangleq\sum_{t=1}^{k} M_tv_t,
\]
where $M_t\in \R^{q\times d}$. For some$\mu>0$ define:
\[
\Sigma_{k}\triangleq \mu I+\mathbb{E}X_kX^*_k
\]
Then:
\begin{equation}\label{BOUND_EQN_Normalized_Norm}
P(\snorm{\Sigma^{-1/2}_kX_k}_2> \sqrt{q}+t)\le e^{-t^2/2}
\end{equation}
If $\mathbb{E}X_kX^*_k$ is invertible the result holds for $\mu=0$.
\end{lemma}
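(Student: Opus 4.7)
\textbf{Proof plan for Lemma~\ref{SUP_LEM_Norm}.}

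The plan is to reduce $\snorm{\Sigma_k^{-1/2} X_k}_2$ to a $1$-Lipschitz function of a standard Gaussian vector, and then apply the classical Gaussian concentration (Borell--TIS) inequality together with a trace bound on the expectation. The key structural observation is that by construction $\E X_k X_k^* = \sum_{t=1}^{k} M_t M_t^* \preceq \Sigma_k$, so pre- and post-multiplication by $\Sigma_k^{-1/2}$ yields a contractive operator.

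First, I would stack the i.i.d.\ Gaussian inputs into a single vector $V \triangleq (v_1^*,\dots,v_k^*)^* \in \R^{kd}$, which satisfies $V \sim \N(0, I_{kd})$, and set $M \triangleq [M_1\ \cdots\ M_k] \in \R^{q\times kd}$, so that $X_k = MV$. Defining the contracted map
\[
A \triangleq \Sigma_k^{-1/2} M,
\]
we get $\Sigma_k^{-1/2} X_k = AV$ and
\[
AA^* = \Sigma_k^{-1/2}\,\E X_k X_k^*\,\Sigma_k^{-1/2} \preceq \Sigma_k^{-1/2}\Sigma_k\Sigma_k^{-1/2} = I_q,
\]
where the first inequality uses $\E X_k X_k^* \preceq \mu I + \E X_k X_k^* = \Sigma_k$ (when $\mu > 0$), and equality $AA^* = I_q$ when $\mu = 0$ and $\E X_k X_k^*$ is invertible. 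Hence $\snorm{A}_2 \le 1$ in all cases.

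Next, the map $v \mapsto \snorm{Av}_2$ on $\R^{kd}$ is $\snorm{A}_2$-Lipschitz in the Euclidean norm and therefore $1$-Lipschitz. By the Gaussian Lipschitz concentration inequality applied to the standard Gaussian vector $V$, for every $t \ge 0$,
\[
P\bigl(\snorm{AV}_2 > \E\snorm{AV}_2 + t\bigr) \le \exp\!\bigl(-t^2/2\bigr).
\]
To control the centering term, I use Jensen's inequality together with the trace identity for Gaussian quadratic forms:
\[
\E\snorm{AV}_2 \le \sqrt{\E \snorm{AV}_2^2} \;=\; \sqrt{\Tr(AA^*)} \;\le\; \sqrt{q\,\snorm{AA^*}_2} \;\le\; \sqrt{q},
\]
since $AA^* \in \R^{q\times q}$ has operator norm at most $1$. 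Combining the two displays yields
\[
P\bigl(\snorm{\Sigma_k^{-1/2} X_k}_2 > \sqrt{q} + t\bigr) \le P\bigl(\snorm{AV}_2 > \E\snorm{AV}_2 + t\bigr) \le e^{-t^2/2},
\]
which is exactly~\eqref{BOUND_EQN_Normalized_Norm}.

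There is no serious technical obstacle here: the only subtle point is correctly establishing the contraction $AA^* \preceq I$ in both the regularized ($\mu > 0$) and unregularized ($\mu = 0$ with $\E X_k X_k^*$ invertible) cases, which follows directly from the ordering $\E X_k X_k^* \preceq \Sigma_k$. Everything else is a textbook application of Gaussian Lipschitz concentration and the trace/norm inequality.
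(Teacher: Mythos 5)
Your proof is correct and follows essentially the same route as the paper: both establish the contraction $\snorm{\Sigma_k^{-1/2}[M_1\ \cdots\ M_k]}_2\le 1$ from $\E X_kX_k^*\preceq\Sigma_k$, bound the mean by $\sqrt{q}$ via Jensen and a trace argument, and conclude with Gaussian Lipschitz concentration. The only cosmetic difference is that you bound the trace via $\Tr(AA^*)\le q\snorm{AA^*}_2$ while the paper writes it as $\Tr(\Sigma_k^{-1}\E X_kX_k^*)\le q$; these are the same quantity.
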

\begin{proof}
An application of Jensen's inequality gives:
	\[
	\E\snorm{\Sigma^{-1/2}_kX_k}_2\le \sqrt{\E X^*_k\Sigma^{-1}_kX_k}=\sqrt{\Tr \Sigma^{-1}\mathbb{E}X_kX^*_k}\le \sqrt{q}
	\]
	Meanwhile,
	\[
	\norm{\Sigma^{-1/2}_{k}\matr{{ccc}M_1&\cdots&M_k}}_2\le 1
	\]
	since by definition $\Sigma_{k}\succeq \sum_{t=1}^{k}M_tM^*_t$.
	Hence, the function $\snorm{\Sigma^{-1/2}_kX_k}_2$ is Lipschitz with respect to $v_{t,i}$, for $t=1,\dots,k$, $i=1,\dots,d$ with Lipschitz constant $1$.
By concentration of Lipschitz functions of independent Gaussian variables~\citep[Theorem 5.6]{boucheron2013concentration}:
	\[
P(\snorm{\Sigma^{-1/2}_kX_k}_2> \sqrt{q}+t)\le e^{-t^2/2}
	\]
\end{proof}

\section{PAC bounds and persistency of excitation for fixed-time and fixed-past}\label{APP_Section_Fixed}
In this section, we include results for persistence of excitation and for identification of the system parameters for a fixed time instance $k$, fixed confidence $\delta$ and a fixed past horizon $p$. To avoid excess notation, we drop the dependence on $p$ and $k$ in this section.
\begin{theorem}[PAC bounds for identification]\label{FIXED_THM_Identification}
	Consider system~\eqref{FOR_EQN_System_Innovation} with observations $y_{0},\dots,y_k$. Fix a past horizon $p$ and consider the notation of Table~\ref{FIXED_Table_notation}. 
	Define
	\begin{equation}\label{FIXED_EQN_Index_Functions}
	\begin{aligned}
	k_1(p,\delta)&\triangleq p+128\paren{mp^2\log 9+p\log2+p\log\tfrac{1}{\delta}}\\
	k_2(k,p,\delta)&\triangleq p+\frac{64}{\min\set{4,\sigma_R}}  \paren{4pn\log\paren{\frac{n\norm{\O_p}_2^2\norm{\Gamma_{k-p}}_2}{\delta}+1}+8p\log \frac{p5^m}{\delta}}
	\end{aligned}
	\end{equation}
With probability at least $1-5\delta$ the following events hold at the same time:
	\begin{enumerate}[wide, labelwidth=!, labelindent=0pt]
		\item[a) Persistency of excitation] 
		\begin{equation}	
\mathcal{E}_{PE}\triangleq\set{	
	\begin{aligned} \T_p\bar{E}_k\bar{E}_k^*\T_p^*&\succeq \frac{k-p+1}{2}\Sigma_{E}\succeq \frac{k-p+1}{2}\sigma_RI,\\
	\bar{Z}_k\bar{Z}^*_k&\succeq	\frac{1}{2}\O_p\bar{X}_k\bar{X}_k^*\O_p^* +\frac{1}{2}\T_p\bar{E}_k\bar{E}_k^*\T_p^*  ,	
		\end{aligned}}
		\end{equation}
	if $k$ satisfies the following perstistency of excitation requirement
		\begin{equation}	\label{FIXED_EQN_Condition_OutputPE}
		k\ge \max\set{k_1(p,\delta),k_2(k,p,\delta)}.
		\end{equation}
		\item[ b) Upper bound for outputs]
		\begin{equation}
		\mathcal{E}_{\bar{Z}}\triangleq\set{\bar{Z}_k\bar{Z}^*_k \preceq  (k-p+1)\frac{mp}{\delta}\Gamma_{Z,k}} \label{FIXED_EQN_Output_Upper}	
	\end{equation}
		\item[c) Cross term upper bound]
		\begin{align}
		\mathcal{E}_{\mathrm{cross}}&\triangleq\set{
	\norm{S_k\bar{V}_k^{-1/2}}_{2}\le g_1\paren{k,p,\delta} \label{FIXED_EQN_Cross_Terms_Bound}},\text{ where }\\
	g_1\paren{k,p,\delta}&\triangleq\sqrt{8}\sqrt{\norm{\bar{R}}mp} \sqrt{\log\frac{3mp}{\delta}+\frac{1}{2}\log(k-p+1)+\frac{1}{2mp}\log\det \paren{\Gamma_{Z,k}\lambda^{-1}+I}}\label{FIXED_EQN_Cross_Terms_g}
		\end{align}
	\end{enumerate}
\end{theorem}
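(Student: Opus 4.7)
The plan is to establish each of the three events separately, with $O(\delta)$ failure probability each, then union bound to arrive at the claimed $1-5\delta$ guarantee. The three arguments draw on three different tools: Markov's inequality for $\mathcal{E}_{\bar{Z}}$, the self-normalized martingale result (Theorem~\ref{MART_THM_Vector}) for $\mathcal{E}_{\mathrm{cross}}$, and the Toeplitz least-singular-value lemma (Lemma~\ref{STAT_LEM_Toeplitz}) together with a covering argument for $\mathcal{E}_{PE}$.

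For the output upper bound $\mathcal{E}_{\bar{Z}}$, I would use monotonicity of the Kalman covariance ($\Gamma_{Z,t}\preceq \Gamma_{Z,k}$ for $t\le k$) to compute $\E\bigl[\mathrm{tr}(\Gamma_{Z,k}^{-1}\bar{Z}_k\bar{Z}_k^*)\bigr]=\sum_{t=p}^{k}\mathrm{tr}(\Gamma_{Z,k}^{-1}\Gamma_{Z,t})\le (k-p+1)mp$, then apply Markov followed by $\bar{Z}_k\bar{Z}_k^*\preceq \mathrm{tr}(\Gamma_{Z,k}^{-1}\bar{Z}_k\bar{Z}_k^*)\,\Gamma_{Z,k}$. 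For $\mathcal{E}_{\mathrm{cross}}$, I would invoke Theorem~\ref{MART_THM_Vector} with $X_s=Z_s$, $\eta_s=\bar{R}^{-1/2}e_s$, $r=1$, $V=\lambda I$; after transposition this yields $\|\bar{R}^{-1/2}S_k\bar{V}_k^{-1/2}\|^2\le 8(\log(5^m/\delta)+\tfrac12\log\det(\bar{V}_k/\lambda))$ with probability $\ge 1-\delta$. Using $\|S_k\bar{V}_k^{-1/2}\|\le \|\bar{R}^{1/2}\|\cdot\|\bar{R}^{-1/2}S_k\bar{V}_k^{-1/2}\|$ and substituting the bound on $\bar{Z}_k\bar{Z}_k^*$ from $\mathcal{E}_{\bar{Z}}$ into the log-determinant term (via $\bar{V}_k\preceq \lambda I+(k-p+1)(mp/\delta)\Gamma_{Z,k}$) delivers the $g_1(k,p,\delta)$ expression.

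The PE event $\mathcal{E}_{PE}$ is the main challenge and splits into two sub-steps. First, for the noise PE, I would whiten innovations via $u_t=\bar{R}^{-1/2}e_t$ (i.i.d.\ isotropic Gaussian), observe that the block-Toeplitz array of innovations in $\bar{E}_k$ exactly matches the hypothesis of Lemma~\ref{STAT_LEM_Toeplitz}, and conclude $\bar{E}_k\bar{E}_k^*\succeq \tfrac{k-p+1}{2}(I_p\otimes\bar{R})$ with probability $\ge 1-\delta$ whenever $k\ge k_1(p,\delta)$; conjugating by $\T_p$ then produces the stated $\tfrac{k-p+1}{2}\Sigma_E$ lower bound. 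Second, for the $\bar{Z}_k\bar{Z}_k^*$ lower bound, I would decompose $\bar{Z}_k=\O_p\bar{X}_k+\T_p\bar{E}_k$ to get
\begin{equation*}
\bar{Z}_k\bar{Z}_k^*=\O_p\bar{X}_k\bar{X}_k^*\O_p^*+\T_p\bar{E}_k\bar{E}_k^*\T_p^*+\bigl(\O_p\bar{X}_k\bar{E}_k^*\T_p^*+\text{transpose}\bigr),
\end{equation*}
and the goal is to show the cross term is dominated by $\tfrac12$ of the sum of the two diagonal blocks. Since this cannot be done deterministically, I would pass to an $\epsilon$-net on $\mathbb{S}^{mp-1}$ and, for each fixed $u$ in the net, treat the scalar $u^*\O_p\bar{X}_k\bar{E}_k^*\T_p^*u=\sum_{t=p}^{k}(u^*\O_p\hat{x}_{t-p})(u^*\T_p E_t)$ as a self-normalized martingale — the factor $\hat{x}_{t-p}$ is $\F_{t-p-1}$-measurable, while $E_t$ involves only $e_{t-p},\dots,e_{t-1}$ which are independent of $\F_{t-p-1}$. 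Applying Theorem~\ref{MART_THM_Vector} to each such scalar sum, using $\E[(u^*\O_p\hat{x}_{t-p})^2]\le \|\O_p\|_2^2\|\Gamma_{k-p}\|_2$ to seed the determinant bound, and taking the union over the net (which pays $\log 9^{mp}$) produces precisely the two contributions in $k_2(k,p,\delta)$: the term $4pn\log(n\|\O_p\|_2^2\|\Gamma_{k-p}\|_2/\delta+1)$ tracks the determinant / state covariance factor and the term $8p\log(p5^m/\delta)$ tracks the net cardinality and martingale constants.

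The main obstacle is this second sub-step: we need a \emph{spectral} lower bound on $\bar{Z}_k\bar{Z}_k^*$, not just a trace bound, and the cross term can a priori align with either diagonal block in arbitrary directions, so probabilistic control must be uniform over directions. The combination of the self-normalized martingale inequality applied pointwise with an $\epsilon$-net union bound is what breaks this barrier, and it is the reason the dimension $n$ of the state appears inside $k_2$ while it does not appear inside $k_1$. Once both sub-steps hold, so does $\mathcal{E}_{PE}$; combined with $\mathcal{E}_{\bar{Z}}$ and $\mathcal{E}_{\mathrm{cross}}$ via a union bound (three sub-events for $\mathcal{E}_{PE}$ and one each for the other two gives five events in total) we obtain the stated $1-5\delta$ guarantee.
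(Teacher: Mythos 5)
Your overall architecture --- five primary events each holding with probability $1-\delta$, Markov's inequality for $\mathcal{E}_{\bar{Z}}$, Theorem~\ref{MART_THM_Vector} for $\mathcal{E}_{\mathrm{cross}}$ with the $\mathcal{E}_{\bar{Z}}$ bound substituted into the log-determinant, the Toeplitz lemma for the noise PE, and the decomposition $\bar{Z}_k=\O_p\bar{X}_k+\T_p\bar{E}_k$ with cross-term domination --- is exactly the paper's. The gap is in how you control the cross term $\O_p\bar{X}_k\bar{E}_k^*\T_p^*$. You propose to fix $u$ in a $1/4$-net of $\S^{mp-1}$, bound the scalar martingale $u^*\O_p\bar{X}_k\bar{E}_k^*\T_p^*u$ for each net point, and union-bound. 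The problem is that the inequality you need is direction-dependent on \emph{both} sides: for each $u$ you must show $|u^*(\text{cross})u|\lesssim \tfrac12 u^*\O_p\bar{X}_k\bar{X}_k^*\O_p^*u+\tfrac14 u^*\T_p\bar{E}_k\bar{E}_k^*\T_p^*u$, and the first term on the right varies over many orders of magnitude with $u$ (the state covariance grows polynomially in $k$ in the non-explosive regime). The standard net-extension lemma transfers a bound on $\sup_{u\in\N}|u^*Mu|$ to a spectral-norm bound on $M$, but here a uniform spectral bound on the cross term is useless --- it would have to be dominated by the \emph{worst-case} direction of the right-hand side, namely $\tfrac{k-p+1}{4}\sigma_R$, while $\norm{\O_p\bar{X}_k}_2\norm{\T_p\bar{E}_k}_2$ grows like $k^{1+\kappa-1/2}$. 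To interpolate a direction-dependent inequality off the net you would need the net resolution to shrink polynomially in $k$, which changes the entropy term and hence the form of $k_2$; as written your argument does not close.

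The paper avoids the net entirely at this step: it applies Theorem~\ref{MART_THM_Vector} once at the \emph{matrix} level to get $\norm{\bar{W}_k^{-1/2}\bar{X}_k\bar{E}_k^*\Sigma_E^{-1/2}}_2^2\le 8p\bigl(\log\tfrac{p5^m}{\delta}+\tfrac12\log\det\bar{W}_kW^{-1}\bigr)$ with $\bar{W}_k=\O_p$-weighted Gram matrix of the states plus a regularizer, and then for each individual direction $u$ bounds the quadratic form \emph{deterministically} by Cauchy--Schwarz, $|u^*\O_p\bar{X}_k\bar{E}_k^*\T_p^*u|\le\norm{u^*\O_p\bar{W}_k^{1/2}}_2\,\norm{\bar{W}_k^{-1/2}\bar{X}_k\bar{E}_k^*\Sigma_E^{-1/2}}_2\,\norm{\Sigma_E^{1/2}u}_2$. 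The first factor reproduces exactly the direction-dependent weight $\sqrt{u^*\O_p\bar{X}_k\bar{X}_k^*\O_p^*u+w}$ needed so that Lemma~\ref{OutputPE_LEM_Elementary_Minimum} (an elementary $ab$-versus-$a/2+b/4$ optimization using $u^*\Sigma_Eu\ge\sigma_R$) closes the domination argument for every $u$ simultaneously. This also corrects your accounting of $k_2$: the $8p\log(p5^m/\delta)$ term comes from the covering construction internal to Theorem~\ref{MART_THM_Vector} with block length $r=p$ and noise dimension $m$, not from a net over $\S^{mp-1}$ (whose entropy $mp^2\log 9$ is what appears in $k_1$, via Lemma~\ref{STAT_LEM_Toeplitz}); the $4pn\log(\cdot)$ term comes from bounding $\log\det\bar{W}_kW^{-1}$ on the event $\mathcal{E}_{\bar{X}}$ (a Markov-inequality upper bound on $\bar{X}_k\bar{X}_k^*$), which you use implicitly but should list as one of your five events. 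Everything else in your proposal --- $\mathcal{E}_{\bar{Z}}$, $\mathcal{E}_{\mathrm{cross}}$, and the noise PE via whitening and conjugation by $\T_p$ --- is correct and matches the paper.
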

\begin{proof}
Define the primary events:
					\begin{subequations}
		\begin{alignat}{2}
\mathcal{E}_{\bar{X}}\triangleq&\set{\bar{X}_k\bar{X}^*_k \preceq  (k-p+1)\frac{n}{\delta}\Gamma_{k-p}}\label{FIXED_EQN_State_Upper}\\
	\mathcal{E}_E\triangleq&\set{ \T_p\bar{E}_k\bar{E}_k^*\T_p^*\succeq \frac{k-p+1}{2}\Sigma_E}\\
		\mathcal{E}_{XE}\triangleq&\set{ \norm{\bar{W}_k^{-1/2}\bar{X}_k\bar{E}^*_k\Sigma^{-1/2}_E}^2_2\le 8p\paren{\log \frac{p5^{m}}{\delta}+\frac{1}{2}\log\det\bar{W}_kW^{-1}}},\label{FIXED_EQN_Cross_State_Noise}\\
			\mathcal{E}_{EZ}\triangleq&\set{ \norm{\bar{R}^{-1/2}S_k\bar{V}_k^{-1/2}}^2_2\le 8\paren{\log \frac{5^{m}}{\delta}+\frac{1}{2}\log\det\bar{V}_kV^{-1}}}\label{FIXED_EQN_Cross_Output_Noise}
		\end{alignat}
		\end{subequations}
		where matrices $\bar{W}_t,\bar{L}_t,\bar{B}_t$ are appropriate Gram matrices that normalize the correlations:
			\begin{align}
&\bar{W}_k\triangleq \bar{X}_k\bar{X}^*_k+W,&&W\triangleq 		\frac{k-p+1}{\norm{\O_p}^2_2}I\\
&\bar{V}_k\triangleq \bar{Z}_k\bar{Z}^*_k+V,&&V\triangleq\lambda I
	\end{align}
We will show that $\mathcal{E}_{\bar{Z}}$ and all of the above events occur with probability at least $1-\delta$ each. 
Moreover  \[\mathcal{E}_{PE}\cap\mathcal{E}_{\mathrm{cross}}\supseteq \mathcal{E}_{\bar{X}}\cap\mathcal{E}_{\bar{Z}}\cap\mathcal{E}_{E}\cap\mathcal{E}_{XE}\cap\mathcal{E}_{ZE}\]
Hence by a union bound:
\[
\P(\mathcal{E}_{PE}\cap\mathcal{E}_{\bar{Z}}\cap\mathcal{E}_{\mathrm{cross}})\ge 1-5\delta.
\]

	\textbf{Part a: All primary events occur with probability at least $1-\delta$.}
	
	\noindent The fact that $\P\paren{\mathcal{E}_{\bar{X}}}\ge 1-\delta$, $\P\paren{\mathcal{E}_{\bar{Z}}}\ge 1-\delta$ follows by a Markov inequality argument--see~\cite{simchowitz2018learning}. The fact that $\P(\mathcal{E}_{E})\ge 1-\delta$ follows from Lemma~\ref{NoisePE_LEM_Noise_PE}. For the remaining events, we apply Theorem~\ref{MART_THM_Vector}. Notice that $\bar{R}^{-1/2}e_k$ and $\Sigma_E^{-1/2}\E_k$ are isotropic so that the conditions of Theorem~\ref{MART_THM_Vector} hold.
	
	\textbf{Part b: Event $\mathcal{E}_{PE}$}
	
	\noindent 	From Lemma~\ref{OutputPE_LEM_PE} below, we have that $\mathcal{E}_{PE}\supseteq \mathcal{E}_{\bar{X}}\cap\mathcal{E}_E\cap\mathcal{E}_{XE}$ if $k$ satisfies~\eqref{FIXED_EQN_Condition_OutputPE}.
	
	\textbf{Part c: Event $\mathcal{E}_{\mathrm{cross}}$}
	
		\noindent We show that $\mathcal{E}_{\mathrm{cross}}\supseteq \mathcal{E}_{\mathrm{\bar{Z}}}\cap\mathcal{E}_{\mathrm{ZE}}$. 		
		Conditioned on $\mathcal{E}_{\mathrm{\bar{Z}}}\cap\mathcal{E}_{\mathrm{ZE}}$,  we have
		\begin{align*}
		\norm{S_kV^{-1/2}_k}^2_2&\le 8\norm{\bar{R}}_2\paren{\log \frac{5^{m}}{\delta}+\frac{1}{2}\log\det\bar{V}_kV^{-1}} \\
		&\le 8\norm{\bar{R}}_2\paren{\log \frac{5^{m}}{\delta}+\frac{1}{2}\log\det\paren{({k-p+1)\frac{mp}{\delta}\Gamma_{Z,k}\lambda^{-1}+I}}}	\\
		&\le 8\norm{\bar{R}}_2\paren{\log \frac{5^{m}}{\delta}+\frac{mp}{2}\log\frac{mp}{\delta}+\frac{mp}{2}\log(k-p+1)+\frac{1}{2}\log\det\paren{\Gamma_{Z,k}\lambda^{-1}+\frac{\delta}{mp(k-p+1)}I}}\\
		&\le g_1^2(k,p,\delta)
		\end{align*}
		where we used the simplification $\frac{\delta}{mp(k-p+1)}<1$ and 
		$\log\frac{5^m}{\delta}\le \frac{mp}{2}\log\frac{3mp}{\delta}$, for $p\ge 2$.
\end{proof}
\subsection{Persistency of excitation proofs}
First, we prove persistence of excitation of the past noises in finite time. 
\begin{lemma}[Noise PE]\label{NoisePE_LEM_Noise_PE}
Consider the conditions of Theorem~\ref{FIXED_THM_Identification}. If
		\begin{equation}\label{NoisePE_EQN_Basic_Condition}
	k\ge p+128\paren{mp^2\log 9+p\log2+p\log\tfrac{1}{\delta}}
	\end{equation}
	then with probability at least $1-\delta$
	\begin{equation*}
	\frac{k-p+1}{2} \sigma_{R}I \preceq	\frac{k-p+1}{2} \Sigma_{E} \preceq \T_p \bar{E}_k \bar{E}_k^* \T^*_p\preceq \frac{3(k-p+1)}{2} \Sigma_{E}.
	\end{equation*}
\hfill $\diamond$
\end{lemma}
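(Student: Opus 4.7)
The plan is to reduce the lemma to the isotropic Toeplitz bound of Lemma~\ref{STAT_LEM_Toeplitz} by whitening the innovations, and to handle the leftmost inequality separately by a conditional law-of-total-variance argument.

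First I would note that the $t$-th column of $\bar{E}_k=\matr{{ccc}E_p&\cdots&E_k}$ stacks a length-$p$ window of the i.i.d.\ innovation sequence $\{e_s\}_{s=0}^{k-1}$, so up to a reversal of column order $\bar{E}_k$ has exactly the block-Toeplitz form of the matrix $U$ in Lemma~\ref{STAT_LEM_Toeplitz}. Since $e_s\stackrel{\text{iid}}{\sim}\mathcal{N}(0,\bar{R})$, the whitened samples $u_s\triangleq \bar{R}^{-1/2}e_s$ are i.i.d.\ standard Gaussians in $\R^m$. Writing $D\triangleq I_p\otimes\bar{R}^{-1/2}$, the matrix $D\bar{E}_k$ equals the Toeplitz matrix built from $u_0,\ldots,u_{k-1}$ up to a column permutation, which is irrelevant for the Gram matrix, so $D\bar{E}_k\bar{E}_k^* D=UU^*$. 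The sample-size condition~\eqref{NoisePE_EQN_Basic_Condition} coincides verbatim with the threshold $f_1(p,\delta)$ from~\eqref{FIXED_EQN_Index_Functions}, so Lemma~\ref{STAT_LEM_Toeplitz} gives
\[
\tfrac{1}{2}(k-p+1)\,I\preceq D\bar{E}_k\bar{E}_k^*D\preceq \tfrac{3}{2}(k-p+1)\,I
\]
with probability at least $1-\delta$. Pre- and post-multiplying by $D^{-1}=I_p\otimes\bar{R}^{1/2}$ replaces the central identity by $I_p\otimes\bar{R}$, and conjugating by $\T_p$ then gives exactly the two middle inequalities of the lemma, since by definition $\Sigma_E=\T_p(I_p\otimes\bar{R})\T_p^*$.

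It remains to establish the leftmost inequality $\Sigma_E\succeq\sigma_R I$. This cannot be proven by a naive algebraic sandwich $\T_p\,\mathrm{diag}(\bar{R})\T_p^*\succeq\sigma_R\T_p\T_p^*$ because $\T_p\T_p^*\succeq I$ can fail. Instead I would interpret $\Sigma_E$ probabilistically: by~\eqref{DEF_EQN_Past_Outputs_Toeplitz} and independence of $e_{t-p},\ldots,e_{t-1}$ from $\F_{t-p-1}$, one has $\Sigma_E=\mathrm{Cov}(Z_t\mid\F_{t-p-1})$. On the other hand, writing the outputs through the original state-space equations $y_j=Cx_j+v_j$ with $v_j\stackrel{\text{iid}}{\sim}\mathcal{N}(0,R)$ independent of the past and of the state trajectory, conditioning additionally on $x_{t-p},\ldots,x_{t-1}$ gives $\mathrm{Cov}(Z_t\mid\F_{t-p-1},x_{t-p},\ldots,x_{t-1})=\mathrm{diag}(R,\ldots,R)$. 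The conditional law of total variance then yields
\[
\Sigma_E\succeq \E\bigl[\mathrm{Cov}(Z_t\mid\F_{t-p-1},x_{t-p},\ldots,x_{t-1})\bigm|\F_{t-p-1}\bigr]=\mathrm{diag}(R,\ldots,R)\succeq \sigma_R I.
\]

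The main obstacle is really bookkeeping: verifying that the indexing and shape of $\bar{E}_k$ align with $U$ in Lemma~\ref{STAT_LEM_Toeplitz} so that the block dimension $m$ and the threshold $f_1(p,\delta)$ transfer verbatim, after which the probabilistic sandwich is immediate. The only genuinely non-routine step is the total-variance argument, which is essential because the stated lower bound uses the physical measurement-noise floor $R$ rather than the larger innovation covariance $\bar{R}$, and no purely algebraic manipulation of $\T_p$ and $\bar{R}$ can recover it.
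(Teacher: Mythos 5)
Your proof is correct and follows essentially the same route as the paper: whiten the innovations so that the Gram matrix becomes the isotropic Toeplitz matrix of Lemma~\ref{STAT_LEM_Toeplitz}, transfer the two-sided bound back by conjugation, and handle $\Sigma_E\succeq\sigma_R I$ separately. Your variant of the whitening ($D=I_p\otimes\bar{R}^{-1/2}$ applied to $\bar{E}_k$, with the conjugation by $\T_p$ deferred to the end) matches the hypotheses of Lemma~\ref{STAT_LEM_Toeplitz} more literally than the paper's choice $U_t=\Sigma_E^{-1/2}\T_pE_t$, and your law-of-total-variance argument correctly establishes $\Sigma_E\succeq I_p\otimes R\succeq\sigma_R I$ -- a fact the paper simply cites from prior work -- including the right observation that a purely algebraic bound through $\T_p\T_p^*$ would not suffice.
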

\begin{proof}
Notice that $U_k\triangleq\Sigma^{-1/2}_E\T_pE_k$ satisfy the conditions of Lemma~\ref{STAT_LEM_Toeplitz}.
Hence, under condition~\eqref{NoisePE_EQN_Basic_Condition}, with probability at least $1-\delta$:
	\[
		\frac{k-p+1}{2} I \preceq \sum_{t=p}^{k}U_t U_t^* \preceq \frac{3(k-p+1)}{2} I.
	\]
	Multiplying from both sides with $\Sigma_E^{1/2}$ gives \[
\frac{k-p+1}{2} \Sigma_{E} \preceq \T_p \bar{E}_k \bar{E}_k^* \T^*_p\preceq \frac{3(k-p+1)}{2} \Sigma_{E}	
	\]
Finally, from~\cite{tsiamis2019finite}, we have $\Sigma_{E}\succeq \sigma_{R}I$.
\end{proof}
Next, we prove persistency of excitation for the past outputs.
\begin{lemma}[Output PE]\label{OutputPE_LEM_PE}
Consider the conditions of Theorem~\ref{FIXED_THM_Identification} and the definition of $\mathcal{E}_{E}$, $\mathcal{E}_{\bar{X}}$, $\mathcal{E}_{XE}$. If:
\begin{align}\label{OutputPE_EQN_OutputPE_Cond}
k\ge p-1+\frac{64}{\min\set{4,\sigma_R}}  \paren{4pn\log\paren{\frac{n\norm{\O_p}_2^2\norm{\Gamma_{k-p}}_2}{\delta}+1}+8p\log \frac{p5^m}{\delta}}
\end{align}
then
\[
\set{\bar{Z}_k\bar{Z}_k^*\succeq \frac{1}{2}\O_p\bar{X}_k\bar{X}_k^*\O_p+\frac{1}{2}\T_p\bar{E}_k\bar{E}_k\T_p}\supseteq \mathcal{E}_{E}\cap\mathcal{E}_{\bar{X}}\cap\mathcal{E}_{XE}
\]
\hfill $\diamond$
\end{lemma}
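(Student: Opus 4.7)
The plan is to expand $\bar{Z}_k\bar{Z}_k^*$ using the decomposition $Z_t = \O_p\hat{x}_{t-p} + \T_p E_t$ from~\eqref{DEF_EQN_Past_Outputs_Toeplitz}, which gives the identity
\[
\bar{Z}_k\bar{Z}_k^* = \O_p\bar{X}_k\bar{X}_k^*\O_p^* + \T_p\bar{E}_k\bar{E}_k^*\T_p^* + C_k + C_k^*,
\]
where $C_k \triangleq \O_p\bar{X}_k\bar{E}_k^*\T_p^*$ is the state/noise cross-correlation term. Since $C_k + C_k^* \succeq -2\snorm{C_k}_2 I$, the proof reduces to showing that $2\snorm{C_k}_2 I$ can be absorbed into half of the noise Gram $\T_p\bar{E}_k\bar{E}_k^*\T_p^*$ using the spectral lower bound supplied by $\mathcal{E}_E$; trivial Young-type bounds on the cross term alone are not sufficient, so the martingale bound on $\bar{X}_k\bar{E}_k^*$ must be used to close the argument.

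To bound $\snorm{C_k}_2$ I would apply the self-normalized martingale event $\mathcal{E}_{XE}$, which gives $\snorm{\bar{W}_k^{-1/2}\bar{X}_k\bar{E}_k^*\Sigma_E^{-1/2}}_2^2 \le \alpha$ with $\alpha = 8p(\log(p5^m/\delta) + \tfrac{1}{2}\log\det(\bar{W}_k W^{-1}))$. This yields $\snorm{\bar{X}_k\bar{E}_k^*}_2 \le \sqrt{\alpha\snorm{\bar{W}_k}_2\snorm{\Sigma_E}_2}$, so $\snorm{C_k}_2$ scales as $\sqrt{\alpha\snorm{\bar{W}_k}_2}$ up to system-theoretic constants from $\snorm{\O_p}_2$, $\snorm{\T_p}_2$, and $\snorm{\Sigma_E}_2$. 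The event $\mathcal{E}_{\bar{X}}$ then provides the deterministic upper bound $\snorm{\bar{W}_k}_2 \le (k-p+1)(\tfrac{n\snorm{\Gamma_{k-p}}_2}{\delta} + \snorm{\O_p}_2^{-2})$, while the determinant–trace inequality yields $\log\det(\bar{W}_k W^{-1}) \le n\log(1 + \tfrac{n\snorm{\O_p}_2^2\snorm{\Gamma_{k-p}}_2}{\delta})$, matching the logarithmic factor appearing in the stated sample-size condition.

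The final step combines these bounds with the noise persistency of excitation, since $\mathcal{E}_E$ provides $\T_p\bar{E}_k\bar{E}_k^*\T_p^* \succeq \tfrac{k-p+1}{2}\Sigma_E \succeq \tfrac{k-p+1}{2}\sigma_R I$ (invoking $\Sigma_E \succeq \sigma_R I$ cited inside Lemma~\ref{NoisePE_LEM_Noise_PE}). Condition~\eqref{OutputPE_EQN_OutputPE_Cond} is calibrated precisely so that $4\snorm{C_k}_2 \le \tfrac{k-p+1}{4}\sigma_R$, which makes $2\snorm{C_k}_2 I$ dominated by half of the minimum-eigenvalue bound on $\T_p\bar{E}_k\bar{E}_k^*\T_p^*$. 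Plugging back gives
\[
\bar{Z}_k\bar{Z}_k^* \succeq \O_p\bar{X}_k\bar{X}_k^*\O_p^* + \tfrac{1}{2}\T_p\bar{E}_k\bar{E}_k^*\T_p^* \succeq \tfrac{1}{2}\O_p\bar{X}_k\bar{X}_k^*\O_p^* + \tfrac{1}{2}\T_p\bar{E}_k\bar{E}_k^*\T_p^*,
\]
which is the claimed inclusion on $\mathcal{E}_E\cap\mathcal{E}_{\bar X}\cap\mathcal{E}_{XE}$.

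The principal obstacle is the calibration between a cross-term bound whose natural scaling is $\sqrt{\alpha(k-p+1)}$ (with $\alpha$ growing only logarithmically in $k$ via the log-determinant) and the \emph{linear-in-$(k-p+1)$} floor $\tfrac{k-p+1}{4}\sigma_R$ from noise PE. Dominating the former by the latter requires $k-p+1$ to exceed a multiple of $\alpha/\sigma_R$, and the explicit constant $64/\min\set{4,\sigma_R}$ in~\eqref{OutputPE_EQN_OutputPE_Cond}, together with the determinant–trace bound on $\log\det(\bar{W}_k W^{-1})$, supplies exactly the threshold needed; verifying that the remaining system constants $\snorm{\O_p}_2$, $\snorm{\T_p}_2$, $\snorm{\Sigma_E}_2$ combine cleanly into the stated form is the tedious but unavoidable bookkeeping step.
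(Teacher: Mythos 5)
Your decomposition and your identification of the three relevant events are right, but the way you handle the cross term $C_k=\O_p\bar{X}_k\bar{E}_k^*\T_p^*$ breaks the quantitative claim. You replace $C_k+C_k^*$ by $-2\norm{C_k}_2 I$ and then ask that $\norm{C_k}_2$ be dominated by the noise floor $\tfrac{k-p+1}{4}\sigma_R$. But your own estimate gives $\norm{C_k}_2\lesssim\sqrt{\alpha\,\norm{\bar{W}_k}_2\,\norm{\Sigma_E}_2}$ with $\norm{\bar{W}_k}_2\le (k-p+1)\bigl(\tfrac{n\norm{\Gamma_{k-p}}_2}{\delta}+\norm{\O_p}_2^{-2}\bigr)$ on $\mathcal{E}_{\bar X}$, so the domination requirement reads $k-p+1\gtrsim \alpha\, n\norm{\Gamma_{k-p}}_2/(\delta\sigma_R^2)$ --- \emph{linear} in $\norm{\Gamma_{k-p}}_2/\delta$, whereas condition~\eqref{OutputPE_EQN_OutputPE_Cond} only contains $\log\paren{n\norm{\O_p}_2^2\norm{\Gamma_{k-p}}_2/\delta+1}$. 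In the non-explosive regime $\norm{\Gamma_{k-p}}_2=O(k^{2\kappa-1})$, so your threshold is not even satisfiable for large $k$ when $\kappa\ge 1$; your closing remark that the cross term scales like $\sqrt{\alpha(k-p+1)}$ silently drops the $\sqrt{n\norm{\Gamma_{k-p}}_2/\delta}$ factor coming from $\norm{\bar{W}_k}_2$, and it is precisely that factor that cannot be pushed inside the logarithm.

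The missing idea is that the cross term must be bounded \emph{direction by direction}, so that its large part is absorbed into the state Gram matrix rather than into the noise floor. For a unit vector $u$, write $u^*C_k u = (u^*\O_p\bar{W}_k^{1/2})(\bar{W}_k^{-1/2}\bar{X}_k\bar{E}_k^*\T_p^*\Sigma_E^{-1/2})(\Sigma_E^{1/2}u)$, which gives $\abs{u^*(C_k+C_k^*)u}\le 2\sqrt{u^*\O_p\bar{X}_k\bar{X}_k^*\O_p^*u+(k-p+1)}\cdot\sqrt{\alpha}\cdot\sqrt{u^*\Sigma_E u}$, using $\O_p\bar{W}_k\O_p^*\preceq\O_p\bar{X}_k\bar{X}_k^*\O_p^*+(k-p+1)I$ from the choice $W=\tfrac{k-p+1}{\norm{\O_p}_2^2}I$. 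Setting $a=\tfrac{1}{k-p+1}u^*\O_p\bar{X}_k\bar{X}_k^*\O_p^*u$ and $b=\tfrac{1}{k-p+1}u^*\Sigma_E u\ge\sigma_R$ (after lower-bounding the noise Gram by $\tfrac12\Sigma_E$ via $\mathcal{E}_E$), one needs $2\sqrt{a+1}\sqrt{b}\,\gamma\le\tfrac a2+\tfrac b4$ with $\gamma=\sqrt{\alpha/(k-p+1)}$, which the paper's elementary-minimum lemma (Lemma~\ref{OutputPE_LEM_Elementary_Minimum}) guarantees once $\gamma\le\min\set{2,\sqrt{\sigma_R}}/8$; this is exactly what~\eqref{OutputPE_EQN_OutputPE_Cond} enforces, with $\norm{\Gamma_{k-p}}_2/\delta$ entering only through the log-determinant inside $\alpha$. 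Note also that this route necessarily yields the conclusion with a factor $\tfrac12$ on \emph{both} Gram matrices (the $\tfrac a2$ term eats half the state energy), not the stronger bound $\O_p\bar{X}_k\bar{X}_k^*\O_p^*+\tfrac12\T_p\bar{E}_k\bar{E}_k^*\T_p^*$ that you wrote in your final display.
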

\begin{proof}
The past outputs can be written as:
	\[
	\bar{Z}_k=\O_p\bar{X}_k+\T_p\bar{E}_k.
	\]
	For simplicity, we rewrite $\Sigma^{-1/2}_E\T_p\bar{E}_k=\bar{U}_k$, where $\bar{U}_k$ is defined similarly to $\bar{E}_k$ but has unit variance components.
	As a result the sample-covariance matrix will be:
	\[
	\frac{1}{k-p+1}\bar{Z}_k\bar{Z}^*_k=\frac{1}{k-p+1}\paren{O_p\bar{X}_k\bar{X}^*_k\O^*_p+\Sigma^{1/2}_E\bar{U}_k\bar{U}^*_k\Sigma^{1/2}_E+\O_p\bar{X}_k\bar{U}^*_k\Sigma^{1/2}_E+\Sigma^{1/2}_E\bar{U}_k\bar{X}^*_k\O^*_p}
	\]
The proof proceeds in two steps. First, we bound the cross-terms based on the events $\mathcal{E}_{\bar{X}},\mathcal{E}_{XE}$. Second, we show that if $k-p+1$ is large enough, then the cross-terms are small enough so that
\[
\frac{1}{k-p}\paren{\O_p\bar{X}_k\bar{U}^*_k\Sigma^{1/2}_E+\Sigma^{1/2}_E\bar{U}_k\bar{X}^*_k\O^*_p}\preceq \frac{1}{2}\frac{1}{k-p}\paren{O_p\bar{X}_k\bar{X}^*_k\O^*_p+\Sigma^{1/2}_E\bar{U}_k\bar{U}^*_k\Sigma^{1/2}_E}
\]

\noindent	\textbf{Cross-term bounds}

\noindent 
Conditioned on $\mathcal{E}_{\bar{X}}$
	\begin{align*}
	\log\det\bar{W}_kW^{-1} &\le \log\det \paren{\frac{n\norm{\O_p}^2_2 }{\delta}\Gamma_{k-p}+I}\\
	&=\log\paren{\frac{n\norm{\O_p}^2_2\norm{\Gamma_{k-p}}_2}{\delta}+1}^{n}=n\log\paren{\frac{n\norm{\O_p}^2_2\norm{\Gamma_{k-p}}_2}{\delta}+1}.
	\end{align*}
	Conditioned also on $\mathcal{E}_{XE}$:
	\[
	\norm{\bar{W}_k^{-1/2}\bar{X}_k\bar{U}^*_k}^2_2\le 8p\paren{\log \frac{p5^{m}}{\delta}+\frac{1}{2}n\log\paren{\frac{n\norm{\O_p}^2_2\norm{\Gamma_{k-p}}_2}{\delta}+1}}
	\]
	For simplicity denote:
	\[
	\C_{XE}\triangleq  \sqrt{8p\paren{\log \frac{p5^{m}}{\delta}+\frac{1}{2}n\log\paren{\frac{n\norm{\O_p}^2_2\norm{\Gamma_{k-p}}_2}{\delta}+1}}}.
	\]

Let now $u\in\R^{mp}$, $\norm{u}_2=1$ be an arbitrary unit vector. Then, consider the quadratic form \begin{align*}
\frac{1}{k-p}\paren{u^*\O_p\bar{X}_k\bar{U}^*_k\Sigma_E^{1/2}u+u^*\Sigma^{1/2}_E\bar{U}_k\bar{X}^*_k\O^*_pu}.
\end{align*}
Conditioned on $\set{\norm{\bar{W}_k^{-1/2} \bar{X}_k\bar{U}^*_k }^2_2 	\le \C_{XE}}\cap \mathcal{E}_E\cap \mathcal{E}_X$, we can bound the cross terms by:
\begin{align*}
\frac{1}{k-p+1}\norm{u^*\O_p\bar{X}_k\bar{U}^*_k\Sigma_E^{1/2}u+u^*\Sigma^{1/2}_E\bar{U}_k\bar{X}^*_k\O^*_pu}_2&\le \frac{2}{k-p}\norm{u^*\O_p \bar{W}^{1/2}_k\bar{W}_k^{-1/2}\bar{X}_k\bar{U}^*_k}_2\norm{\Sigma^{1/2}_Eu}_2\\
&\le \frac{2}{k-p+1}\norm{u^*\O_p \bar{W}^{1/2}_k}_2\norm{\bar{W}_k^{-1/2}\bar{X}_k\bar{U}^*_k}_2\norm{\Sigma^{1/2}_Eu}_2\\
&\le 2\sqrt{\frac{1}{k-p+1}u^*\O_p\bar{X}_k\bar{X}^*_k\O^*_pu+1}\frac{\C_{XE}}{\sqrt{k-p+1}}\norm{\Sigma^{1/2}_Eu}_2
\end{align*}

\noindent\textbf{Cross-terms are dominated}

\noindent To complete the proof, it is sufficient to show that if~\eqref{OutputPE_EQN_OutputPE_Cond} holds then also 
	\begin{align*}
2\sqrt{\frac{1}{k-p+1}u^*\O_p\bar{X}_k\bar{X}^*_k\O^*_pu+1}\frac{\C_{XE}}{\sqrt{k-p+1}}\norm{\Sigma^{1/2}_Eu}_2\le \frac{1}{2}\frac{1}{k-p+1}\paren{u^*\O_p\bar{X}_k\bar{X}^*_k\O^*_pu+u^*\Sigma^{1/2}_E\bar{U}_k\bar{U}^*_k\Sigma^{1/2}_Eu},
	\end{align*}
	for any unit vector $u$.
	Define:
	\begin{align*}
	a&=\frac{1}{k-p+1}u^*\O_p\bar{X}_k\bar{X}^*_k\O^*_pu\\
	b&=\frac{1}{k-p+1}u^*\Sigma_E u
	\end{align*}
 Notice that on $\mathcal{E}_E$ we have $\Sigma^{1/2}_E\bar{U}_k\bar{U}^*_k\Sigma^{1/2}_E\succeq \frac{k-p+1}{2}\Sigma_E$. Thus, it is sufficient to show
 	\begin{align*}
 2\sqrt{a+1}\frac{\C_{XE}}{\sqrt{k-p+1}}\sqrt{b}\le \frac{a}{2}+\frac{b}{4}.
 \end{align*}
 To complete the proof, we apply the following Lemma~\ref{OutputPE_LEM_Elementary_Minimum}, where we exploit the fact that $b\ge \sigma_{\min}\paren{\Sigma_E}\ge \sigma_R$. It follows that it is sufficient to have:
 \[
 \C_{XE}/\sqrt{k-p+1}\le \frac{2,\sqrt{\sigma_R}}{8}
 \]
\end{proof}
\begin{lemma}[Elementary Minimum]\label{OutputPE_LEM_Elementary_Minimum}
	Let $a \ge 0$ and $b\ge \sigma_R>0$. Then if
	\[
	\gamma\le \frac{\min\set{2,\sqrt{\sigma_R}}}{8}
	\]
	the following inequality is true:
	\[
\frac{a}{2}+\frac{b}{4}-2\sqrt{a+1}\sqrt{b}\gamma \ge 0
	\]
\end{lemma}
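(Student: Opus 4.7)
}
The plan is to treat the left-hand side as a one-variable function of $a$ for fixed $b$, compute its minimum over $a\ge 0$, and then use the hypothesis $b\ge\sigma_R$ to pin down the sufficient condition on $\gamma$. Define
\[
g(a) \triangleq \frac{a}{2} + \frac{b}{4} - 2\sqrt{a+1}\sqrt{b}\,\gamma .
\]
Since $g''(a) = \tfrac{1}{2}\gamma\sqrt{b}(a+1)^{-3/2}>0$, the function $g$ is strictly convex, and its first-order condition $g'(a)=\tfrac{1}{2} - \gamma\sqrt{b}/\sqrt{a+1} = 0$ yields the unique stationary point $a^\star = 4\gamma^2 b - 1$.

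Next I would split into two cases according to the sign of $a^\star$. In the first case, $4\gamma^2 b\le 1$, so $a^\star\le 0$ and the restricted minimum over $a\ge 0$ is attained at $a=0$, giving
\[
g(0) = \frac{b}{4} - 2\gamma\sqrt{b} = \sqrt{b}\!\left(\frac{\sqrt{b}}{4} - 2\gamma\right),
\]
which is nonnegative as soon as $\sqrt{b}\ge 8\gamma$; invoking $b\ge\sigma_R$, this is implied by $\gamma\le\sqrt{\sigma_R}/8$. In the second case, $4\gamma^2 b>1$, the minimum is attained at $a^\star$ and a direct substitution gives
\[
g(a^\star) = b\!\left(\tfrac{1}{4}-2\gamma^2\right) - \tfrac{1}{2}.
\]
Under $\gamma\le 1/4$ we have $\tfrac{1}{4}-2\gamma^2\ge\tfrac{1}{8}$, while the assumption $4\gamma^2 b>1$ together with $\gamma\le 1/4$ forces $b>1/(4\gamma^2)\ge 4$; consequently $g(a^\star)\ge b/8-1/2\ge 0$.

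Combining the two cases, the sufficient condition $\gamma\le\min\{1/4,\sqrt{\sigma_R}/8\}=\min\{2,\sqrt{\sigma_R}\}/8$ matches the statement. The main obstacle is not really difficulty but bookkeeping: one has to notice that the more naive route through $\sqrt{a+1}\le\sqrt{a}+1$ followed by two AM-GM inequalities on $\tfrac{a}{2}+\tfrac{b}{8}\ge 2\gamma\sqrt{ab}$ and $\tfrac{b}{8}\ge 2\gamma\sqrt{b}$ produces a lossy constant $\sqrt{\sigma_R}/16$, so one really does need to minimize $g$ exactly rather than upper-bound the square root first. It is also worth sanity-checking sharpness by plugging $a=0$, $b=\sigma_R$, $\gamma=\sqrt{\sigma_R}/8$: the bound collapses to equality, confirming the condition cannot be improved beyond constants within this framework.
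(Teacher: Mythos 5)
Your proof is correct and follows essentially the same route as the paper's: minimize the expression over $a\ge 0$, split into cases according to where the unconstrained minimizer $a^\star=4\gamma^2 b-1$ falls, and use $b\ge\sigma_R$ together with $\gamma\le 1/4$ to handle each branch. Your case division (by the sign of $a^\star$ rather than by $b\lessgtr 4$) is, if anything, slightly cleaner in its bookkeeping, but the substance is identical.
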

\begin{proof}
	Define the function $f(a,b)=\frac{a}{2}+\frac{b}{4}-2\sqrt{a+1}\sqrt{b}\gamma$. By optimizing over $a$, we obtain that the minimum over $a$ is:
	\[
	\min_{0\le a}f(a,b)=\left. \begin{aligned}
	&\frac{b}{4}-2\sqrt{b}\gamma,&&\text{ if }2\gamma\sqrt{b}\le 1\\
	&b\paren{\frac{1}{4}-2\gamma^2}-\frac{1}{2},&&\text{ if }2\gamma\sqrt{b}> 1
	\end{aligned}\right\}.
	\]
	The condition $\gamma\le \frac{\min\set{2,\sqrt{\sigma_R}}}{8}$ guarantees that \[\min_{0\le  a,\sigma_R\le b\le 4}f(a,b)=\frac{b}{4}-2\sqrt{b}\gamma\ge \frac{b-\sqrt{b}\sigma_R}{4}\ge 0 .\]
	For $b>4$:
	\[
	\min_{0\le a,4< b}f(a,b)=b\paren{\frac{1}{4}-2\gamma^2}-\frac{1}{2}\ge b\paren{\frac{1}{4}-\frac{1}{8}}-\frac{1}{2}=\frac{b-4}{8} \ge 0 
	\]
\end{proof}
\section{Proof of Lemma~\ref{Lemma_PE}}\label{APP_Section_Uniform}
It follows from the lemma below, which is more general. 
\begin{lemma}[Uniform PAC bounds]\label{UniformPE_LEM}
		Consider the conditions of Theorem~\ref{OUT_THM_Main_Bound}. Select a failure probability $\delta>0$. Let $T=2^{i-1}T_{\text{init}}$ for some fixed epoch $i$ with $p=\beta \log T$ the corresponding past horizon. Consider the definition of $g_1(k,p,\delta)$ in~\eqref{FIXED_EQN_Cross_Terms_g}. There exists a $N_0=\mathrm{poly}(n,\beta,\kappa,\log 1/\delta)$ such that with probability at least  $1-5\sum_{k=T}^{2T-1}\frac{1}{k^2}\delta$ the following events hold:
	\begin{equation}\label{UniformPE_EQN_Bounds}
	\mathcal{E}_{\mathrm{unif}}\triangleq \set{	\begin{aligned}
	\sum_{j=p}^{k}Z_{j}Z^*_{j}&\preceq  (k-p+1)\frac{ k^2 mp}{\delta}\Gamma_{Z,k}\\
	\norm{S_k\bar{V}_k^{-1/2}}_2&\le g_1(k,p,\delta/k^2)
	\end{aligned}, \text{ for all }T\le k\le 2T-1}
	\end{equation}
	\begin{equation}\label{UniformPE_EQN_PE}
	\mathcal{E}^{\mathrm{PE}}_{\mathrm{unif}}\triangleq \set{	\sum_{j=p}^{k}Z_{j}Z^*_{j}\succeq \frac{k-p+1}{4}\sigma_{R}I,\text{ for all }\max\set{N_0,T} \le k\le 2T-1 }
	\end{equation}
\end{lemma}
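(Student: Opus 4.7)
The strategy is to lift the fixed-time PAC bounds of Theorem~\ref{FIXED_THM_Identification} to a uniform statement over the epoch $[T, 2T-1]$ via a union bound with a $\delta/k^{2}$ confidence schedule. Concretely, for each $k \in [T, 2T-1]$, invoke Theorem~\ref{FIXED_THM_Identification} with failure probability $\delta/k^{2}$. Because $\sum_{k=T}^{2T-1} \delta/k^{2} < \infty$, the union bound yields the stated total failure probability $5\sum_{k=T}^{2T-1} \delta/k^{2}$, with all three primary events $\mathcal{E}_{PE}$, $\mathcal{E}_{\bar Z}$, $\mathcal{E}_{\mathrm{cross}}$ valid simultaneously at every $k$ in the range.

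\textbf{Deriving~\eqref{UniformPE_EQN_Bounds}.} The upper bound on $\sum_{j=p}^{k} Z_j Z_j^*$ is just the event $\mathcal{E}_{\bar Z}$ of~\eqref{FIXED_EQN_Output_Upper} with $\delta$ replaced by $\delta/k^{2}$, which inserts the extra factor $k^{2}$ into the bound. The cross-term bound $\|S_k \bar V_k^{-1/2}\|_2 \le g_1(k,p,\delta/k^{2})$ is likewise $\mathcal{E}_{\mathrm{cross}}$ applied at confidence $\delta/k^{2}$. No further work is needed for these two parts beyond bookkeeping in the union bound.

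\textbf{Deriving the PE bound~\eqref{UniformPE_EQN_PE}.} This is where the main effort lies. I need the event $\mathcal{E}_{PE}$ of Theorem~\ref{FIXED_THM_Identification} to hold for every $k \in [\max\{N_0,T\}, 2T-1]$, which by~\eqref{FIXED_EQN_Condition_OutputPE} requires
\[
k \;\ge\; \max\bigl\{k_1(p,\delta/k^{2}),\, k_2(k,p,\delta/k^{2})\bigr\}.
\]
Substituting $\delta \mapsto \delta/k^{2}$ into~\eqref{FIXED_EQN_Index_Functions} shows both quantities grow only polylogarithmically in $k$. Using $p = \beta \log T \le \beta \log k$, the bound $\|\O_p\|_2 = O(p^{\kappa})$, and the non-explosive growth $\|\Gamma_{k-p}\|_2 = O(k^{2\kappa-1})$ from Section~\ref{APP_Section_Lin_Sys}, both $k_1(p,\delta/k^2)$ and $k_2(k,p,\delta/k^2)$ reduce to an expression of the form $\mathrm{poly}(n,\beta,\kappa)\cdot \mathrm{polylog}(k/\delta)$. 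It therefore suffices to find $N_0$ such that $k \ge N_0$ implies $k$ dominates this polylog-in-$k$ expression.

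\textbf{Main obstacle.} The delicate step is solving the self-referential inequality $k \ge c_1 \cdot \mathrm{polylog}(c_2 k/\delta)$, where the constants $c_1,c_2$ depend polynomially on $n,\beta,\kappa$, and certifying that the resulting $N_0$ is itself $\mathrm{poly}(n,\beta,\kappa,\log 1/\delta)$ as claimed. This is exactly the kind of inequality handled by the logarithmic-inequality lemmas of Section~\ref{APP_Section_Log} (e.g.\ Lemma~\ref{LogIn_Lem_Log}); the plan is to apply that machinery to conclude the existence of the required $N_0$. The rest is routine algebra: combine the union-bound event with the pointwise PE guarantee $\bar Z_k \bar Z_k^* \succeq \tfrac{1}{2}\T_p \bar E_k \bar E_k^* \T_p^* \succeq \tfrac{k-p+1}{4}\sigma_R I$ from $\mathcal{E}_{PE}$, which produces~\eqref{UniformPE_EQN_PE}.
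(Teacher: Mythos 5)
Your proposal is correct and follows essentially the same route as the paper: apply Theorem~\ref{FIXED_THM_Identification} pointwise at confidence $\delta/k^{2}$, union-bound over $k\in[T,2T-1]$, use the monotonicity $p=\beta\log T\le\beta\log k$ together with $\norm{\Gamma_{k-p}}_2=O(k^{2\kappa-1})$ to reduce the PE requirement to a $k\ge \mathrm{poly}(n,\beta,\kappa)\cdot\mathrm{polylog}(k/\delta)$ condition, and resolve that self-referential inequality via the lemmas of Section~\ref{APP_Section_Log} to certify $N_0=\mathrm{poly}(n,\beta,\kappa,\log 1/\delta)$. The paper packages this by defining $N_0$ directly as the threshold in~\eqref{UniformPE_EQN_N0}, but the content is identical.
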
 
\begin{proof}
	Consider $k_1(p,\delta)$, $k_2(k,p,\delta)$ defined in~\eqref{FIXED_EQN_Index_Functions} and
	define:
	\begin{equation}\label{UniformPE_EQN_N0}
	N_0\triangleq \min\set{t:\: k\ge k_1(\beta\log k,\delta/k^2), k\ge k_2(k,\beta \log k, \delta/k^2),\text{ for all }k\ge t}.
	\end{equation}
	The dominant terms in $k_1,k_2$ increase with at most the order of $\log^2 k$:
	\begin{align*}
	&	\max\set{k_{1}(\beta\log k,\delta/k^2),k_2(k,\beta \log k,\delta/k^2)}\\
	&=O(m\beta^2 \log^2 k+n\beta\log k\log \norm{\Gamma_{k-p}}_2+\beta n \log^2 k+\beta n\log k\log \frac{1}{\delta}),
	\end{align*}
	while $\norm{\Gamma_{k-p}}_2=O(k^{2\kappa-1})$. By the technical Lemmas~\ref{LogIn_Lem_Log},~\ref{LogIn_Lem_LogSq} it follows that $N_0$ depends polynomially on the arguments $\beta,n,m,\log 1/\delta,\kappa$.
	By the definition of $N_0$, if $k\ge \max\set{N_0,T}$ then:
	\[
		k\ge \max\set{k_{1}(\beta\log k,\delta/k^2),k_2(k,\beta \log k,\delta/k^2)}\ge\max\set{k_{1}(p,\delta/k^2),k_2(k,p,\delta/k^2)}
	\]
	
	Now, fix a $k$ such that $T\le k\le 2T-1$. By Theorem~\ref{FIXED_THM_Identification}, with probability at least $1-5\delta/k^2$ we have:
	\begin{align*}
\sum_{j=p}^{k}Z_{j}Z^*_{j}&\preceq  (k-p+1)\frac{ k^2 mp}{\delta}\Gamma_{Z,k}\\
\norm{S_k\bar{V}_k^{-1/2}}_2&\le g_1(k,p,\delta/k^2)\\
	\sum_{j=p}^{k}Z_{j}Z^*_{j}&\succeq \frac{k-p+1}{4}\sigma_{R}I,\text{ if }k\ge N_0
	\end{align*}
The uniform result follows by a union bound over all $k$ in $T,\dots,2T-1$.
\end{proof}
\section{Proof of Lemma~\ref{OUT_LEM_self_normalization}}\label{APP_Section_Normalized}
We prove a slightly more general version.
\begin{lemma}
	Fix a $p$ and consider the notation of Table~\ref{FIXED_Table_notation}. Consider an $i\ge 0$. The following inequality is true:
	\begin{equation}\label{EPOCH_EQN_prediction_term}
	\sum_{k=T}^{2T-1}Z^*_{k-i} \bar{V}^{-1}_k Z_{k-i}\le \log\det (\bar{V}_{2T-i-1}\bar{V}^{-1}_{T-i-1})
	\end{equation}	
\end{lemma}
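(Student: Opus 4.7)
The plan is to reduce the claim to the classical Lai--Wei self-normalization identity via a monotonicity argument in the positive semidefinite order.

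First, since $\bar{V}_k = \lambda I + \sum_{t=p}^{k} Z_t Z_t^*$ and $i\ge 0$, the sum on the right-hand side contains both all the terms making up $\bar{V}_{k-i-1}$ and the single term $Z_{k-i}Z_{k-i}^*$, plus additional PSD terms. Consequently
\[
\bar{V}_k \;\succeq\; \bar{V}_{k-i-1} + Z_{k-i}Z_{k-i}^{*},
\]
and inverting reverses the order, so
\[
Z_{k-i}^{*}\bar{V}_{k}^{-1}Z_{k-i} \;\le\; Z_{k-i}^{*}\bigl(\bar{V}_{k-i-1} + Z_{k-i}Z_{k-i}^{*}\bigr)^{-1}Z_{k-i}.
\]

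Second, I would apply the Sherman--Morrison identity to the rank-one update on the right. Writing $a_{k-i} \triangleq Z_{k-i}^{*}\bar{V}_{k-i-1}^{-1}Z_{k-i}\ge 0$, this simplifies to
\[
Z_{k-i}^{*}\bigl(\bar{V}_{k-i-1} + Z_{k-i}Z_{k-i}^{*}\bigr)^{-1}Z_{k-i} \;=\; \frac{a_{k-i}}{1+a_{k-i}}.
\]
Combining with the matrix determinant lemma $\det(\bar{V}_{k-i}) = \det(\bar{V}_{k-i-1})(1+a_{k-i})$ and the scalar inequality $x/(1+x) \le \log(1+x)$ for $x\ge 0$ yields
\[
Z_{k-i}^{*}\bar{V}_{k}^{-1}Z_{k-i} \;\le\; \log(1+a_{k-i}) \;=\; \log\det\bigl(\bar{V}_{k-i}\bar{V}_{k-i-1}^{-1}\bigr).
\]

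Third, the right-hand side telescopes: summing over $k=T,\dots,2T-1$ gives
\[
\sum_{k=T}^{2T-1} Z_{k-i}^{*}\bar{V}_{k}^{-1}Z_{k-i} \;\le\; \sum_{k=T}^{2T-1} \log\det\bigl(\bar{V}_{k-i}\bar{V}_{k-i-1}^{-1}\bigr) \;=\; \log\det\bigl(\bar{V}_{2T-i-1}\bar{V}_{T-i-1}^{-1}\bigr),
\]
which is exactly the claimed bound.

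The only subtlety is the first monotonicity step, which is what buys us the extra flexibility compared to the $i=0$ version of Lai--Wei: we dominate $\bar{V}_k^{-1}$ by the inverse of a smaller matrix that still explicitly contains $Z_{k-i}Z_{k-i}^{*}$, so that Sherman--Morrison and the log-determinant identity become applicable. No other obstacle arises; everything else is an algebraic identity.
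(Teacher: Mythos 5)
Your proof is correct and follows essentially the same route as the paper's: reduce $Z_{k-i}^*\bar{V}_k^{-1}Z_{k-i}$ to $Z_{k-i}^*\bar{V}_{k-i}^{-1}Z_{k-i}$ by monotonicity of $\bar{V}_k$ in the positive semidefinite order, bound that by a log-determinant increment via the rank-one update identity together with an elementary scalar logarithm inequality, and telescope. Your explicit Sherman--Morrison step is in fact slightly more careful than the paper's determinant manipulation (whose intermediate display has a typo), but both arrive at the same identity $Z_k^*\bar{V}_k^{-1}Z_k = 1-\det(\bar{V}_{k-1})/\det(\bar{V}_k)$ and the same telescoping bound.
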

\begin{proof}
	Since $\bar{V}_k$ is increasing in the positive semidefinite cone:
	\[
	\sum_{k=T}^{2T-1}Z^*_{k-i} \bar{V}^{-1}_k Z_{k-i}\le  \sum_{k=T}^{2T-1}Z^*_{k-i} \bar{V}^{-1}_{k-i} Z_{k-i}
	\]
	Hence, it is sufficient to prove the inequality for $i=0$.
	Recall that $\bar{V}_{k}=\bar{V}_{k-1}+Z_{k}Z^*_{k}$.
	Consider the identity:
	\[\det{\bar{V}_{k}}=\det\paren{\bar{V}_{k-1}+Z_kZ^{*}_k}=\det{\bar{V}_{k-1}}\det\paren{I+\bar{V}^{-1/2}_{k-1}Z_kZ^*_k\bar{V}^{-1/2}_{k-1}}=\det{\bar{V}_{k-1}}\paren{1+Z^*_{k}\bar{V}^{-1}_kZ_k}\]
	where the last equality follows from the identity $\det(I+FB)=\det(I+BF)$. Rearranging the terms gives:
	\[
	Z^*_{k}\bar{V}^{-1}_kZ_k=1-\frac{\det \bar{V}_{k-1}}{\det \bar{V}_{k}}\le \log\det \bar{V}_{k}-\log\det \bar{V}_{k-1},
	\]
	where the inequality follows from the fact that
	the sequence $\bar{V}_k$ is increasing in the positive semidefinite cone and the elementary inequality:
	\[
	1-x\le \log 1/x,\text{ for }x\le1.
	\]
	Since the upper bound telescopes, we finally get
	\[
	\sum_{k=T}^{2T-1} Z^*_k \bar{V}^{-1}_k Z_k \le \sum_{T}^{2T-1} \log\det \bar{V}_{k}-\log\det \bar{V}_{k-1}=\log\det \bar{V}_{2T-1}-\log\det \bar{V}_{T-1}
	\]
\end{proof}
\section{Analysis within one epoch}\label{APP_Section_Epoch}
We will analyze the $\ell_2$ square loss for the duration of one epoch, from time $T$ up to time $2T-1$ with fixed past horizon $p=\beta \log T$. We have three cases: i) persistency of excitation is  established $T\ge N_0$, where $N_0$ is defined in~\eqref{UniformPE_EQN_N0}; ii) persistency of excitation is not established $T< N_0$; iii) warm-up epoch from $0$ to $T_{\text{init}}-1$.

For now, we concentrate on the first two cases.
Consider the $\ell_2$ loss within the epoch:
\begin{equation}\label{EPOCH_EQN_Loss_Partial}
\LL_{T}^{2T-1}\triangleq\sum_{k=T}^{2T-1}\norm{\hat{y}_k-\tilde{y}_k}^2_2.
\end{equation}
Based on the notation of Table~\ref{FIXED_Table_notation}, the error between the Kalman filter prediction and our online algorithm is:
\begin{align*}	
\tilde{y}_{k}-\hat{y}_{k}=&
\underbrace{S_{k-1}\bar{V}^{-1}_{k-1}Z_{k}}_{\text{regression}}+\underbrace{\lambda G\bar{V}^{-1}_{k-1}Z_{k}}_{\text{regularization}}
+\underbrace{C(A-KC)^p\paren{\sum_{i=T}^{k-1}\hat{x}_{i-p}Z^*_i\bar{V}^{-1}_{k-1}Z_k-\hat{x}_{k-p}}}_{\text{truncation bias}}\\
&=S_{k-1}\bar{V}^{-1}_{k-1}Z_{k}+\lambda G\bar{V}^{-1}_{k-1}Z_{k}
+C(A-KC)^p\paren{\bar{X}_{k-1}\bar{Z}_{k-1}\bar{V}^{-1}_{k-1} Z_k-\hat{x}_{k-p}},
\end{align*}
with the notation of Table~\ref{FIXED_Table_notation}.
By Cauchy-Schwarz,
the submultiplicative property of norm and by the fact that $\norm{\bar{Z}_{k-1}\bar{V}^{-1/2}_{k-1}}_2\le 1$:
\begin{align}
\norm{\tilde{y}_{k}-\hat{y}_{k}}^2_2&\le 4\sup_{T\le t\le 2T-1}\paren{\norm{S_{t-1}\bar{V}^{-1/2}_{t-1}}^2_2+\norm{\lambda G\bar{V}^{-1/2}_t}_2^2+\norm{C(A-KC)^p}^2_2\norm{\bar{X}_{t-1}}^2_2}\norm{\bar{V}^{-1/2}_{k-1}Z_k}^2_2\nonumber\\\label{EPOCH_EQN_Square_Loss_Terms}
&+ 4\norm{C(A-KC)^p}^2_{2}\norm{\hat{x}_{k-p}}^2_2.
\end{align} 
Hence bounding the $\LL^{2T-1}_T$ consists of three steps, bounding the supremum 
\[
\sup_{T\le t\le 2T-1}\paren{\norm{S_{t-1}\bar{V}^{-1/2}_{t-1}}^2_2+\norm{\lambda G\bar{V}^{-1/2}_t}_2^2+\norm{C(A-KC)^p}^2_2\norm{\bar{X}_{t-1}}^2_2}
\]
the sum
\[
\sum_{k=T}^{2T-1} \norm{\bar{V}^{-1/2}_{k-1}Z_k}^2_2,
\]
and the sum
\[
4\norm{C(A-KC)^p}^2_{2}\sum_{k=T}^{2T-1} \norm{\hat{x}_{k-p}}^2_2,
\]
This is what we do in the following theorem.
\begin{theorem}[Square loss within epoch]\label{EPOCH_THM_main}
		Consider the conditions of Theorem~\ref{OUT_THM_Main_Bound}. Let $a$ be the minimal polynomial of $A$ with degree $d$,  $\Delta$ defined as in~\eqref{ARMA_EQN_Delta}. Fix two failure probabilities $\delta,\delta_1>0$  and consider $N_0$ defined as in~\eqref{UniformPE_EQN_N0} based on $\delta$. Let $T=2^{i-1}T_{\text{init}}$ for some fixed epoch $i\ge 1$ with $p=\beta \log T$ the corresponding past horizon. Then, with probability at least $1-5\sum_{k=T}^{2T-1}\frac{1}{k^2} \delta-2\delta_1$:
	\begin{equation}\label{EPOCH_EQN_Loss_noPE}
\LL^{2T-1}_T\le \mathrm{poly}(\Delta,\norm{a}^2_2,n,\beta,\kappa,\log 1/\delta,\log 1/\delta_1)\paren{\tilde{O}(T)+\tilde{O}(\rho(A-KC)^pT^{2\kappa+1})}.
	\end{equation}
	If moreover $T\ge N_0$ then also:
		\begin{equation}\label{EPOCH_EQN_Loss_PE}
	\LL^{2T-1}_T\le \mathrm{poly}(\Delta,\norm{a}^2_2,n,\beta,\kappa,\log 1/\delta,\log 1/\delta_1)\paren{\tilde{O}(1)+\tilde{O}(\rho(A-KC)^pT^{2\kappa})}.
	\end{equation}
\end{theorem}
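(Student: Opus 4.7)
My plan starts from the pointwise decomposition in equation~\eqref{EPOCH_EQN_Square_Loss_Terms}, which already separates $\|\hat y_k-\tilde y_k\|_2^2$ into a ``coefficient supremum'' times the normalized summand $\|\bar V_{k-1}^{-1/2}Z_k\|_2^2$, plus a pure truncation tail $\|C(A-KC)^p\|_2^2\|\hat x_{k-p}\|_2^2$. I will condition on the high-probability uniform events $\mathcal E_{\mathrm{unif}}$ and (when $T\ge N_0$) $\mathcal E_{\mathrm{unif}}^{\mathrm{PE}}$ from Lemma~\ref{UniformPE_LEM}, together with a standard Gaussian-norm tail bound on $\sup_{k\le 2T-1}\|e_k\|_2$ and a Markov-type event controlling $\|\bar X_{t-1}\|_2^2$ via $\bar X_k\bar X_k^*\preceq O((k-p)\Gamma_{k-p}/\delta_1)$ (as in Theorem~\ref{FIXED_THM_Identification}). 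A union bound produces the failure probability claimed in the theorem.

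Next I bound the three terms inside the supremum uniformly for $t\in[T,2T-1]$. The self-normalized martingale term $\|S_{t-1}\bar V_{t-1}^{-1/2}\|_2^2$ is $\tilde O(1)$ on $\mathcal E_{\mathrm{unif}}$ via $g_1(t,p,\delta/t^2)$; the regularization term satisfies $\|\lambda G\bar V_t^{-1/2}\|_2^2\le\lambda\|G\|_2^2=O(1)$ since $\bar V_t\succeq\lambda I$; and the state contribution is $\|C(A-KC)^p\|_2^2\|\bar X_{t-1}\|_2^2=\tilde O(\rho(A-KC)^{2p}T^{2\kappa})$, combining Assumption~\ref{ASS_Bounded_Responses} with $\|\Gamma_{k-p}\|_2=O(k^{2\kappa-1})$.

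The main technical step — and the expected obstacle — is controlling $\sum_{k=T}^{2T-1}\|\bar V_{k-1}^{-1/2}Z_k\|_2^2$: the difficulty is that $Z_kZ_k^*$ is \emph{not} contained in $\bar V_{k-1}$, so Lemma~\ref{OUT_LEM_self_normalization} does not apply directly to $Z_k$. My plan is to substitute the ARMA recursion of Lemma~\ref{OUT_LEM_ARMA_representation}, $Z_k=\sum_{i=0}^{d-1}a_iZ_{k-d+i}+\delta_k$ with $\|\delta_k\|_2\le\Delta\sup_{s\le k-1}\|e_s\|_2$, and apply Cauchy--Schwarz to obtain
\[
\|\bar V_{k-1}^{-1/2}Z_k\|_2^2\le 2\|a\|_2^2\sum_{i=0}^{d-1}\|\bar V_{k-1}^{-1/2}Z_{k-d+i}\|_2^2+2\|\bar V_{k-1}^{-1/2}\delta_k\|_2^2.
\]
The first group of terms, summed over $k$, telescopes via shifted applications of Lemma~\ref{OUT_LEM_self_normalization} into $\log\det(\bar V_{2T-1-i}\bar V_{T-1-i}^{-1})$, which is $\tilde O(d\|a\|_2^2)$ on $\mathcal E_{\mathrm{unif}}$. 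For the $\delta_k$ group the two regimes split: on $\mathcal E_{\mathrm{unif}}^{\mathrm{PE}}$ one has $\bar V_{k-1}^{-1}\preceq 4/((k-p)\sigma_R)\cdot I$, so $\sum_k\|\bar V_{k-1}^{-1/2}\delta_k\|_2^2=\tilde O(\Delta^2\log T)$; without PE only $\bar V_{k-1}\succeq\lambda I$ is available, yielding $\tilde O(T\Delta^2)$. This is precisely the source of the extra $T$ factor in~\eqref{EPOCH_EQN_Loss_noPE}.

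Finally, the truncation tail $\|C(A-KC)^p\|_2^2\sum_{k=T}^{2T-1}\|\hat x_{k-p}\|_2^2$ is handled by Gaussian tail bounds in terms of $\operatorname{tr}(\Gamma_{k-p})=\tilde O(T^{2\kappa-1})$, contributing $\tilde O(\rho(A-KC)^{2p}T^{2\kappa})$. Assembling the pieces, multiplying the supremum bound by the sum bound, adding the truncation tail, and loosening $\rho(A-KC)^{2p}$ to $\rho(A-KC)^p$ gives the two claimed bounds: $\tilde O(T)+\tilde O(\rho(A-KC)^pT^{2\kappa+1})$ in general and the sharpened $\tilde O(1)+\tilde O(\rho(A-KC)^pT^{2\kappa})$ once $T\ge N_0$. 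The most delicate accounting is tracking how $\Delta=\tilde O(d^{\kappa_{\max}-1}\|a\|_1\sqrt p)$ multiplies against the PE denominator $(k-p)\sigma_R$ so that only logarithmic dependence on $T$ survives in the PE regime, while all polynomial factors in $n$, $\kappa$, $\beta$, $\log 1/\delta$, $\log 1/\delta_1$ collect into the stated $\mathrm{poly}(\cdot)$ prefactor.
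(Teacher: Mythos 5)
Your proposal follows essentially the same route as the paper's proof: the same starting decomposition~\eqref{EPOCH_EQN_Square_Loss_Terms}, the same conditioning on $\mathcal{E}_{\mathrm{unif}}$ and $\mathcal{E}^{\mathrm{PE}}_{\mathrm{unif}}$ from Lemma~\ref{UniformPE_LEM}, the same ARMA substitution (Lemma~\ref{OUT_LEM_ARMA_representation}) combined with shifted applications of Lemma~\ref{OUT_LEM_self_normalization} to control $\sum_k\snorm{\bar V_{k-1}^{-1/2}Z_k}_2^2$, and the same PE/no-PE case split producing the extra factor of $T$ in~\eqref{EPOCH_EQN_Loss_noPE}. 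The one deviation is your Markov-type event for $\snorm{\bar X_{t-1}}_2^2$, which would leave a $\mathrm{poly}(1/\delta_1)$ prefactor rather than the stated $\mathrm{poly}(\log 1/\delta_1)$; the paper instead bounds $\snorm{\bar X_{t-1}}_2^2\le 2T\sup_{k}\snorm{\hat x_k}_2^2$ and controls the supremum via the Gaussian concentration of Lemma~\ref{SUP_LEM_Supremum}, which keeps the dependence on $1/\delta_1$ logarithmic.
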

\begin{proof}
	\textbf{Uniform events occur with high probability. } Consider the primary events $\mathcal{E}_{\mathrm{unif}}$ and $\mathcal{E}^{\mathrm{PE}}_{\mathrm{unif}}$ defined in~\eqref{UniformPE_EQN_Bounds},~\eqref{UniformPE_EQN_PE} and:
	\begin{align}
	\mathcal{E}_x&=\set{\sup_{k\le 2T-1}\norm{\Gamma^{-1/2}_{k}\hat{x}_k}^2_2\le \paren{\sqrt{n}+\sqrt{2\log \frac{4T}{\delta_1}}}^2}\\
	\mathcal{E}_e&=\set{\sup_{k\le 2T-1}\norm{\bar{R}^{-1/2}e_k}^2_2\le \paren{\sqrt{m}+\sqrt{2\log\frac{2T}{\delta_1}}}^2}
	\end{align}
	Based on Lemma~\ref{UniformPE_LEM}, Lemma~\ref{SUP_LEM_Supremum}, and a union bound the all events $\mathcal{E}_{\mathrm{unif}}\cap \mathcal{E}^{\mathrm{PE}}_{\mathrm{unif}}\cap\mathcal{E}_x\cap\mathcal{E}_e$ occur with probability at least $1-5\sum_{k=T}^{2T-1}\frac{1}{k^2}\delta-\delta_1$. Now, we can bound all terms of the square loss based on the above events.
	
\noindent	\textbf{Bound on $\norm{S_{k-1}\bar{V}^{-1/2}_{k-1}}^2_2$.}
	From the definition of $\mathcal{E}_{\mathrm{unif}}$:
	\[
\sup_{T\le k\le 2T-1}	\norm{S_{k-1}\bar{V}^{-1/2}_{k-1}}^2_2\le g^2(k,p,\delta/k^2)=\mathrm{poly}(n,\beta,\kappa,\log 1/\delta)\tilde{O}(1)
	\]

\noindent	\textbf{Bound on $\norm{\lambda G\bar{V}^{-1/2}_t}^2_2$.}	
	We simply have: $\norm{\lambda G\bar{V}^{-1/2}_t}^2_2\le \lambda \norm{G}^2_2$

\noindent	\textbf{Bound on $\sup_{k\le 2T}\norm{\hat{x}_k}^2_2$}
	
	Since the covariances $\Gamma_{k}$ are increasing:
	\[
	\sup_{k\le 2T-1}\norm{\hat{x}_k}^2_2\le \norm{\Gamma_{2T-1}}_2\sup_{k\le 2T-1}\norm{\Gamma^{-1/2}_{k}\hat{x}_k}^2_2.
	\]

\noindent		\textbf{Bound on $\norm{C(A-KC)^p}^2_2\norm{\bar{X}_{t-1}}^2_2$.}
		
	Notice that $\norm{\bar{X}_{t-1}}^2_2\le 2T\sup_{k\le 2T-1}\norm{\hat{x}_k}^2_2$. From the bound above:
\[
	\norm{C(A-KC)^p}^2_2\norm{\bar{X}_{t-1}}^2_2\le \mathrm{poly}(n,\log 1/\delta_1)\tilde{O}(\rho(A-KC)^{p}T^{2\kappa})
	\]
	since $\norm{\Gamma_{2T-1}}_2=O(T^{2\kappa-1})$.
	
\noindent		\textbf{Bound on $\norm{C(A-KC)^p}^2_{2}\sum_{k=T}^{2T-1} \norm{\hat{x}_{k-p}}^2_2$.}
	
	It is similar to the previous step since:
	\[
\sum_{k=T}^{2T-1} \norm{\hat{x}_{k-p}}^2_2\le T \sup_{k\le 2T-1}\norm{\hat{x}_k}^2_2
	\]

\noindent \textbf{Bound on the sum of $\norm{\bar{V}^{-1/2}_{k-1} Z_k}^2_2$.}
By Lemma~\ref{EPOCH_LEM_Normalized}:
\[
	\sum_{k=T}^{2T-1}\norm{\bar{V}^{-1/2}_{k-1} Z_k}^2_2\le 2d\norm{a}^2_2 \log\det(\bar{V}_{2T-1}\lambda^{-1})+2\Delta\sup_{k\le 2T}\norm{e_k}^2_2 \sum_{k=T}^{2T-1}\norm{\bar{V}^{-1/2}_{k-1}}^2_2
\]
He have two cases:
 \begin{align*}
\sum_{k=T}^{2T-1}\norm{\bar{V}^{-1/2}_{k-1}}^2_2&\le \frac{1}{\sigma_R} \sum_{T}^{2T-1}\frac{1}{k-p}\le \frac{1}{\sigma_R} \log\frac{2T-p-1}{T-p-1},\text{ if }T\ge N_0\\
 \sum_{k=T}^{2T-1}\norm{\bar{V}^{-1/2}_{k-1}}^2_2&\le \frac{T}{\lambda},\text{ if }T< N_0.
 \end{align*}
 The terms $\log\det(\bar{V}_{2T-1}\lambda^{-1})$, $\sup_{k\le 2T}\norm{e_k}^2_2$ can be bounded based on $\mathcal{E}_{\mathrm{unif}},\mathcal{E}_{e}$.
\end{proof}

\subsection{Bounding the sum of $\norm{\bar{V}^{-1/2}_{k-1} Z_k}^2_2$}
In the following lemma, we bound term $\norm{\bar{V}^{-1/2}_{k-1} Z_k}_2$, which the key to obtaining bounds in the non-explosive regime. We will apply this in both cases i), ii).
\begin{lemma}[Normalized matrix $\bar{V}^{-1/2}_{k-1} Z_k$]\label{EPOCH_LEM_Normalized}
	Consider the conditions of Theorem~\ref{OUT_THM_Main_Bound}. Let $a$ be the minimal polynomial of $A$ with degree $d$ and $\Delta$ defined as in~\eqref{ARMA_EQN_Delta}. Let $T=2^{i-1}T_{\text{init}}$ for some fixed epoch $i$ with $p=\beta \log T$ the corresponding past horizon. Then,
	\begin{equation}
	\sum_{k=T}^{2T-1}\norm{\bar{V}^{-1/2}_{k-1} Z_k}^2_2\le 2d\norm{a}^2_2 \log\det(\bar{V}_{2T}\lambda^{-1})+2\Delta\sup_{k\le 2T}\norm{e_k}^2_2 \sum_{k=T}^{2T-1}\norm{\bar{V}^{-1/2}_{k-1}}^2_2.
	\end{equation}
\end{lemma}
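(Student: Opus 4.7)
The plan is to reduce $\|\bar V^{-1/2}_{k-1}Z_k\|_2^2$ to a sum of genuinely self-normalized quantities plus a bounded residual, and then sum over the epoch. The obstruction is that $Z_kZ_k^*$ is \emph{not} contained in $\bar V_{k-1}=\lambda I+\sum_{t=p}^{k-1}Z_tZ_t^*$, so the telescoping identity behind Lemma~\ref{OUT_LEM_self_normalization} cannot be invoked directly on $Z_k$. However, Lemma~\ref{OUT_LEM_ARMA_representation} provides exactly the missing decomposition: $Z_k=\sum_{i=0}^{d-1}a_i Z_{k-d+i}+\delta_k$ with the residual $\delta_k$ controlled in $\ell_2$ norm by $\Delta\sup_{i\le k-1}\|e_i\|_2$. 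The quantities $Z_{k-d+i}$ for $i=0,\dots,d-1$ all sit at indices $\le k-1$, hence their outer products are baked into $\bar V_{k-1}$.

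First I would apply $\|u+v\|_2^2\le 2\|u\|_2^2+2\|v\|_2^2$ followed by Cauchy--Schwarz on the polynomial combination to get
\[
\|\bar V^{-1/2}_{k-1}Z_k\|_2^2\le 2\|a\|_2^2\sum_{i=0}^{d-1}\|\bar V^{-1/2}_{k-1}Z_{k-d+i}\|_2^2+2\|\bar V^{-1/2}_{k-1}\delta_k\|_2^2 .
\]
Summing over $k=T,\dots,2T-1$ yields two clean pieces to bound separately.

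For the first piece, fix $i\in\{0,\dots,d-1\}$. Since $k-d+i\le k-1$, monotonicity of the Gram matrix gives $\bar V^{-1}_{k-1}\preceq \bar V^{-1}_{k-d+i}$, so that $Z^*_{k-d+i}\bar V^{-1}_{k-1}Z_{k-d+i}\le Z^*_{k-d+i}\bar V^{-1}_{k-d+i}Z_{k-d+i}$. After the index shift, the right-hand side is now in the exact form handled by the telescoping calculation behind Lemma~\ref{OUT_LEM_self_normalization} (the determinant-ratio identity $\det\bar V_t=\det\bar V_{t-1}(1+Z_t^*\bar V_t^{-1}Z_t)$ combined with $1-x\le\log(1/x)$). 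Summing the telescope between the shifted endpoints and using monotonicity once more together with $\bar V_{T-1-d+i}\succeq\lambda I$ gives $\sum_{k=T}^{2T-1}Z^*_{k-d+i}\bar V^{-1}_{k-1}Z_{k-d+i}\le \log\det(\bar V_{2T}\lambda^{-1})$. Summing over the $d$ values of $i$ supplies the factor $d$ and produces the first term on the right-hand side.

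For the second piece, I would use the submultiplicative bound $\|\bar V^{-1/2}_{k-1}\delta_k\|_2^2\le \|\bar V^{-1/2}_{k-1}\|_2^2\,\|\delta_k\|_2^2$, substitute the residual bound $\|\delta_k\|_2\le \Delta\sup_{i\le k-1}\|e_i\|_2$ from Lemma~\ref{OUT_LEM_ARMA_representation}, and then factor the supremum (enlarged to the whole epoch, $i\le 2T$) out of the sum, yielding $\sum_k\|\bar V^{-1/2}_{k-1}\delta_k\|_2^2\le \Delta^2\sup_{k\le 2T}\|e_k\|_2^2\sum_{k=T}^{2T-1}\|\bar V^{-1/2}_{k-1}\|_2^2$, matching the second term in the stated bound (the scalar prefactor $\Delta$ versus $\Delta^2$ in the statement is absorbed into the constants hidden by the polynomial bookkeeping used downstream). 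The only genuinely delicate step is the index-shift argument in the first piece; everything else is Cauchy--Schwarz and the ARMA substitution.
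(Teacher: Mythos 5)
Your proof is correct and follows essentially the same route as the paper's: the ARMA substitution from Lemma~\ref{OUT_LEM_ARMA_representation}, two applications of Cauchy--Schwarz, the index-shifted telescoping bound of Lemma~\ref{OUT_LEM_self_normalization} for the lagged terms, and submultiplicativity for the residual. Your observation that the residual term honestly comes out as $2\Delta^2\sup_{k\le 2T}\|e_k\|_2^2$ rather than the stated $2\Delta\sup_{k\le 2T}\|e_k\|_2^2$ is accurate (the paper's own proof displays yet another variant, $2\Delta\sup_{k\le 2T}\|e_k\|_2$), and, as you note, the discrepancy is immaterial since the downstream bounds only use $\mathrm{poly}(\Delta)$.
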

\begin{proof}
	We replace $Z_k=a_{d-1}Z_{k-1}+\dots+a_0 Z_{k-d}+\delta_k$. Then by two applications of Cauchy-Schwarz:
	\[
	\norm{\bar{V}^{-1/2}_{k-1} Z_k}^2_2\le 2(a^2_{d-1}+\dots+a^2_0)\sum_{i=0}^{d-1} Z_{k-i}\bar{V}^{-1}_{k-1}Z_{k-i}+2 \norm{\bar{V}^{-1/2}_{k-1}\delta_k}^2_2.
	\]
	Now by Lemma~\ref{OUT_LEM_self_normalization} and Lemma~\ref{OUT_LEM_ARMA_representation} it follows that:
	\begin{equation*}
	\sum_{k=T}^{2T-1}\norm{\bar{V}^{-1/2}_{k-1} Z_k}^2_2\le 2\norm{a}^2_2 \sum_{i=0}^{d-1}\log\det(\bar{V}_{2T-i}\lambda^{-1})+2\Delta \sup_{k\le 2T}\norm{e_k}_2 \sum_{k=T}^{2T-1}\norm{\bar{V}^{-1/2}_{k-1}}^2_2.
	\end{equation*}
	The result follows from the fact that the sequence $\bar{V}_{2T-i}$ is monotone.
\end{proof}

\subsection{Warm-up epoch}
\begin{lemma}[Warm-up epoch]\label{EPOCH_LEM_Warm_up}
	Consider the conditions of Theorem~\ref{OUT_THM_Main_Bound} and the $\ell_2$ loss $\LL_{T_{\text{init}}}=\sum_{t=0}^{T_{\text{init}}}\norm{\hat{y}_t-\tilde{y}_t}^2_2$. Then with probability at least $1-\delta$:
	\[
	\LL_{T_{\text{init}}}\le \mathrm{poly}(m,\log1/\delta)T_{\text{init}}\norm{C\Gamma_{T_{\text{init}}}C^*+\bar{R}}_2\sqrt{\log T_{\text{init}}}=\left.\begin{aligned}&\tilde{O}(T^{2\kappa}_{\text{init}}), \text{ if }\kappa\ge 1
\\ &\tilde{O}(T_{\text{init}}), \text{ if }\kappa=0
	\end{aligned}\right\}\]
\end{lemma}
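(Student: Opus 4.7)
During the warm-up phase the algorithm outputs $\tilde y_k=0$ for $k=0,\dots,T_{\text{init}}-1$, so the loss simplifies to
\[
\LL_{T_{\text{init}}}=\sum_{t=0}^{T_{\text{init}}-1}\snorm{\hat y_t}^2_2=\sum_{t=0}^{T_{\text{init}}-1}\snorm{C\hat x_t}^2_2.
\]
There is no learning involved and, in particular, no need for a persistency of excitation argument, which is what makes the lemma much easier than the per-epoch analysis of Theorem~\ref{EPOCH_THM_main}. The plan is therefore to reduce the whole statement to a Gaussian tail bound on $\snorm{\hat y_t}_2^2$ uniformly in $t\le T_{\text{init}}-1$.

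First I would observe that by unrolling~\eqref{FOR_EQN_System_Innovation} one has $\hat x_t=\sum_{s=0}^{t-1}A^{t-1-s}Ke_s$, so $\hat y_t=C\hat x_t$ is a linear combination of the i.i.d.\ Gaussian innovations $e_s\sim \mathcal{N}(0,\bar R)$ and hence itself centered Gaussian with covariance $C\Gamma_t C^*$. Thus $\hat y_t$ fits the setting of Lemma~\ref{SUP_LEM_Supremum} (with dimension $q=m$ and the $e_s$ playing the role of the isotropic $v_t$ after whitening). Applying that lemma with the regularization $\mu I=\bar R$ and the pointwise bound $C\Gamma_t C^*\preceq C\Gamma_{T_{\text{init}}}C^*+\bar R$ (which follows from monotonicity of $\Gamma_t$ established in Section~\ref{APP_Section_Lin_Sys}) yields, with probability at least $1-\delta$, the uniform bound
\[
\sup_{t\le T_{\text{init}}-1}\snorm{\hat y_t}_2^2\le \snorm{C\Gamma_{T_{\text{init}}}C^*+\bar R}_2\,\bigl(\sqrt{m}+\sqrt{2\log(T_{\text{init}}/\delta)}\bigr)^2.
\]

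Summing the uniform bound over the $T_{\text{init}}$ warm-up steps yields
\[
\LL_{T_{\text{init}}}\le T_{\text{init}}\,\snorm{C\Gamma_{T_{\text{init}}}C^*+\bar R}_2\,\bigl(\sqrt{m}+\sqrt{2\log(T_{\text{init}}/\delta)}\bigr)^2,
\]
which one can conservatively rewrite in the form $\mathrm{poly}(m,\log 1/\delta)\,T_{\text{init}}\,\snorm{C\Gamma_{T_{\text{init}}}C^*+\bar R}_2\,\sqrt{\log T_{\text{init}}}$ claimed in the lemma (the extra $\sqrt{\log T_{\text{init}}}$ factor is slack to absorb the union-bound factor). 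To conclude with the order in $T_{\text{init}}$, I would substitute the growth estimates for $\snorm{\Gamma_t}_2$ from Section~\ref{APP_Section_Lin_Sys}: when $\kappa=0$ the system is asymptotically stable and $\snorm{\Gamma_{T_{\text{init}}}}_2=O(1)$, giving $\tilde O(T_{\text{init}})$; when $\kappa\ge 1$ the non-explosive bound gives $\snorm{\Gamma_{T_{\text{init}}}}_2=O(T_{\text{init}}^{2\kappa-1})$, so after multiplying by the leading $T_{\text{init}}$ factor the total is $\tilde O(T_{\text{init}}^{2\kappa})$.

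The only delicate step is establishing the uniform-in-$t$ control over $\snorm{\hat y_t}_2^2$, but this is exactly what Lemma~\ref{SUP_LEM_Supremum} (essentially Gaussian concentration of Lipschitz functions combined with a union bound over $t$) is designed to provide; no self-normalized martingale machinery or persistency of excitation argument is required, since $\bar V_k$ and $\tilde G_k$ do not enter the warm-up phase at all.
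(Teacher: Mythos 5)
Your proof is correct and follows essentially the same route as the paper: observe that $\tilde y_k=0$ during warm-up, reduce $\LL_{T_{\text{init}}}$ to a uniform bound on $\snorm{\hat y_k}_2^2$ via Lemma~\ref{SUP_LEM_Supremum} together with monotonicity of $\Gamma_k$, and then invoke the growth estimates $\snorm{\Gamma_t}_2=O(t^{2\kappa-1})$ (non-explosive) and $O(1)$ (stable) to get the stated orders in $T_{\text{init}}$. The only cosmetic difference is that you sum the pointwise bound rather than writing $\LL_{T_{\text{init}}}\le T_{\text{init}}\sup_k\snorm{\hat y_k}_2^2$, which is the same estimate.
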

\begin{proof}
	During this time, we have $\tilde{y}_k=0$. Let $\Gamma_{y,k}=C\Gamma_{k}C^*+\bar{R}$. Then,
	\[
	\LL_{T_{\text{init}}}\le T_{\text{init}} \sup_{k\le T_{\text{init}}}\norm{\hat{y}_k}^2_2 \le T_{\text{init}}\norm{\Gamma_{y,T_{\text{init}}}}_2 \sup_{k\le T_{\text{init}}}\norm{\Gamma^{-1/2}_{y,T_{\text{init}}}\hat{y}_k}^2_2.
	\]
	By Lemma~\ref{SUP_LEM_Supremum} and by monotonicity of $\Gamma_{k}$, with probability at least $1-\delta$:
	\[
	\LL_{T_{\text{init}}}\le T_{\text{init}}\norm{\Gamma_{y,T_{\text{init}}}}_2 \paren{\sqrt{m}+\sqrt{2\log\frac{T_{\text{init}}}{\delta}}}=\left.\begin{aligned}&\tilde{O}(T^{2\kappa}_{\text{init}}), \text{ if }\kappa\ge 1
	\\ &\tilde{O}(T_{\text{init}}), \text{ if }\kappa=0	\end{aligned}\right\}
	\]
\end{proof}

\section{Proof of Theorem~\ref{OUT_THM_Main_Bound}}\label{APP_Section_TH1}
Recall that the regret can be decomposed in two terms:
\[
\Reg_N= \LL_N+2\sum_{k=0}^{N}{e^*_k\paren{\hat{y}_k-\tilde{y}_k}}
\]
where $\LL_N$ is the square loss and the other term is a martingale.

\textbf{Square loss bound.}
Without loss of generality assume that $N=2T_i-1=T_{\text{init}}2^{i}$ is the end of an epoch, where $i$ is the total number of epochs. The number of epochs $i$ depends logarithmically on $N$. Then the square loss $\LL_N$ can be written as:
\[
\LL_N=\LL_{T_{\text{init}}}+\sum_{j=1}^{i-1}\LL^{2T_j-1}_{T_{j}}.
\]
Let $N_0$ be defined as in~\eqref{UniformPE_EQN_N0}. Select
\begin{equation}\label{OUT_EQN_beta_choice}
\beta\ge \frac{\kappa}{\log 1/\rho(A-KC)}
\end{equation}
Then by Theorem~\ref{EPOCH_THM_main}, Lemma~\ref{EPOCH_LEM_Warm_up}, and a union bound, with probability at least $1-(5\frac{\pi^2}{6}+1)\delta-i\delta_1$:
\[
\LL_N=\tilde{O}(T^{2\kappa}_{\text{init}})+\mathrm{poly}(\Delta,\norm{a}^2_2,n,\beta,\kappa,\log 1/\delta,\log 1/\delta_1)\paren{\tilde{O}(N_0)+\tilde{O}(1)}
\]
To complete the bound on $\LL_N$, replace $\delta_1$ with $\delta/{i}$. Since $i$ depends logarithmically on $N$ we finally obtain that with probability at least $1-(5\frac{\pi^2}{6}+2)\delta$:
\[
\LL_N=\tilde{O}(T^{2\kappa}_{\text{init}})+\mathrm{poly}(\Delta,\norm{a}^2_2,n,\beta,\kappa,\log 1/\delta)\paren{\tilde{O}(N_0)+\tilde{O}(1)}
\]

\textbf{Martingale term bound.}
Denote $u_k\triangleq \bar{R}^{-1/2}e_k$ and $z_k\triangleq R^{1/2}\paren{\hat{y}_k-\tilde{y}_k}$. Then
 $\sum_{t=1}^{N}e^*_t\paren{\hat{y}_t-\tilde{y}_t}=\sum_{t=1}^{N}u^*_tz_t=\sum_{t=1}^{N}\sum_{i=1}^{m}u_{t,i}z_{t,i}$. 
 To apply Theorem~\ref{MART_THM_Vector} we need to slightly modify the definition of the filtration. Let $\F_{t,i}\triangleq \sigma(\F_{t}\cup\set{u_{t+1,1},\dots,u_{t+1,i}})$, with $\F_{t+1}\equiv \F_{t,m}$ and define:
 \begin{align}
 \tilde{\F}_{0}&=\F_0\\
 \tilde{\F}_{s}&=\F_{t,s-tm},\text{ if }tm+1 \le s \le (t+1)m 
 \end{align}

By  applying Theorem~\ref{MART_THM_Vector} with $\tilde{\F}_s$ we can bound the sum in terms of the square loss $\LL_N$. 
With probability at least $1-\delta$:
\begin{align}
\paren{\sum_{t=1}^{N}z^*_tz_t+1}^{-1/2}\sum_{t=1}^{N}u^*_tz_t &\le 8\log \frac{5}{\delta}+4\log \paren{\sum_{t=1}^{N}z^*_tz_t+1}
\end{align}
 or
 \begin{align}
\sum_{t=1}^{N}u^*_tz_t \le \paren{\norm{\bar{R}}_2\LL_N+1}^{1/2}\paren{8\log \frac{5}{\delta}+4\log \paren{\norm{\bar{R}}_2\LL_N+1}}
 \end{align}
where we used the fact that $z_k^*z_k=(\hat{y}_k-\tilde{y}_k)^*R(\hat{y}_k-\tilde{y}_k)\le \norm{\bar{R}}_2\norm{\hat{y}_k-\tilde{y}_k}^2_2$.

\textbf{Final step}

Finally, by a union bound, with probability at least $1-(5\frac{\pi^2}{6}+3)\delta$:
\[
\Reg_N=\tilde{O}(T^{2\kappa}_{\text{init}})+\mathrm{poly}(\Delta,\norm{a}^2_2,n,\beta,\kappa,\log 1/\delta)\paren{\tilde{O}(N_0)+\tilde{O}(1)}
\]
\hfill $\square$

\section{Stable case}\label{APP_Section_Stable}
In the case of stable systems, we can exploit the fact that the covariance matrix $\Gamma_k$ converges exponentially fast to a stead-state covariance $\Gamma_{\infty}$--see Lemma~\ref{SYS_LEM_Covariance}. 
We have the following pointwise persistency of excitation result.
Define the controllability matrix
\begin{equation}\label{STABLE_EQN_Controllability}
\C_t\triangleq\matr{{cccc}A^{t}K\bar{R}^{1/2}&\dots&AK\bar{R}^{1/2}&K\bar{R}^{1/2}}
\end{equation}
Since $(A,K)$ is controllable we also have that $\mathrm{rank}(\C_t)=n$, for $t\ge n$. As a result, the covariance matrix, is always strictly positive definite $\sigma_{\min}(\Gamma_t)\ge \sigma_{\min}(\C_n\C_n^*)>0$, for $t\ge n$--see also Lemma~\ref{SYS_LEM_Covariance}.
\begin{lemma}[Stable: pointwise persistency of excitation]\label{STABLE_LEM_PE_Pontwise}
	Consider system~\eqref{FOR_EQN_System_Innovation} with observations $y_{0},\dots,y_k$. Pick a $\tau\ge n$.
 Define
\begin{equation}\label{STABLE_EQN_Index_Functions}
\begin{aligned}
k_3(\tau,\delta)&\triangleq \tau+128\paren{m\tau^2\log 9+\tau\log2+\tau\log\tfrac{1}{\delta}}\\
k_4(k,\tau,\delta)&\triangleq \tau+\frac{64}{\min\set{8,\sigma_{\min}(\Gamma_{\tau})}}  \paren{4\tau n\log\paren{\frac{n\norm{A^{\tau}}_2^2\norm{\Gamma_{k-\tau}}_2}{\delta}+1}+8\tau \log \frac{\tau 5^m}{\delta}}.
\end{aligned}
\end{equation}
With probability at least $1-3\delta$, if \[k\ge k_3(\tau,\delta),k_4(k,\tau,\delta)\]
then
\begin{equation}\label{STABLE_EQN_PE_Pointwise}
\sum_{t=0}^k \hat{x}_t\hat{x}^*_t \succeq \frac{k-\tau+1}{4}\Gamma_{\tau}
\end{equation}
\end{lemma}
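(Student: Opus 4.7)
The plan is to mirror the proof of Theorem~\ref{FIXED_THM_Identification} and Lemma~\ref{OutputPE_LEM_PE}, replacing the past-output regression $Z_t=\O_p\hat{x}_{t-p}+\T_pE_t$ with the state recursion unrolled $\tau$ steps,
\[
\hat{x}_t = A^{\tau}\hat{x}_{t-\tau} + \C_{\tau-1} U_t,
\]
where $U_t \triangleq \bigl[(\bar{R}^{-1/2}e_{t-\tau})^*,\ldots,(\bar{R}^{-1/2}e_{t-1})^*\bigr]^*$ is an isotropic Gaussian vector in $\R^{m\tau}$ and $\C_{\tau-1}$ is the controllability matrix defined in~\eqref{STABLE_EQN_Controllability}. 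A direct unrolling of the Lyapunov recursion in Lemma~\ref{SYS_LEM_Covariance} yields $\C_{\tau-1}\C_{\tau-1}^* = \Gamma_\tau$, so the innovation-window contribution to $\hat{x}_t$ has covariance exactly $\Gamma_\tau$, which is the eventual target of the PE lower bound.

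First, I would establish the noise persistency-of-excitation event. Stacking $\bar{U}_k \triangleq [U_\tau,\ldots,U_k]$ produces a block-Toeplitz matrix with i.i.d.\ isotropic Gaussian entries to which Lemma~\ref{STAT_LEM_Toeplitz} applies directly with block size $\tau$. Thus, for $k \ge k_3(\tau,\delta)$, with probability at least $1-\delta$, $\bar{U}_k\bar{U}_k^* \succeq \tfrac{1}{2}(k-\tau+1) I_{m\tau}$, and consequently $\C_{\tau-1}\bar{U}_k\bar{U}_k^*\C_{\tau-1}^* \succeq \tfrac{k-\tau+1}{2}\Gamma_\tau$.

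Second, I would bound the old-state batch $\bar{X}^{\mathrm{old}}_k \triangleq [\hat{x}_0,\ldots,\hat{x}_{k-\tau}]$ and its cross-correlation with $\bar{U}_k$. A standard trace Markov argument gives $\bar{X}^{\mathrm{old}}_k(\bar{X}^{\mathrm{old}}_k)^* \preceq (k-\tau+1)\tfrac{n}{\delta}\Gamma_{k-\tau}$ with probability $\ge 1-\delta$. Since $\hat{x}_{s}$ is measurable with respect to the innovations strictly preceding $U_{s+\tau}$, the cross-term $\bar{X}^{\mathrm{old}}_k\bar{U}_k^*$ fits the self-normalized martingale framework of Theorem~\ref{MART_THM_Vector} with block size $r=\tau$, giving, for $\bar{W}_k \triangleq W+\bar{X}^{\mathrm{old}}_k(\bar{X}^{\mathrm{old}}_k)^*$ with $W\triangleq\tfrac{k-\tau+1}{\snorm{A^\tau}_2^2}I$,
\[
\snorm{\bar{W}_k^{-1/2}\bar{X}^{\mathrm{old}}_k\bar{U}_k^*}_2^2 \le 8\tau\paren{\log\tfrac{\tau 5^m}{\delta} + \tfrac{1}{2}\log\det(\bar{W}_kW^{-1})},
\]
with probability at least $1-\delta$. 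On the intersection with the Markov event, $\log\det(\bar{W}_kW^{-1}) \le n\log\paren{n\snorm{A^\tau}_2^2\snorm{\Gamma_{k-\tau}}_2/\delta+1}$, which is exactly the logarithmic factor entering $k_4$.

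Third, I would expand
\[
\sum_{t=\tau}^{k}\hat{x}_t\hat{x}_t^* = A^{\tau}\bar{X}^{\mathrm{old}}_k(\bar{X}^{\mathrm{old}}_k)^*(A^*)^{\tau} + \C_{\tau-1}\bar{U}_k\bar{U}_k^*\C_{\tau-1}^* + \text{cross terms},
\]
and absorb the cross terms via the elementary bound of Lemma~\ref{OutputPE_LEM_Elementary_Minimum}, this time with $b=\sigma_{\min}(\Gamma_\tau)>0$ (strictly positive by controllability of $(A,K)$ and $\tau\ge n$, as noted before~\eqref{STABLE_EQN_Controllability}). The condition $k\ge k_4(k,\tau,\delta)$ is calibrated precisely so that the ratio of the cross-term bound to $\sqrt{k-\tau+1}$ falls below $\min\{8,\sigma_{\min}(\Gamma_\tau)\}/\text{const}$, forcing the cross terms to be dominated by half of the two diagonal contributions. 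What remains is $\tfrac{1}{2}\C_{\tau-1}\bar{U}_k\bar{U}_k^*\C_{\tau-1}^* \succeq \tfrac{k-\tau+1}{4}\Gamma_\tau$, and appending the (nonnegative) $t<\tau$ terms yields~\eqref{STABLE_EQN_PE_Pointwise}. A union bound over the three primary events gives probability at least $1-3\delta$. The main subtlety, exactly as in Lemma~\ref{OutputPE_LEM_PE}, is the cross-term domination: $\snorm{\hat{x}_{t-\tau}}$ can be large (the bound only uses $\snorm{\Gamma_{k-\tau}}_2$, not stability in a strong sense), so ensuring that the $\sigma_{\min}(\Gamma_\tau)$ floor beats the cross term is what fixes the form of $k_4$.
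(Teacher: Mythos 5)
Your proposal is correct and follows essentially the same route as the paper: unroll the state recursion $\tau$ steps so that $\hat{x}_t=A^{\tau}\hat{x}_{t-\tau}+\C U_t$ with $\C\C^*=\Gamma_\tau$, establish the same three primary events (Toeplitz noise PE via Lemma~\ref{STAT_LEM_Toeplitz}, a trace--Markov upper bound on the old-state Gram matrix, and the self-normalized cross term via Theorem~\ref{MART_THM_Vector} with block size $\tau$), then absorb the cross terms exactly as in Lemma~\ref{OutputPE_LEM_PE} with $\sigma_{\min}(\Gamma_\tau)$ playing the role of $\sigma_R$, and union-bound to get $1-3\delta$. Your explicit filling-in of the final domination step (which the paper omits as identical to Lemma~\ref{OutputPE_LEM_PE}) and your $\C_{\tau-1}$ indexing are, if anything, slightly more careful than the paper's own write-up.
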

\begin{proof}
	Define $u_t\triangleq \bar{R}^{-1/2}e_k$:
	\[
	U_t\triangleq \matr{{c}u_{t-\tau}\\\vdots\\u_{t-1}}
	\]
	Observe that:
	\begin{align*}
	\hat{x}_t&=A^{\tau}\hat{x}_{t-\tau}+\C_{\tau}U_t,\\
	\Gamma_{\tau}&=\C_{\tau}\C_{\tau}^*.
	\end{align*}
	Expanding the correlations gives:
	\begin{align}
	\sum_{t=0}^k \hat{x}_t\hat{x}^*_t &\succeq 	\sum_{t=\tau}^k \hat{x}_t\hat{x}^*_t \\
	 &= A^{\tau}\sum_{t=\tau}^{k}\hat{x}_{t-\tau}\hat{x}^*_{t-\tau}(A^*)^{\tau}+A^{\tau}\sum_{t=\tau}^{k}\hat{x}_{t-\tau}U^*_{t}\C_\tau^*+\C_{\tau}\sum_{t=\tau}^{k}U_t\hat{x}^*_{t-\tau}(A^*)^{\tau}+\C_{\tau}\sum_{t=\tau}^{k}U_tU^*_{t}\C_\tau^*
	\end{align}
		The proof now is similar to Theorem~\ref{FIXED_THM_Identification} and Lemma~\ref{OutputPE_LEM_PE}. We will show that the cross terms are dominated.
	Define the primary events:
	\begin{subequations}
		\begin{alignat}{2}
		\mathcal{E}_{\bar{X}}\triangleq&\set{\sum_{t=0}^{k-\tau}\hat{x}_t\hat{x}^*_t \preceq  (k-\tau+1)\frac{n}{\delta}\Gamma_{k-\tau}}\label{STABLE_EQN_State_Upper}\\
		\mathcal{E}_E\triangleq&\set{ \sum_{t=\tau}^{k}U_kU_k^*\succeq \frac{k-\tau+1}{2}I}\\
		\mathcal{E}_{XE}\triangleq&\set{ \norm{\bar{W}_k^{-1/2}A^{\tau}\sum_{t=\tau}^{k}\hat{x}_{t}U^*_t}^2_2\le 8\tau\paren{\log \frac{\tau5^{m}}{\delta}+\frac{1}{2}\log\det\bar{W}_kW^{-1}}},\label{STABLE_EQN_Cross_State_Noise}
		\end{alignat}
	\end{subequations}
	where matrices $\bar{W}_t, W$ are:
	\begin{align}
	&\bar{W}_k\triangleq A^{\tau}\sum_{t=0}^{k-\tau}\hat{x}_t\hat{x}^*_t (A^*)^{\tau}+W,&&W\triangleq  (k-p+1)I.
	\end{align}
	The events $\mathcal{E}_{\bar{X}}, \mathcal{E}_{XE}$ occur with probability at least $1-\delta$ each--see proof of Theorem~\ref{FIXED_THM_Identification} and Theorem~\ref{MART_THM_Vector}.
By Lemma~\ref{STAT_LEM_Toeplitz}, if $k\ge k_3(\tau,\delta)$, then also $\mathcal{E}_E$ occurs with probability at least $1-\delta$.	
By a union bound $\mathcal{E}_{\bar{X}},\mathcal{E}_{E},\mathcal{E}_{XE}$ occur with probability at least $1-3\delta$.
What remains to show is that these three events imply~\eqref{STABLE_EQN_PE_Pointwise}.
The remaining proof is omitted since it is identical with the one of Lemma~\ref{OutputPE_LEM_PE}. 
\end{proof}

\subsection{Proof of Lemma~\ref{Lemma_PE_stable}}
It follows from the lemma below, which is more general. 
\begin{lemma}[Stable case: Uniform PAC bounds]\label{STABLE_UniformPE_LEM}
	Consider the conditions of Theorem~\ref{OUT_THM_Main_Bound} with $\rho(A)<1$. Select a failure probability $\delta>0$. Let $T=2^{i-1}T_{\text{init}}$ for some fixed epoch $i$ with $p=\beta \log T$ the corresponding past horizon. Consider also the definition of  $g_1(k,p,\delta)$ in~\eqref{FIXED_EQN_Cross_Terms_g}. There exists a $N_0=\mathrm{poly}(n,\beta,\kappa,\log 1/\delta,\log 1/\rho(A))$ such that with probability at least  $1-8\sum_{k=T}^{2T-1}\frac{1}{k^2}\delta$ the following events hold:
	\begin{equation}\label{STABLE_UniformPE_EQN_Bounds}
	\mathcal{E}_{\mathrm{unif}}\triangleq \set{	\begin{aligned}
		\sum_{j=p}^{k}Z_{j}Z^*_{j}&\preceq  (k-p+1)\frac{ k^2 mp}{\delta}\Gamma_{Z,k}\\
		\norm{S_k\bar{V}_k^{-1/2}}_2&\le g_1(k,p,\delta/k^2)
		\end{aligned}, \text{ for all }T\le k\le 2T-1}
	\end{equation}
	\begin{equation}\label{STABLE_UniformPE_EQN_PE}
	\mathcal{E}^{\mathrm{PE}}_{\mathrm{st,unif}}\triangleq \set{	\sum_{j=p}^{k}Z_{j}Z^*_{j}\succeq \frac{k-p+1}{32}\Gamma_{Z,k+1},\text{ for all }\max\set{N_0,T}\le k\le 2T-1}
	\end{equation}
	\end{lemma}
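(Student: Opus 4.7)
The plan is to mirror the structure of the non-explosive uniform lemma (Lemma~\ref{UniformPE_LEM}): apply pointwise concentration at each $k \in [T, 2T-1]$ with failure probability scaled by $\delta/k^2$, then take a union bound. The upper bound and self-normalized martingale bound that make up $\mathcal{E}_{\mathrm{unif}}$ follow directly from Theorem~\ref{FIXED_THM_Identification} applied pointwise, exactly as in the non-explosive case. The new ingredient is the stronger PE lower bound $\mathcal{E}^{\mathrm{PE}}_{\mathrm{st,unif}}$, which features $\Gamma_{Z,k+1}$ instead of $\sigma_R I$ and which exploits the exponential convergence of the state covariance to the steady-state $\Gamma_\infty \succ 0$ in the stable regime.

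For the PE bound itself, I would first decompose $Z_j = \O_p \hat x_{j-p} + \T_p E_j$ and invoke the cross-term control from Theorem~\ref{FIXED_THM_Identification} (event $\mathcal{E}_{PE}$) to get
\[
\bar Z_k \bar Z_k^* \;\succeq\; \tfrac{1}{2}\O_p \bar X_k \bar X_k^* \O_p^* + \tfrac{1}{2}\T_p \bar E_k \bar E_k^* \T_p^*,
\]
exactly as in Lemma~\ref{OutputPE_LEM_PE}. Next, I would pick a constant $\tau$ that depends on $\rho(A)$ and $\mathrm{cond}(\Gamma_\infty)$ but not on $k$: by Lemma~\ref{SYS_LEM_Covariance} one may take $\tau = \tilde O\bigl(\tfrac{1}{\log 1/\rho(A)}\bigr)$ so that $\Gamma_\tau \succeq \tfrac{1}{2}\Gamma_\infty$. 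Then Lemma~\ref{STABLE_LEM_PE_Pontwise} applied at confidence $\delta/k^2$ gives $\bar X_k \bar X_k^* \succeq \tfrac{k-\tau+1}{4}\Gamma_\tau$, and Lemma~\ref{NoisePE_LEM_Noise_PE} at the same confidence gives $\T_p \bar E_k \bar E_k^* \T_p^* \succeq \tfrac{k-p+1}{2}\Sigma_E$.

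Combining these bounds and using monotonicity $\Gamma_{k-p+1} \preceq \Gamma_\infty \preceq 2\Gamma_\tau$, together with the identity $\Gamma_{Z,k+1} = \O_p \Gamma_{k-p+1}\O_p^* + \Sigma_E$, yields, after arithmetic simplification valid whenever $k \ge 2\tau + p$,
\[
\bar Z_k \bar Z_k^* \;\succeq\; \tfrac{k-p+1}{32}\,\Gamma_{Z,k+1}.
\]
I would then define $N_0$ as the smallest integer such that for every $k \ge N_0$ with $p = \beta \log k$ the five pointwise sample-complexity conditions from Theorem~\ref{FIXED_THM_Identification} and Lemma~\ref{STABLE_LEM_PE_Pontwise} (namely $k \ge k_i(\cdot,\delta/k^2)$ for $i=1,2,3,4$ in~\eqref{FIXED_EQN_Index_Functions}--\eqref{STABLE_EQN_Index_Functions}) plus $k \ge 2\tau + p$ all hold. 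Because $\rho(A) < 1$ implies $\norm{\Gamma_{k-\tau}}_2 = O(1)$ and $\norm{A^\tau}_2 = O(1)$, every such right-hand side grows polynomially in $\log k$, $\beta$, $n$, $\log 1/\delta$, $\log 1/\rho(A)$, so the technical inequalities of Section~\ref{APP_Section_Log} yield $N_0 = \mathrm{poly}(n,\beta,\log 1/\rho(A),\log 1/\delta)$. A final union bound over $k \in [T, 2T-1]$ of the five primary events underlying $\mathcal{E}_{\mathrm{unif}}$ together with the three underlying the stable pointwise PE lemma produces the claimed $1 - 8\sum_{k=T}^{2T-1} k^{-2}\delta$ probability.

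The main obstacle is bookkeeping: ensuring that the cross-term argument converting state PE and noise PE into output PE is executed on a single high-probability event shared with the upper bound and martingale bound, and verifying that the $\tau$ needed to drive $\Gamma_\tau$ close to $\Gamma_\infty$ can be chosen once, independent of $k$, so that $N_0$ ends up logarithmic in $1/\rho(A)$ rather than polynomial in the stability gap $1/(1-\rho(A))$. The latter point is exactly what distinguishes the stable case from the non-explosive one and is what makes the final regret bound in Theorem~\ref{STABLE_THM_Main_Bound} independent of the stability gap.
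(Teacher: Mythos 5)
Your proposal is correct and follows essentially the same route as the paper's proof: pointwise application of Theorem~\ref{FIXED_THM_Identification} at confidence $\delta/k^2$ for $\mathcal{E}_{\mathrm{unif}}$ and the output-PE decomposition, combined with Lemma~\ref{SYS_LEM_Covariance} to fix a $k$-independent $\tau$ with $\Gamma_\tau \succeq \tfrac{1}{2}\Gamma_\infty$ and Lemma~\ref{STABLE_LEM_PE_Pontwise} for the state PE, then a union bound giving the $5+3=8$ factor. The only cosmetic discrepancy is the index offset in the state PE bound (the paper has $\tfrac{k-p-\tau+1}{4}\Gamma_\tau$ since $\bar{X}_k$ runs up to $\hat{x}_{k-p}$), which you correctly absorb via the $k \ge 2\tau + p$ condition.
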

\begin{proof}
	Pick $\tau$ such that $\Gamma_{\tau}\succeq \frac{1}{2}\Gamma_{\infty}$. By Lemma~\ref{SYS_LEM_Covariance}, 
	\[\tau=\tilde{O}(\frac{1}{\log 1/\rho(A)}\max\set{\log\mathrm{cond}(\Gamma_{\infty}),\kappa_{\max}}).\]
	
	Next, consider also definitions of $k_1(p,\delta)$, $k_2(k,p,\delta)$ in~\eqref{FIXED_EQN_Index_Functions}, and the definitions of $k_3(\tau,\delta)$, $k_4(k,\tau,\delta)$ in~\eqref{STABLE_EQN_Index_Functions}.  Define:
	\begin{equation}\label{STABLE_UniformPE_EQN_N0}
	N_0\triangleq \min\set{t:\: \begin{aligned} k\ge k_1(\beta\log k,\delta/k^2)&, k\ge k_2(k,\beta \log k, \delta/k^2)\\k-\beta\log k \ge k_3(\tau,\delta/k^2)&, k-\beta\log k\ge k_4(k,\tau, \delta/k^2)\end{aligned},\text{ for all }k\ge t}.
	\end{equation}
Similar to the proof of Lemma~\ref{UniformPE_LEM}, by the technical Lemmas~\ref{LogIn_Lem_Log},~\ref{LogIn_Lem_LogSq} it follows that $N_0$ depends polynomially on the arguments $\beta,n,m,\log 1/\delta,\tau$.

	By the definition of $N_0$, if $k\ge N_0,T$, then:
	\[
	k\ge \max\set{k_{1}(\beta\log k,\delta/k^2),k_2(k,\beta \log k,\delta/k^2)}\ge\max\set{k_{1}(p,\delta/k^2),k_2(k,p,\delta/k^2)}.
	\]	
	Moreover,
	\[
		k-p\ge \max\set{k_{3}(\tau,\delta/k^2),k_4(k,\tau,\delta/k^2)}.
	\]
	
	Now, fix a $k$ such that $T\le k\le 2T-1$. By Theorem~\ref{FIXED_THM_Identification}, with probability at least $1-5\delta/k^2$ we have:
	\begin{align*}
	\sum_{j=p}^{k}Z_{j}Z^*_{j}&\preceq  (k-p+1)\frac{ k^2 mp}{\delta}\Gamma_{Z,k}\\
	\norm{S_k\bar{V}_k^{-1/2}}_2&\le g_1(k,p,\delta/k^2)\\
	\sum_{j=p}^{k}Z_{j}Z^*_{j}&\succeq \frac{1}{2}\O_p\bar{X}_k\bar{X}_k\O^*_p+\frac{k-p+1}{4}\Sigma_E,\text{ if }k\ge N_0
	\end{align*}
	Meanwhile by Lemma~\ref{STABLE_LEM_PE_Pontwise}, with probability at least $1-3\delta/k^2$:
	\[
	\bar{X}_k\bar{X}_k\succeq \frac{k-p-\tau+1}{4}\Gamma_{\tau}\succeq \frac{k-p-\tau+1}{8}\Gamma_{\infty}\succeq \frac{k-p+1}{16}\Gamma_{\infty},\text{ if }k\ge N_0
	\]
	where in the last inequality we used the fact that $k-p\ge k_3\ge 2\tau.$
	Combining both we have:
	\[
	\sum_{j=p}^{k}Z_{j}Z^*_{j}\succeq \frac{k-p+1}{32}\paren{\O_p\Gamma_{\infty}\O^*_p+\Sigma_E}\succeq\frac{k-p+1}{32}\Gamma_{Z,k+1},\text{ if }k\ge N_0
	\]
		
	The uniform result follows by a union bound over all $T\le k\le 2T-1$.
\end{proof}

\subsection{Proof of Theorem~\ref{STABLE_THM_Main_Bound}}
Similar to the non-explosive case, we analyze the square loss for a single epoch. 
\begin{theorem}[Square loss within epoch]\label{STABLE_EPOCH_THM_main}
	Consider the conditions of Theorem~\ref{OUT_THM_Main_Bound}. Fix two failure probabilities $\delta,\delta_1>0$  and consider $N_0$ defined as in~\eqref{STABLE_UniformPE_EQN_N0} based on $\delta$. Let $T=2^{i-1}T_{\text{init}}$ for some fixed epoch $i\ge 1$ with $p=\beta \log T$ the corresponding past horizon. Then, with probability at least $1-8\sum_{k=T}^{2T-1}\frac{1}{k^2} \delta-2\delta_1$:
	\begin{equation}\label{STABLE_EPOCH_EQN_Loss_noPE}
	\LL^{2T-1}_T\le \mathrm{poly}(n,\beta,\log 1/\delta,\log 1/\delta_1)\paren{\tilde{O}(T)+\tilde{O}(\rho(A-KC)^{2p}T^2)}.
	\end{equation}
	If moreover $T\ge N_0$ then also:
	\begin{equation}\label{STABLE_EPOCH_EQN_Loss_PE}
	\LL^{2T-1}_T\le \mathrm{poly}(n,\beta,\log 1/\delta,\log 1/\delta_1)\paren{\tilde{O}(1)+\tilde{O}(\rho(A-KC)^{2p}T)}.
	\end{equation}
\end{theorem}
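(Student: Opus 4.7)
\textbf{Proof proposal for Theorem~\ref{STABLE_EPOCH_THM_main}.} The plan is to mirror the structure of the non-explosive proof (Theorem~\ref{EPOCH_THM_main}), but replace the pieces that created the polynomial-in-$T$ factors (the ARMA-like representation and the crude covariance bound $\norm{\Gamma_k}_2 = O(T^{2\kappa-1})$) with the stable-case ingredients (Lemma~\ref{STABLE_UniformPE_LEM} and the fact that $\norm{\Gamma_k}_2, \norm{\O_p}_2, \norm{\T_p}_2, \norm{A^i}_2$ are all $O(1)$). Concretely, start from the same error decomposition~\eqref{OUT_EQN_error_expression} and apply Cauchy--Schwarz as in~\eqref{EPOCH_EQN_Square_Loss_Terms}, so that
\[
\LL_T^{2T-1} \le 4 \sup_{T\le t\le 2T-1}\!\Bigl(\snorm{S_{t-1}\bar V^{-1/2}_{t-1}}_2^2 + \lambda^2\snorm{G\bar V^{-1/2}_{t-1}}_2^2 + \snorm{C(A-KC)^p}_2^2\snorm{\bar X_{t-1}}_2^2\Bigr)\sum_{k=T}^{2T-1}\snorm{\bar V^{-1/2}_{k-1} Z_k}_2^2 + 4\snorm{C(A-KC)^p}_2^2 \sum_{k=T}^{2T-1}\snorm{\hat x_{k-p}}_2^2.
\]
Condition on the intersection of the uniform event $\mathcal{E}_{\mathrm{unif}}\cap\mathcal{E}^{\mathrm{PE}}_{\mathrm{st,unif}}$ from Lemma~\ref{STABLE_UniformPE_LEM} with the Gaussian-supremum events $\mathcal{E}_x$ and $\mathcal{E}_e$ introduced in the proof of Theorem~\ref{EPOCH_THM_main}. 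A union bound yields the claimed probability $1-8\sum_{k=T}^{2T-1}\tfrac{1}{k^2}\delta - 2\delta_1$.

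Next I would bound each piece. The supremum of $\snorm{S_{t-1}\bar V^{-1/2}_{t-1}}_2^2$ is $\mathrm{poly}(n,\beta,\log 1/\delta)\tilde O(1)$ directly from the $g_1$ bound inside $\mathcal{E}_{\mathrm{unif}}$; the regularization term is $\le\lambda\snorm{G}^2$; and since $\rho(A)<1$ gives $\norm{\Gamma_{k}}_2=O(1)$ and $\norm{\O_p}_2=O(1)$, the state-supremum event $\mathcal{E}_x$ gives $\snorm{\bar X_{t-1}}_2^2 \le T\cdot\mathrm{poly}(n,\log 1/\delta_1)\tilde O(1)$ and $\sum_{k=T}^{2T-1}\snorm{\hat x_{k-p}}_2^2 \le T\cdot\mathrm{poly}(n,\log 1/\delta_1)\tilde O(1)$. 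Crucially, the $T^{2\kappa-1}$ blowup that appeared in the non-explosive bound is replaced by a constant here. So the ``truncation $\hat x_{k-p}$'' contribution is $\tilde O(\rho(A-KC)^{2p} T)$, matching the target.

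The main (and only genuinely new) step is bounding $\sum_{k=T}^{2T-1}\snorm{\bar V^{-1/2}_{k-1} Z_k}_2^2$ without invoking the ARMA representation of Lemma~\ref{OUT_LEM_ARMA_representation}. When $T\ge N_0$, the stable uniform PE bound $\bar V_{k-1} \succeq \tfrac{k-p}{32}\Gamma_{Z,k}$ from $\mathcal{E}^{\mathrm{PE}}_{\mathrm{st,unif}}$ gives
\[
\snorm{\bar V^{-1/2}_{k-1} Z_k}_2^2 \le \frac{32}{k-p}\snorm{\Gamma_{Z,k}^{-1/2}Z_k}_2^2,
\]
and Lemma~\ref{SUP_LEM_Supremum} (with $\mu=0$, using that $\Gamma_{Z,k}\succeq \Sigma_E\succ 0$) yields $\sup_{T\le k\le 2T-1}\snorm{\Gamma_{Z,k}^{-1/2}Z_k}_2^2 = \tilde O(mp+\log 1/\delta_1)$, so the sum telescopes via $\sum_{k=T}^{2T-1}\tfrac{1}{k-p} = O(1)$ to $\tilde O(1)$. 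When $T<N_0$ the PE is unavailable, so I fall back on the trivial regularization bound $\snorm{\bar V^{-1/2}_{k-1} Z_k}_2^2\le \tfrac{1}{\lambda}\snorm{Z_k}_2^2$; combined with $\sup_k \snorm{Z_k}_2^2\le\snorm{\Gamma_{Z,k}}_2\tilde O(1)=\tilde O(1)$ from the same Gaussian supremum lemma (noting $\norm{\Gamma_{Z,k}}_2=O(1)$ in the stable regime), this gives a sum of $\tilde O(T)$. Assembling the pieces produces the regression/regularization contribution of $\tilde O(1)$ (resp.\ $\tilde O(T)$) and a truncation-bias product contribution of $\rho(A-KC)^{2p}\cdot T\cdot\tilde O(1)$ (resp.\ $\rho(A-KC)^{2p}\cdot T\cdot \tilde O(T)$), matching~\eqref{STABLE_EPOCH_EQN_Loss_PE} and~\eqref{STABLE_EPOCH_EQN_Loss_noPE}.

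The hard part is establishing the ``no-ARMA'' self-normalization: in the non-explosive analysis we needed Lemma~\ref{OUT_LEM_ARMA_representation} to control $\bar V^{-1/2}_{k-1}Z_k$ because $Z_k$ is not in $\bar V_{k-1}$ and can be unboundedly large; here the stability of $A$ lets me replace that trick by the deterministic covariance bound from Lemma~\ref{SYS_LEM_Covariance} and the pointwise-to-uniform PE argument of Lemma~\ref{STABLE_UniformPE_LEM}. Verifying that the burn-in $N_0$ from that lemma is consistent with the polynomial dependence $\mathrm{poly}(n,\beta,\log 1/\rho(A),\log 1/\delta)$ (rather than picking up extra $\kappa_{\max}$ factors) is the bookkeeping step that most demands care, and it is handled by the logarithmic-inversion lemmas in Section~\ref{APP_Section_Log}.
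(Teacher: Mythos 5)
Your proposal is correct and follows essentially the same route as the paper's proof: the same decomposition, the same uniform events from Lemma~\ref{STABLE_UniformPE_LEM}, the replacement of the ARMA trick by the bound $\snorm{\bar V^{-1/2}_{k-1}Z_k}_2^2\le \tfrac{32}{k-p}\snorm{\Gamma_{Z,k}^{-1/2}Z_k}_2^2$ under stable PE, and the same $T<N_0$ fallback. The only cosmetic discrepancy is that you name the second Gaussian-supremum event $\mathcal{E}_e$ while actually using the supremum of $\snorm{\Gamma_{Z,k}^{-1/2}Z_k}_2$ (the paper's $\mathcal{E}_z$), which does not affect the argument or the $2\delta_1$ probability budget.
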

\begin{proof}
\textbf{Uniform events occur with high probability. } Consider the primary events $\mathcal{E}_{\mathrm{unif}}$ and $\mathcal{E}^{\mathrm{PE}}_{\mathrm{st,unif}}$ defined in~\eqref{STABLE_UniformPE_EQN_Bounds},~\eqref{STABLE_UniformPE_EQN_PE} and:
\begin{align}
\mathcal{E}_x&=\set{\sup_{k\le 2T-1}\norm{\Gamma^{-1/2}_{k}\hat{x}_k}^2_2\le \paren{\sqrt{n}+\sqrt{2\log \frac{4T}{\delta_1}}}^2}\\
\mathcal{E}_z&=\set{\sup_{k\le 2T-1}\norm{\Gamma^{-1/2}_{Z,k}Z_k}^2_2\le \paren{\sqrt{pm}+\sqrt{2\log\frac{2T}{\delta_1}}}^2}
\end{align}
Based on Lemma~\ref{STABLE_UniformPE_LEM}, Lemma~\ref{SUP_LEM_Supremum}, and a union bound the all events $\mathcal{E}_{\mathrm{unif}}\cap \mathcal{E}^{\mathrm{PE}}_{\mathrm{st,unif}}\cap\mathcal{E}_x\cap\mathcal{E}_z$ occur with probability at least $1-8\sum_{k=T}^{2T-1}\frac{1}{k^2}\delta-\delta_1$. Now we proceed as in the proof of Theorem~\ref{EPOCH_THM_main}. We bound the square loss based on the above events.

\noindent\textbf{Bound on $\norm{S_{k-1}\bar{V}^{-1/2}_{k-1}}^2_2$. }
	From the definition of $\mathcal{E}_{\mathrm{unif}}:$
	\[
	\sup_{T\le k\le 2T-1}	\norm{S_{k-1}\bar{V}^{-1/2}_{k-1}}^2_2\le g^2(k,p,\delta/k^2)=\mathrm{poly}(n,\beta,\log 1/\delta)\tilde{O}(1)
	\]

\noindent	\textbf{Bound on $\norm{\lambda G\bar{V}^{-1/2}_t}^2_2$.}	
	We simply have: $\norm{\lambda G\bar{V}^{-1/2}_t}^2_2\le \lambda \norm{G}^2_2$
	
\noindent\textbf{Bound on $\norm{C(A-KC)^p}^2_2\norm{\bar{X}_{t-1}}^2_2$.}	
	Notice that 
	\[\norm{\bar{X}_{t-1}}^2_2\le 2T\norm{\Gamma_{2T-1}}_2\sup_{k\le 2T-1}\norm{\Gamma^{-1/2}_{k}\hat{x}_k}^2_2.\]
	Based on $\mathcal{E}_x:$
	\[
	\norm{C(A-KC)^p}^2_2\norm{\bar{X}_{t-1}}^2_2\le \mathrm{poly}(n,\log 1/\delta_1)\tilde{O}(\rho(A-KC)^{2p}T)
	\]
	
\noindent\textbf{Bound on $\norm{C(A-KC)^p}^2_{2}\sum_{k=T}^{2T-1} \norm{\hat{x}_{k-p}}^2_2$.}
It is similar to the previous step:
	\[
	\sum_{k=T}^{2T-1} \norm{\hat{x}_{k-p}}^2_2\le T \norm{\Gamma_{2T-1}}_2\sup_{k\le 2T-1}\norm{\Gamma^{-1/2}_{k}\hat{x}_k}^2_2
	\]

\noindent\textbf{Bound on the sum of $\norm{\bar{V}^{-1/2}_{k-1} Z_k}^2_2$.}
We have:
\begin{align*}
\sum_{k=T}^{2T-1}\norm{\bar{V}^{-1/2}_{k-1}Z_k}^2_2&\le \paren{\sum_{k=T}^{2T-1}\norm{\bar{V}^{-1/2}_{k-1}\Gamma^{1/2}_{Z,k}}^2_2}\sup_{k\le 2T-1}\norm{\Gamma^{-1/2}_{Z,k}Z_k}^2_2
\end{align*}
There are two cases:
\begin{align*}
\sum_{k=T}^{2T-1}\norm{\bar{V}^{-1/2}_{k-1}\Gamma^{1/2}_{Z,k}}^2_2&\le 32 \frac{2T-p-1}{T-p-1},\text{ if }T\ge N_0\\
\sum_{k=T}^{2T-1}\norm{\bar{V}^{-1/2}_{k-1}\Gamma^{1/2}_{Z,k}}^2_2&\le \frac{T}{\lambda} \norm{\Gamma_{Z,2T-1}}_2,\text{ if }T< N_0.
\end{align*}
Meanwhile, we upper-bound $\norm{\Gamma^{-1/2}_{Z,k}Z_k}^2_2$ based on $\mathcal{E}_z$.

\noindent\textbf{Final bound.} It follows from~\eqref{EPOCH_EQN_Square_Loss_Terms} and the above bounds.
\end{proof}

To prove Theorem~\ref{STABLE_THM_Main_Bound}, we now follow the same steps as in the proof of Theorem~\ref{OUT_THM_Main_Bound}, which are omitted here. It is sufficient to select
\begin{equation}\label{STABLE_EQN_beta_choice}
\beta\ge \frac{1}{\log 1/\rho(A-KC)}
\end{equation}
The final result is the following: with probability at least $1-(8\frac{\pi^2}{6}+3)\delta$:
\[
\Reg_N=\tilde{O}(T_{\text{init}})+\mathrm{poly}(n,\beta,\log 1/\delta)\paren{\tilde{O}(N_0)+\tilde{O}(1)}
\]
\hfill $\square$
\section{Alternative regret definition}\label{APP_Section_ALT}
In this section, we sketch how the online learning definition~\eqref{EXT_Online_Regret} of regret, i.e. the best linear predictor if we knew all $N$ data beforehand,  is equivalent to our definition~\eqref{FOR_EQN_Regret_Output}.

\begin{lemma}
  Consider system~\eqref{FOR_EQN_System_Innovation} with $\rho(A)\leq 1$. 
Let $y_0,\dots,y_N$ be sequence of system observations  with $\hat{y}_0,\dots,\hat{y}_N$ being the respective Kalman filter predictions. Fix a failure probability $\delta>0$. There exists a $N_0=\mathrm{poly}\paren{\log1/\delta}$ such that with probability at least $1-\delta$, if $N> N_0$ then:
	\begin{equation}\label{ALT_Result}
\sum_{k=0}^{N}\norm{y_k-\hat{y}_k}^2_2-\inf_{\mathcal{G}}\sum^{N}_{k=0}\norm{y_k-\sum^{k}_{t=1}g_{t}y_{k-t}}^2\le \mathrm{poly}(\log 1/\delta)\tilde{O}(1).
	\end{equation}
\end{lemma}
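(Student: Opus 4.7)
The steady-state Kalman filter with $\hat{x}_0=0$ admits the closed-form expansion $\hat{y}_k = \sum_{t=1}^{k} g^*_t y_{k-t}$, where $g^*_t \triangleq C(A-KC)^{t-1}K$. By Assumption~\ref{ASS_Bounded_Responses} we have $\snorm{g^*_t}_2 \le M\snorm{C}_2\snorm{K}_2\, \rho(A-KC)^{t-1}$, so $g^* \in \mathcal{G}_{\rho,L}$ for the constants fixed in the definition of the class. Consequently the left-hand side of~\eqref{ALT_Result} is nonnegative, and we must upper-bound the amount by which the best $g \in \mathcal{G}_{\rho,L}$ in hindsight can outperform the Kalman filter on the realized trajectory.

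First I would truncate any $g \in \mathcal{G}_{\rho,L}$ to its first $p = \beta\log N$ coefficients. Since $\snorm{g_t}_2$ decays geometrically and $\sup_{k \le N}\snorm{y_k}_2 = \tilde{O}(N^\kappa)$ with probability at least $1-\delta/3$ (by Gaussian concentration and the polynomial bounds of Section~\ref{APP_Section_Lin_Sys}), choosing $\beta$ as in Theorem~\ref{OUT_THM_Main_Bound} makes the truncation error in the cumulative loss $\tilde{O}(1)$. What remains is the finite-dimensional problem of comparing $\sum_k \snorm{y_k-\hat{y}_k}^2_2$ against $\inf_\Theta \sum_k \snorm{y_k - \Theta Z_{k,p}}^2_2$, with $\Theta = [g_1,\ldots,g_p]$ ranging over the image of the truncation map.

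Letting $\Theta^*=[g^*_1,\ldots,g^*_p]$, the identity $y_k - \Theta^* Z_{k,p} = e_k + C(A-KC)^p \hat{x}_{k-p}$ together with a direct completion of the square yields
\[
\sum_{k=p}^N \snorm{y_k - \Theta^* Z_{k,p}}^2_2 \;-\; \inf_{\Theta} \sum_{k=p}^N \snorm{y_k - \Theta Z_{k,p}}^2_2 \;=\; \snorm{U_N \bar{V}_N^{-1/2}}^2_F,
\]
where $U_N = S_N + B_N$ with $S_N = \sum_k e_k Z_{k,p}^*$ the innovation--past cross-correlation and $B_N = C(A-KC)^p \sum_k \hat{x}_{k-p} Z_{k,p}^*$ the truncation bias. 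By the self-normalized martingale bound of Theorem~\ref{MART_THM_Vector}, $\snorm{S_N \bar{V}_N^{-1/2}}^2_F = \tilde{O}(pm\log N) = \tilde{O}(1)$. For the bias, the projection inequality $XZ^*(ZZ^*)^{-1}ZX^* \preceq XX^*$ combined with $\snorm{C(A-KC)^p}_2 \le M\snorm{C}_2 \rho(A-KC)^p$ and $\sum_k\snorm{\hat{x}_{k-p}}^2_2 = \tilde{O}(N^{2\kappa})$ gives $\snorm{B_N \bar{V}_N^{-1/2}}^2_F = \tilde{O}(\rho(A-KC)^{2p} N^{2\kappa}) = \tilde{O}(1)$ under $\beta = \Omega(\kappa/\log(1/\rho(A-KC)))$. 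Bridging between the truncated Kalman loss $\sum \snorm{y_k - \Theta^* Z_{k,p}}^2_2$ and the actual Kalman loss $\sum \snorm{y_k - \hat{y}_k}^2_2$ involves the same bias term and is again $\tilde{O}(1)$.

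The main obstacle is handling the constraint $g \in \mathcal{G}_{\rho,L}$ when we replace the offline infimum with the unconstrained least-squares optimum: a priori the constrained minimum could be strictly larger than the unconstrained one, invalidating the completion-of-the-square identity. I would resolve this by invoking Lemma~\ref{Lemma_PE} (uniform persistency of excitation) to show that for $N \ge N_0 = \mathrm{poly}(\log 1/\delta)$ the unconstrained OLS minimizer $\hat\Theta = \Theta^* + U_N \bar{V}_N^{-1}$ concentrates sharply around $\Theta^*$, whose decay profile lies strictly inside $\mathcal{G}_{\rho,L}$; hence the constraint is inactive with high probability and the constrained and unconstrained minima coincide. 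Combining this with the truncation bound and taking a final union bound over the two high-probability events delivers the claimed $\tilde{O}(1)$ upper bound on~\eqref{ALT_Result}.
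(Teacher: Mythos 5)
Your core argument is the same as the paper's: truncate the predictor to its first $p=O(\log N)$ coefficients (absorbing the tail via the geometric decay of $g_t$ against the at-most-polynomial growth of $y_k$), reduce to an unconstrained least-squares comparison, bound the innovation cross-term $\snorm{S_N\bar V_N^{-1/2}}_F^2$ by the self-normalized martingale bound of Theorem~\ref{MART_THM_Vector}, kill the truncation bias using $\rho(A-KC)^{2p}N^{2\kappa}=\tilde O(1)$, and invoke persistency of excitation for the burn-in $N_0$. The one place you depart from the paper is the final step, and it is a detour you should delete rather than a gap you need to fill: since $\mathcal{G}_{\rho,L}$ is a subset of all coefficient sequences, restricting the infimum can only \emph{increase} it, and the lemma only requires a \emph{lower} bound on $\inf_{g\in\mathcal{G}_{\rho,L}}\sum_k\snorm{y_k-\bar y_k^g}^2$; so bounding the unconstrained truncated minimum from below (which your completion-of-squares identity already does) suffices, exactly as in the paper. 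Your proposed fix --- showing the OLS minimizer lands inside $\mathcal{G}_{\rho,L}$ so that the constrained and unconstrained minima coincide --- is not only unnecessary but would likely fail as stated: the per-block estimation error of $\hat\Theta$ does not decay in $t$, whereas the slack $L\rho^t-\snorm{g_t^*}_2$ at $t\approx\beta\log N$ is of order $N^{-\beta\log(1/\rho)}$, which can be far smaller than the $\tilde O(1/\sqrt{N})$ estimation error. Drop that step and your proof coincides with the paper's.
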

\begin{proof}
We only sketch the proof here.
We have
\begin{align}
&\inf_{\mathcal{G}}\sum^{N}_{k=0}\norm{y_k-\sum^{k}_{t=1}g_{t}y_{k-t}}^2\\
&=\inf_{\mathcal{G}}\sum_{k=0}^{N} \paren{\norm{y_k-\sum_{t=1}^{p}g_ty_{k-t}}^2+\norm{\sum_{t=p+1}^{k}g_ty_{k-t}}^2-2(y_k-\sum_{t=1}^{p}g_ty_{k-t})^*\sum_{t=p+1}^{k}g_ty_{k-t}}.
\end{align}
If we bound the magnitudes of $y_{k}$ based on Lemma~\eqref{SUP_LEM_Supremum}, then with probability at least $1-\delta$
\begin{align}
\inf_{\mathcal{G}}\sum^{N}_{k=0}\norm{y_k-\sum^{k}_{t=1}g_{t}y_{k-t}}^2&\ge \min_{g_1,\dots,g_p}\sum_{k=0}^{N} \paren{\norm{y_k-\sum_{t=1}^{p}g_ty_{k-t}}^2}-\tilde{O}(\rho^{p}\mathrm{poly}(N,\log 1/\delta)),
\end{align}
where the minimum is over all possible values for $g_i$.
Let 
\[
Z_k=\matr{{ccc}y^*_{k-p}&\cdots&y^*_{k-1}}, \text{ for }k=0,\dots, N
\]
with $y_{t}=0$ if $t<0$.
Define 
\begin{align*}
E^{+}&\triangleq \matr{{ccc}e_0&\cdots&e_{N}},\\
Y^{+}&\triangleq \matr{{ccc}y_0&\cdots&y_{N}},\\
Z&\triangleq \matr{{ccc}Z_0&\cdots&Z_{N}},\\
\end{align*}

Let $\tilde{G}=\matr{{ccc}\tilde{g}_1&\cdots&\tilde{g}_p}$ be the solution of $\min_{g_1,\dots,g_p}\sum_{k=0}^{N} \paren{\norm{y_k-\sum_{t=1}^{p}g_ty_{k-t}}^2}$. We can show that:
\[
\tilde{G}=YZ^*(ZZ^*)^{-1}.
\]
Hence,
\begin{equation}\label{ALT_EQN_AUX_1}
\min_{g_1,\dots,g_p}\sum_{k=0}^{N} \paren{\norm{y_k-\sum_{t=1}^{p}g_ty_{k-t}}^2}=\norm{Y-\tilde{G}Z}^2_F=\norm{Y-YZ^*(ZZ^*)^{-1}Z}^2_F,
\end{equation}
where $\norm{\cdot}_F$ denotes the Frobenius norm.
But we have:
\[
Y=GZ+E^++O(\rho^p\mathrm{poly}(N))
\]
Replacing $Y$ in~\eqref{ALT_EQN_AUX_1}, we obtain: 
\[
\min_{g_1,\dots,g_p}\sum_{k=0}^{N} \paren{\norm{y_k-\sum_{t=1}^{p}g_ty_{k-t}}^2}=\norm{E}^2_F-\norm{EZ^*(ZZ^*)^{-1/2}}^2_F-\tilde{O}(\rho^{p}\mathrm{poly}(N,\log 1/\delta))
\]

But the term $\norm{E}^2_F=\sum_{k=0}^{N}\norm{y_k-\hat{y}_k}^2_2$ is the Kalman filter prediction error. Using Theorem~\ref{FIXED_THM_Identification}, we can show that there exists a $N_0$ such that $ZZ^*\succeq (k-p+1)\sigma_R I$ with high probability if $N\ge N_0$. Meanwhile, using Theorem~\ref{MART_THM_Vector} combined with persistency of excitation, we can show that with probability at least $1-\delta$:
\[
\norm{EZ^*(ZZ^*)^{-1/2}}^2_F=\mathrm{poly}(\log 1/\delta)\tilde{O}(1)
\]

Combining all results:
\[
\inf_{\mathcal{G}}\sum^{N}_{k=0}\norm{y_k-\sum^{k}_{t=1}g_{t}y_{k-t}}^2\ge \sum_{k=0}^{N}\norm{y_k-\hat{y}_k}^2_2-\mathrm{poly}(\log 1/\delta)\tilde{O}(\rho^{p}\mathrm{poly}(N))
\]
Choosing $p=c\log N$ for sufficiently large $c$ gives the result.
\end{proof}
\section{Technical lemmas}\label{APP_Section_Log}
\begin{lemma}\label{LogIn_Lem_Log}
	Let $c>0$ be a positive constant. Consider the inequality:
	\[
	k\ge c \log k
	\] 
	Then, a sufficient condition for the above inequality to hold is: 
	\[
	k\ge \max\set{2c\log 2c,1}
	\]
\end{lemma}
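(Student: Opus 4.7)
The plan is to analyze the function $g(k) \triangleq k - c\log k$ and show that $g(k) \geq 0$ whenever $k \geq \max\{2c\log 2c, 1\}$.

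First I would note the elementary monotonicity fact: $g'(k) = 1 - c/k$, so $g$ is nondecreasing on $[c, \infty)$. Hence it suffices to verify (i) that the threshold $k_\star \triangleq \max\{2c\log 2c, 1\}$ satisfies $k_\star \geq c$, and (ii) that $g(k_\star) \geq 0$. Monotonicity then extends the inequality to all $k \geq k_\star$.

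Next I would split into two cases depending on the size of $c$. In the small-$c$ regime where $2c\log 2c \leq 1$ (which in particular covers $c \leq 1/2$, where $\log 2c \leq 0$), we have $k_\star = 1$; then for any $k \geq 1$ the bound $c\log k \leq c(k-1) \leq k$ follows from $\log k \leq k-1$, provided $c$ is small enough, and direct inspection handles the remaining subcases by comparing $c$ to $k/\log k$ on $[1,\infty)$. In the main regime $2c\log 2c \geq 1$, we have $k_\star = 2c\log 2c$, and the key computation is
\[
g(2c\log 2c) = 2c\log 2c - c\log(2c\log 2c) = c\log 2c - c\log\log 2c \geq 0,
\]
where the last inequality uses the elementary bound $x \geq \log x$ applied to $x = \log 2c > 0$. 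One also checks $2c\log 2c \geq c$, i.e.\ $\log 2c \geq 1/2$, which holds in this regime.

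The main obstacle, such as it is, is just bookkeeping across the regime split and making sure the elementary inequality $x \geq \log x$ (applied once to $x = \log 2c$) delivers exactly the reduction from $g(2c\log 2c) \geq 0$ to something obvious. There is no real analytic difficulty; the lemma is purely a log-inequality tightening used elsewhere to show that burn-in times of the form $k \geq c\log k$ can be resolved into explicit polynomial-in-$c$ thresholds, which is why the final bound is phrased as a clean $\max\{2c\log 2c, 1\}$.
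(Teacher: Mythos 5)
Your proof is correct and follows essentially the same route as the paper: monotonicity of $k-c\log k$ on $[c,\infty)$, evaluation at $k=2c\log 2c$, and the reduction to $\log 2c\ge\log\log 2c$ (the paper invokes its sub-lemma $\log\log x\le\tfrac{1}{e}\log x$ where you use the simpler $x\ge\log x$, and places the degenerate-case split at $c\le e$ rather than at $2c\log 2c\le 1$, but these are cosmetic differences). In your small-$c$ branch, just record explicitly that $2c\log 2c\le 1$ forces $c<1$ (since $c\ge 1$ gives $2c\log 2c\ge 2\log 2>1$), after which $c\log k\le\log k\le k-1\le k$ closes that case without any further ``direct inspection.''
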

\begin{proof}
	If $c\le e$, then the inequality is satisfied for all $k>0$. To see why this holds consider $f(k)=k-e\log k$. The minimum is attained at $f(e)=e-e\log e= 0$. Hence, $k\ge e \log k\ge c\log k$. 
	
	Next, we analyze the case $c>e$. 
We have that the function $k-c\log k$ is increasing for $k\ge c$. Moreover, $2c\log 2c \ge c$. As a result if $k\ge 2c\log 2c$ then also:
\[
k-c\log k\ge 2c\log 2c -c\log(2c\log 2c)=c\log 2c-c\log\log 2c\ge c\log 2c-\frac{c}{e}\log 2c\ge 0
\]
where we used Lemma~\ref{LogIn_Lem_LogLog}.
\end{proof}
\begin{lemma}\label{LogIn_Lem_LogSq}
	Let $c$ be a positive constant. Consider the inequality:
	\[
	k\ge c \log^2 k
	\] 
	Then, a sufficient condition for the above inequality to hold is: 
	\[
k\ge \max\set{4c\log^2 4c,4c\log 4c,1}
	\]
\end{lemma}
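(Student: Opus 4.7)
The plan is to mirror the proof strategy of Lemma~\ref{LogIn_Lem_Log}: introduce $f(k) := k - c\log^2 k$, show $f$ is nondecreasing past the threshold, and verify nonnegativity at the threshold itself.

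First, I would differentiate: $f'(k) = 1 - 2c(\log k)/k$, so $f'(k) \ge 0$ iff $k \ge 2c\log k$. Invoking Lemma~\ref{LogIn_Lem_Log} with the constant $2c$ in place of $c$, this monotonicity condition holds on $[\max\{4c\log 4c,\,1\},\infty)$. Since $k_0 := \max\{4c\log^2 4c,\, 4c\log 4c,\, 1\} \ge \max\{4c\log 4c,\,1\}$, $f$ is nondecreasing on $[k_0,\infty)$, and it remains only to check $f(k_0)\ge 0$.

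Next, I would split into three cases based on which term attains the maximum in the definition of $k_0$, writing $L := \log 4c$ for brevity. The case $k_0 = 1$ is immediate since $\log 1 = 0$. For $k_0 = 4c\log 4c$, necessarily $L \in (0, 1]$ (otherwise $4c\log^2 4c$ would dominate) and $4cL \ge 1$; hence $\log k_0 = L + \log L$ lies in $[0, L]$ (the lower bound because $k_0\ge 1$, the upper because $\log L \le 0$ when $L \le 1$), giving $\log^2 k_0 \le L^2 \le L$ and thus $f(k_0) = 4cL - c\log^2 k_0 \ge 3cL \ge 0$. For $k_0 = 4c\log^2 4c$, necessarily $L \ge 1$, so $\log k_0 = L + 2\log L \ge 0$, and the claim reduces to $(L + 2\log L)^2 \le 4L^2$, equivalently $2\log L \le L$.

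The main obstacle --- though still elementary --- is the bound $2\log L \le L$ for all $L > 0$. I would handle this by examining $h(L) := L - 2\log L$: its derivative $1 - 2/L$ vanishes only at $L = 2$, where $h$ attains the global minimum $h(2) = 2 - 2\log 2 > 0$. This finishes the third case, and combined with the monotonicity argument completes the proof.
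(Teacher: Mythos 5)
Your proof is correct and follows the same skeleton as the paper's: establish monotonicity of $k-c\log^2 k$ beyond the threshold by applying Lemma~\ref{LogIn_Lem_Log} with the constant $2c$, then verify nonnegativity at the threshold itself. The differences are in the details. The paper splits on $c\le 1$ versus $c>1$ (handling the first case via the cruder bound $k-\log^2 k\ge 1$ for $k\ge 1$, and in the second case observing that $4c\log^2 4c$ is automatically the dominant term), whereas you split on which term of the maximum is active; both organizations cover all cases, and your treatment of the middle case $k_0=4c\log 4c$ with $\log 4c\in(0,1]$ is a configuration the paper absorbs implicitly into its $c\le1$ branch. More substantively, to control $\log^2 k_0$ at $k_0=4c\log^2 4c$ the paper invokes its Lemma~\ref{LogIn_Lem_LogLog} ($\log\log c\le \frac{1}{e}\log c$) together with the numerical fact that $(1+\cdot)^2\le 4$, while you use the direct bound $2\log L\le L$ for $L=\log 4c$, which yields $(L+2\log L)^2\le(2L)^2$ with no slack to check. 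Your route is slightly cleaner here: $\log(4c\log^2 4c)=\log 4c+2\log\log 4c$ carries a factor of $2$ that the paper's displayed computation drops (its conclusion still holds since $(1+2/e)^2<4$, but your inequality handles that factor exactly). No gaps.
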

\begin{proof}
	If $c\le 1$, then the inequality is satisfied for $k\ge 1$. To see why this holds define $f(k)=k-\log^2{k}$. Its derivative $f'(k)=1-2\frac{\log k}{k}$ is always positive for $k\ge 1$ since from the proof of Lemma~\ref{LogIn_Lem_Log} $k\ge 2\log k$. Hence $f(k)\ge f(1)=1$.
	
	Consider now the case $c>1$ and define $g(k)=k-c\log^2 k$. Its derivative is $g'(k)=1-2c\frac{\log  k}{k}$. From Lemma~\ref{LogIn_Lem_Log} $g'(k)\ge 0$, for $k\ge \max\set{4c\log 4c,1}$.
	Now, pick $k_1=4c\log^2 4c$ and observe that $k_1\ge 4c\log 4c$ since $4c>e$ and $\log 4c>1$. Since $g$ is increasing for $k\ge k_1$, it is sufficient to prove that $g(k_1)>0$.
	We compute:
	\[
	c\log^2(k_1)=c\paren{\log 4c+\log\log 4c}^2\stackrel{(i)}{\le}c\paren{\log 4c+\frac{1}{e}\log 4c}^2\le 4c\log^2 4c=k_1,
	\]
	where $(i)$ follows from Lemma~\ref{LogIn_Lem_LogLog} below.
\end{proof}
\begin{lemma}\label{LogIn_Lem_LogLog}
Let $c\ge e$, then the following inequality holds:
\[
\log\log c \le \frac{1}{e}\log c\]	
\end{lemma}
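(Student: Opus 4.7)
The plan is to reduce the two-variable-looking inequality to the standard one-variable fact that $\log u \le u/e$ for all $u>0$, via the substitution $u = \log c$. Since $c \ge e$ gives $u \ge 1 > 0$, the substitution is valid and the target inequality $\log\log c \le \tfrac{1}{e}\log c$ becomes exactly $\log u \le u/e$.

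To establish $\log u \le u/e$ for $u > 0$, I would define $h(u) \triangleq u/e - \log u$ and differentiate: $h'(u) = 1/e - 1/u$, which is negative for $u < e$, zero at $u = e$, and positive for $u > e$. Hence $h$ attains its global minimum on $(0,\infty)$ at $u = e$, with value $h(e) = 1 - \log e = 0$. Therefore $h(u) \ge 0$ for every $u > 0$, i.e. $\log u \le u/e$.

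Substituting back $u = \log c$ (which is $\ge 1 > 0$ under the hypothesis $c \ge e$) yields $\log\log c \le \tfrac{1}{e}\log c$, completing the proof. There is no real obstacle here; the only thing to watch is to state the substitution cleanly and to verify that the range of $u$ is inside the domain on which the one-variable inequality was proved, both of which are immediate from $c \ge e$.
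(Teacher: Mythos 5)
Your proof is correct and is essentially the paper's argument: the paper minimizes $f(c)=\tfrac{1}{e}\log c-\log\log c$ directly and finds the minimum $0$ at $c=e^e$, which is exactly your computation after the substitution $u=\log c$ (critical point $u=e$). The substitution just repackages the same calculus step in a slightly cleaner one-variable form.
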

\begin{proof}
	Consider function $f(c)=\frac{1}{e}\log c-\log\log c$ and compute the derivative:
	\[
	f'(c)=\frac{1}{ec}-\frac{1}{c\log c}
	\]
	The minimum is attained at $e^e$. Hence
	\[
	f(c)\ge f(e^e)=0
	\]
	for all $c\ge e$.
\end{proof}

\end{document}